\declaretheorem{theorem}
\declaretheorem[sibling=theorem]{proposition}
\declaretheorem[sibling=theorem]{lemma}
\declaretheorem[sibling=theorem]{corollary}
\declaretheorem[style=definition,sibling=theorem,qed=$\diamond$]{definition}
\declaretheorem[style=definition,sibling=theorem,qed=$\diamond$]{example}
\begin{document}
%
\title{Datalog Rewritability of Disjunctive Datalog Programs\\ and its Applications to
Ontology Reasoning} 
\author{Mark Kaminski \and Yavor Nenov \and Bernardo Cuenca Grau\\
Department of Computer Science, University of Oxford, UK
}
\maketitle
\begin{abstract}
We study the problem of rewriting a disjunctive datalog
program into plain datalog. We show 
that a disjunctive program is rewritable if and only if 
it is equivalent to a linear disjunctive program, thus
providing a novel characterisation of datalog rewritability.
Motivated by this result, we propose 
\emph{$\semi$ disjunctive datalog}---a novel rule-based 
KR language that extends both datalog and linear disjunctive datalog 
and for which reasoning is tractable in data complexity.
We then explore applications of $\semi$ programs to ontology reasoning
and propose a tractable extension of OWL 2 RL with disjunctive axioms. 
Our empirical results suggest that
many non-Horn ontologies can be reduced to $\semi$ programs 
and that query answering over such ontologies using
a datalog engine is feasible in practice.
 \end{abstract}

\section{Introduction}
  
Disjunctive datalog, which extends plain datalog
by allowing disjunction in the head of rules,   
is a prominent KR
formalism that has found many applications in the areas of
deductive databases, information integration
and ontological reasoning
\cite{DBLP:journals/tods/EiterGM97,DBLP:journals/csur/DantsinEGV01}.\footnote{
Disjunctive datalog typically allows for  
negation-as-failure, which we don't consider since we
focus on monotonic reasoning.}
Disjunctive datalog is a powerful language, which can 
model incomplete information.
Expressiveness comes, however, at the expense
of computational cost: 
fact entailment
is co-\textsc{NExpTime}-complete in combined complexity and
\coNP-complete w.r.t.\ data \cite{DBLP:journals/tods/EiterGM97}. 
Thus, even
with the development of optimised implementations
\cite{DBLP:journals/tocl/LeonePFEGPS06}, 
robust behaviour of
reasoners in data-intensive applications cannot be guaranteed.
 

Plain datalog 
offers more favourable computational properties  
(\textsc{ExpTime}-completeness in combined complexity and \textsc{PTime}-completeness
w.r.t.\ data) at the expense of a loss in expressive power
\cite{DBLP:journals/csur/DantsinEGV01}. 
Tractability in data complexity is an appealing property
for data-intensive KR; in particular,
the RL profile of the ontology language OWL 2 was 
designed such that
each ontology corresponds to a datalog program \cite{OWL2-profiles}.
Furthermore, datalog programs obtained from RL ontologies
contain rules
of a restricted shape, and
they can be evaluated in polynomial time  also
in \emph{combined complexity}, thus 
providing the ground for 
robust implementations. 
The standardisation
of OWL 2 RL 
has spurred the development
of reasoning
engines within industry and academia,  such as OWLim \cite{bishop2011owlim}, Oracle's Semantic Data Store~\cite{Wu08}, and RDFox~\cite{MotikNPHO14}. 

We study the problem of rewriting a disjunctive
datalog program into an equivalent datalog program (i.e., one that
entails the same facts for every dataset). By computing such rewritings, we
can ensure tractability w.r.t.\ \mbox{data and exploit}
reasoning infrastructure available for datalog.
Not every disjunctive datalog
program is, however, datalog rewritable \cite{Afrati:1995un}. 

Our first contribution 
is a novel characterisation of datalog rewritability
based on \emph{linearity}: a 
restriction that requires each rule to contain 
at most one 
body atom with an IDB predicate (i.e., a predicate occurring in head
position). For plain datalog,
linearity is known to limit the effect of recursion and lead
to reduced data and combined complexity \cite{DBLP:journals/csur/DantsinEGV01}.
For disjunctive programs the effects of the linearity
restriction are, to the best of our knowledge, unknown.
In Section \ref{sec:characterisation},  we show that every linear disjunctive program
can be polynomially transformed into an equivalent datalog
program; conversely, we also provide
a polynomial
transformation from  
datalog  into
linear disjunctive datalog. Thus, linear disjunctive datalog
and datalog have the same computational properties, and 
linearisability of disjunctive programs is equivalent to 
rewritability into
datalog.


Motivated by our characterisation, 
in Section \ref{sec:weak} we propose  \emph{$\semi$ disjunctive
datalog}: a   
rule language  that extends both datalog and
linear disjunctive datalog.
In a $\semi$ (WL for short) program, the linearity requirement is relaxed: 
instead
of applying to all IDB predicates, it applies only
 to those that ``depend'' on a disjunctive rule.
Analogously to linear disjunctive programs,
 WL programs can be polynomially rewritten 
 into datalog. 
Thus, our language  captures
disjunctive information
while leaving the favourable computational properties of datalog intact.

In
Section \ref{sec:procedure}, we propose a 
linearisation procedure 
based on unfolding transformations. Our procedure  picks 
a non-WL rule and a ``culprit'' body atom 
and replaces it with 
WL rules by ``unfolding'' the
selected atom. Our procedure is 
incomplete: 
if it succeeds, it outputs a WL program, which is rewritten into
datalog; if it fails, no conclusion can be drawn.

In Section \ref{sec:DLs}, we focus on ontology reasoning. We propose an extension
of OWL 2 RL with disjunctive axioms such that
each ontology in our extended profile
maps to a WL program. We show that
the resulting  programs can
be evaluated in polynomial time in \emph{combined} complexity; thus,
fact entailment in our
language is no harder than in OWL 2 RL. 
Finally, we argue that the algorithm in \cite{hms07reasoning} can be combined with
our techniques to rewrite a $\SHIQ$ ontology into a plain datalog program.

We have evaluated our techniques on a large
ontology repository.  Our results show that many
non-Horn ontologies can be rewritten into WL programs, and
thus into datalog. We have tested the 
scalability of query answering using our approach, with promising results.
Proofs of our technical results are delegated to the appendix.

\section{Preliminaries}


We use standard first-order syntax and semantics 
and assume all formulae to be function-free.
%
We assume that equality $\equality$ 
is an 
ordinary predicate  and that every set of
set of formulae 
contains the standard 
explicit axiomatisation of $\equality$ as a congruence
relation for its signature. 
%


 A \emph{fact} is a ground atom and a 
\emph{dataset} is a finite set of facts. 
A \emph{rule} $r$ is a sentence of the form
${\forall \vec{x} \forall \vec{z}.[\varphi(\vec{x},\vec{z})
  \rightarrow \psi(\vec{x})]}$, where tuples of variables $\vec{x}$
and $\vec{z}$ are disjoint, $\varphi(\vec{x},\vec{z})$ is a
conjunction of distinct equality-free atoms, and $\psi(\vec{x})$ is a
disjunction of distinct atoms. Formula $\varphi$ is the \emph{body} of
$r$, and $\psi$ is the \emph{head}. 
Quantifiers in rules are omitted. We assume that rules are \emph{safe},
i.e., all variables in the head occur in the body.  A rule is
\emph{datalog} if $\psi(\vec{x})$ has at most one atom, and it
is \emph{disjunctive} otherwise.  A \emph{program} $\P$ is a finite set of
rules; it is \emph{datalog} if it consists only of
datalog rules, and \emph{disjunctive} otherwise. 
We assume that rules in $\P$ do not share variables.

For convenience, we treat $\top$ and $\bot$ in a non-standard way
as a unary and a nullary predicate, respectively.
Given a program $\P$, $\P_{\top}$ is the program with a rule
$Q(x_1, \ldots, x_n) \rightarrow \top(x_i)$ for each predicate $Q$ in
$\P$ and each $1 \leq i \leq n$, and a rule $\,\to\top(a)$ for each
constant $a$ in $\P$. We assume that $\P_\top\subseteq\P$ and
$\top$ does not occur in head position in $\P\setminus\P_\top$.
We define
$\P_{\bot}$ as consisting of a rule with $\bot$ as body and empty head. We assume
${\P_\bot\subseteq\P}$ and no rule in ${\P \setminus \P_{\bot}}$ has an empty
head or $\bot$ in the body.
Thus, $\P\cup\Dat\models\top(a)$ for every $a$ in $\P\cup\Dat$, and
$\P \cup \Dat$ is unsatisfiable iff $\P \cup \Dat \models \bot$.

Head predicates in $\P\setminus\P_\top$ are 
\emph{intensional} (or \emph{IDB}) in $\P$. All other predicates (including $\top$)
are \emph{extensional} (\emph{EDB}). An atom is
intensional (extensional) if so is its predicate.  A 
rule is \emph{linear} if it has at most one IDB body atom. 
A program $\P$ is linear if all its rules are.
In contrast to KR, in 
logic programming it is often assumed that IDB
predicates~do not occur in datasets.
This assumption can be lifted
(see, e.g.,~\cite{BryEEFGLLPW07}): 
for every $\P$ and IDB
predicate $Q$ in $\P$, let  $Q'$ be a fresh predicate;
the \emph{IDB expansion} $\P^{e}$ of $\P$
is obtained from $\P$ by renaming each IDB
predicate $Q$ in $\P$ with $Q'$ and adding a rule $Q(\ve x)\to Q'(\ve
x)$, with $\ve x$ distinct variables. 
Then, 
for each $\Dat$ and each fact $\alpha$ over the signature of $\P$
we have 
$\P \cup \Dat \models \alpha$ iff $\P^{e} \cup \Dat \models \alpha \theta$,
where $\theta$ is the predicate substitution mapping each IDB predicate $Q$ 
to~$Q'$.

The \emph{evaluation}
of $\P$ over a dataset $\Dat$ is the set $\eval\P\Dat$ which comprises
$\bot$ if $\P \cup \Dat$ is unsatisfiable and all facts
entailed by ${\P \cup \Dat}$ otherwise.  For a set of predicates $S$, 
$\eval\P\Dat|_S$ consists of those facts in $\eval\P\Dat$ involving predicates
in $S\cup\{\bot\}$.
Program $\P'$ is a 
\emph{rewriting} of $\P$ w.r.t.\ a set of
predicates $S$ if there is an injective predicate renaming~$\theta$
such that $(\eval{\P}\Dat|_{S})\theta=\eval{\P'}\Dat|_{S\theta}$ for
every dataset $\Dat$ over the signature of $\P$. The program $\P'$ is
a rewriting of $\P$ if $\P'$ is a 
rewriting of $\P$ w.r.t. the set of all predicates in
$\P$. Clearly, $\P^e$ is a rewriting of $\P$.

\section{Characterisation of Datalog Rewritability} \label{sec:characterisation}
 
%

In this section, we establish a strong correspondence between linear
disjunctive datalog and plain datalog. We show that every linear
disjunctive program can be polynomially rewritten into datalog and,
conversely, every datalog program is polynomially rewritable to a
linear disjunctive program.
Consequently, we not only can conclude that fact entailment over
linear programs has exactly the same data and combined complexity as
over plain datalog programs, but also that a disjunctive program is
datalog rewritable if and only if it is linearisable. Thus, datalog
rewritability and linearisability of disjunctive programs are
equivalent problems.


\subsubsection{From Linear Programs to Datalog}\label{sec:semi-to-datalog}

We first show that linear disjunctive programs can be polynomially
rewritten into datalog.
%
%
Let us consider the following program $\DDP_1$, which we want to rewrite
into a datalog program
$\Xi(\DDP_1)$:
  \begin{align}
    \DDP_1=\{\,&V(x)\to B(x)\lor G(x)\label{r:n2c-b-or-g}\\
    &G(y)\land E(x,y)\to B(x)\label{r:n2c-g-impl-b}\\
    &B(y)\land E(x,y)\to G(x)\,\}\label{r:n2c-b-impl-g} 
  \end{align}
 Predicates $V$ and $E$ are EDB, so their extension depends solely on
  $\Dat$. To prove facts about IDB predicates 
  $G$ and $B$ we introduce fresh binary predicates $B^G$, $B^B$, $G^B$, and
  $G^G$. Intuitively, if a fact $B^G(c,d)$ holds in $\Xi(\DDP_1) \cup
  \Dat$ then proving $B(c)$ suffices for proving $G(d)$ in
  $\DDP_1 \cup \Dat$. To ``initialise'' the extension of these fresh
  predicates we need rules 
  $\top(x) \rightarrow X^X(x,x)$ with $X \in \{G,B\}$. The key step 
  is then to ``flip'' the direction of all rules in
  $\DDP_1$ involving $G$ or $B$ by moving all IDB atoms from
  the head to the body and vice-versa while at the same time replacing
  their predicates with the relevant auxiliary predicates. Thus, 
  Rule~\eqref{r:n2c-g-impl-b} leads to the following 
  rules in $\Xi(\DDP_1)$ for each IDB predicate $X$:
\begin{align*}
&B^X(x,z)\land E(x,y)\to G^X(y,z) 
\end{align*}
These rules are natural consequences of Rule~\eqref{r:n2c-g-impl-b}
under the intended meaning of the auxiliary predicates: if we can
prove a goal $X(z)$ by proving first $B(x)$, and $E(x,y)$ holds, then
by Rule~\eqref{r:n2c-g-impl-b} we deduce that proving $G(y)$ suffices
to prove $X(z)$.  In contrast to \eqref{r:n2c-g-impl-b},
Rule~\eqref{r:n2c-b-or-g} contains no IDB body atoms. We ``flip'' this
rule as follows, with $X$ IDB:
\begin{align*}
&V(x)\land B^X(x,z)\land G^X(x,z)\to X(z) 
\end{align*}
Similarly to the previous case, this rule follows from
Rule~\eqref{r:n2c-b-or-g}: if $V(x)$ holds and we can establish
that $X(z)$ can be proved from $B(x)$ and also from $G(x)$, then
$X(z)$ must hold.
Finally, we introduce rules that allow us to derive facts
about the IDB predicates $G$ and $B$ from
facts derived about the auxiliary predicates. For example, the rule
$B(x)\land B^X(x,z)\to X(z)$ states that if $B(x)$ holds and
is sufficient to prove $X(z)$, then $X(z)$ must also hold.

\begin{definition} \label{def:xi} %
  Let $\DDP$ be a linear program and let $\Sigma$ be the set of
  IDB predicates in $\DDP\setminus\DDP_\top$.
  For each $(P,Q)\in\Sigma^2$, let $P^Q$ be a fresh predicate
  unique to $(P,Q)$ where $\arity{P^Q} = \arity{P} +
  \arity{Q}$.  Then $\Xi(\DDP)$ is the datalog program containing the
  rules given next, where $\fml$ is the conjunction of all
  EDB atoms in a rule, $\fml_\top$ is the least conjunction
  of $\top$-atoms needed to make a rule safe,
  all predicates $P_i$ are in $\Sigma$, and
  $\ve y$, $\ve z$ are disjoint vectors of distinct fresh variables:
  \begin{enumerate}
  \item a rule $\fml_\top
    \to R^R(\ve y,\ve y)$ for every $R\in\Sigma$;
  \item a rule
    $\fml_\top\land\fml\land 
    \bigwedge_{i=1}^n P_i^R(\ve s_i,\ve y)\to Q^R(\ve t,\ve y)$ for
    every rule $\fml\land Q(\ve t)\to\bigvee_{i=1}^n P_i(\ve
    s_i)\in\DDP \setminus \DDP_{\top}$ and every $R\in\Sigma$;
  \item a rule $\fml\land\bigwedge_{i=1}^n P_i^R(\ve s_i,\ve y)\to
    R(\ve y)$ for every rule $\fml\to\bigvee_{i=1}^n P_i(\ve
    s_i)\in\DDP \setminus \DDP_{\top}$ and every $R\in\Sigma$;
  \item a\;rule\;$Q(\ve z)\,{\land}\,Q^R(\ve z,\ve y)\:{\to}\,R(\ve y)$\;for\;every\;$(Q,R)\,{\in}\,\Sigma^2$.\!\!\!\!\!\!\!\!\!\!%
    \qedhere
  \end{enumerate}
\end{definition}
This transformation is quadratic and the
arity of predicates is at most doubled. 
For $\DDP_1$, we obtain the following datalog program, where each
 rule mentioning $X$ stands for one rule
 where $X=B$ and one where $X=G$:
  \begin{align}
    \Xi(\DDP_1)=\{\,
    &V(x)\land B^X(x,z)\land G^X(x,z)\to X(z) \tag{\ref{r:n2c-b-or-g}'}\label{r:n2c-rew-bx-gx-impl-x}\\
    &B^X(x,z)\land E(x,y)\to G^X(y,z) \tag{\ref{r:n2c-g-impl-b}'}\label{r:n2c-rew-bx-impl-gx}\\
    &G^X(x,z)\land E(x,y)\to B^X(y,z) \tag{\ref{r:n2c-b-impl-g}'}\label{r:n2c-rew-gx-impl-bx}\\
    &\top(x) \to X^X(x,x)\label{r:n2c-rew-xx}\\
    &B(x)\land B^X(x,z)\to X(z)\label{r:n2c-rew-b-bx-impl-x}\\
    &G(x)\land G^X(x,z)\to X(z)\,\}\label{r:n2c-rew-g-gx-impl-x}
  \end{align}
Correctness of $\Xi$ is established
by the following theorem.
\begin{restatable}{theorem}{rewcorrectlinear} \label{thm:rew-correct}
  If\/ $\DDP$ is linear, then $\Xi(\DDP)$ is a polynomial datalog
  rewriting of\/ $\DDP$.
\end{restatable}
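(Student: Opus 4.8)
The plan is to use the identity predicate renaming for $\theta$ --- this is what the statement needs, since every auxiliary predicate $P^Q$ produced by $\Xi$ is fresh, so that $\eval{\Xi(\DDP)}{\Dat}$ restricted to the predicates of $\DDP$ (and $\bot$) must simply coincide with $\eval{\DDP}{\Dat}$. That $\Xi(\DDP)$ is datalog, of size quadratic in $|\DDP|$ with arities at most doubled, is immediate from Definition~\ref{def:xi}. For EDB predicates other than $\top$, and for $\top$, the two evaluations agree trivially, since such predicates head no rule outside $\DDP_\top$ --- which both programs keep --- and $\Xi$ adds no constants; the equality axioms are built in on both sides and handled routinely. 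The real content is the agreement on facts $R(\ve{d})$ with $R\in\Sigma$, of which unsatisfiability (``$\models\bot$'') is the case $R=\bot$, as $\bot\in\Sigma$ whenever it heads a rule. I would prove this in two halves: soundness (whatever $\Xi(\DDP)\cup\Dat$ entails over $\DDP$'s predicates, $\DDP\cup\Dat$ entails too, and satisfiability is preserved) and completeness (the converse).

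For soundness I would extend an arbitrary model $I$ of $\DDP\cup\Dat$ to a model $I^\star$ of $\Xi(\DDP)\cup\Dat$ that agrees with $I$ on the predicates of $\DDP$, by reading $P^Q$ as material implication relativised to named elements: let $(P^Q)^{I^\star}$ be the set of tuples $(\ve{a},\ve{b})$ over $\top^I$ with $\ve{a}\in P^I$ implying $\ve{b}\in Q^I$. Each of the four rule schemas of Definition~\ref{def:xi} then holds in $I^\star$: the reflexivity schema says that $R(\ve{c})$ implies $R(\ve{c})$; the last schema is modus ponens; and the two ``flipping'' schemas follow from a case split on the head of a source rule $\fml\land\psi\to\bigvee_i P_i(\ve{s}_i)$, where $\psi$ is empty or a single IDB atom $Q(\ve{t})$ --- relative to a goal $R(\ve{y})$, if every disjunct $P_i(\ve{s}_i)$ already suffices for $R(\ve{y})$ then so does $Q(\ve{t})$ (together with $\fml$), and if $\psi$ is empty then $R(\ve{y})$ holds outright. (The instances with $\bot$ in the head hold because their EDB bodies are unsatisfiable in any model of $\DDP$.) Hence $\DDP\cup\Dat$ satisfiable implies $\Xi(\DDP)\cup\Dat$ satisfiable, and a model of $\DDP\cup\Dat$ refuting a fact over $\DDP$'s predicates lifts to a model of $\Xi(\DDP)\cup\Dat$ refuting it; contraposition gives the soundness half.

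For completeness I would argue contrapositively. If $\alpha=R(\ve{d})$ (a fact over $\DDP$'s predicates, possibly $\bot$) is not entailed by $\Xi(\DDP)\cup\Dat$, then $\Xi(\DDP)\cup\Dat$ is satisfiable and, being datalog, has a least model $\mathcal{N}$ with $\alpha\notin\mathcal{N}$; I would build a model $J$ of $\DDP\cup\Dat$ with $\alpha\notin J$. Start from $\Dat$, close under $\DDP_\top$, and, whenever a ground instance $\fml\land\psi\to\bigvee_{i=1}^n P_i(\ve{s}_i)$ of a rule of $\DDP$ has its body satisfied but no disjunct, add some disjunct $P_j(\ve{s}_j)$ with $P_j^{R}(\ve{s}_j,\ve{d})\notin\mathcal{N}$; iterate to a fixpoint $J$. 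The invariant is that every IDB fact $P(\ve{c})$ in the current set has $P^{R}(\ve{c},\ve{d})\notin\mathcal{N}$; it holds initially, since a data fact violating it would put $\alpha$ into $\mathcal{N}$ via the modus-ponens schema. A suitable disjunct always exists: the current set agrees with $\mathcal{N}$ on EDB and $\top$ facts, so if the triggered rule has no IDB body atom then the third schema, instantiated at goal $\ve{d}$, together with $\alpha\notin\mathcal{N}$ forces some $P_j^{R}(\ve{s}_j,\ve{d})\notin\mathcal{N}$; and if $\psi=Q(\ve{t})$ is its unique IDB body atom --- the only other case, by linearity --- then the invariant gives $Q^{R}(\ve{t},\ve{d})\notin\mathcal{N}$, and the second (flipping) schema then forces some $P_j^{R}(\ve{s}_j,\ve{d})\notin\mathcal{N}$. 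Since the reflexivity schema puts $R^{R}(\ve{d},\ve{d})$ into $\mathcal{N}$, the invariant keeps $R(\ve{d})$ out of $J$; and the same computation shows that no rule with $\bot$ in the head is ever triggered, so $J$ is a genuine model, whence $\DDP\cup\Dat\not\models\alpha$. Taking $\alpha=\bot$ gives the satisfiability direction.

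The step I expect to dominate the proof is this completeness construction --- checking that the invariant survives each repair step and that a good disjunct always exists. That is exactly where linearity is indispensable: with at most one IDB body atom per rule, a single extra ``context'' coordinate per predicate (the $\ve{y}$ in $P^R(\ve{s},\ve{y})$) is enough to relay the goal along the IDB part of a derivation, whereas two or more IDB body atoms would break the correspondence. The rest is bookkeeping: the relativisation to $\top$-named elements (in $I^\star$ and via the $\fml_\top$ padding), the uniform treatment of $\bot\in\Sigma$ so that the $n=1$, $P_1=\bot$ instances of the schemas together with the rule of $\DDP_\bot$ capture unsatisfiability, and the congruence axioms for $\equality$.
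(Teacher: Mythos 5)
Your proposal is correct, but it takes a genuinely different route from the paper. The paper argues proof-theoretically: it works with (normal) hyperresolution derivations, showing by induction that every disjunct $Q(\ve b)$ occurring in an ``upper portion'' of a derivation of $P(\ve a)$ from $\DDP\cup\Dat$ yields $\Xi(\DDP)\cup\Dat\models Q^P(\ve b,\ve a)$ (completeness), and conversely that any derived fact $P^Q(\ve a,\ve b)$ satisfies $\DDP\cup\Dat\models P(\ve a)\to Q(\ve b)$ (soundness); the linear case is then obtained as a simpler instance of the weakly linear argument. You instead argue model-theoretically in both directions: for soundness you make the ``intended meaning'' of $P^Q$ a formal definition, expanding any model $I$ of $\DDP\cup\Dat$ by interpreting $P^Q$ as $\top$-relativised material implication and checking the four schemas directly; for completeness you build a countermodel of $\DDP\cup\Dat$ greedily from the least Herbrand model $\mathcal{N}$ of the datalog program, always choosing a head disjunct whose goal-indexed auxiliary fact is absent from $\mathcal{N}$, with linearity entering exactly where you say it does (the single IDB body atom is covered by the invariant, so the flipping schema forces a good disjunct). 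Both arguments are sound; yours avoids the machinery of normal derivations and upper portions and makes the semantics of the auxiliary predicates explicit, while the paper's derivation-based induction transfers verbatim to the weakly linear generalisation (where the non-disjunctive body atoms are datalog rather than EDB and need the auxiliary entailment-preservation corollary); adapting your countermodel construction to that setting would require the analogous extra lemma that $J$ and $\mathcal{N}$ agree on datalog facts. One detail worth spelling out in your completeness step: the reason the selected disjunct can never be $\bot$ is that $\Xi(\DDP)$ contains the schema-2 instance $\fml_\top\to\bot^R(\ve y)$ generated from the rule of $\DDP_\bot$, so $\bot^R(\ve d)\in\mathcal{N}$ always; you gesture at this with your remark on the uniform treatment of $\bot\in\Sigma$, but it is the precise fact that keeps $J$ a genuine model.
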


\begin{figure*}[t]
  \hskip4mm
    \begin{minipage}[t]{8cm}\vspace{0pt}
      \begin{small}
        \begin{tikzpicture}[parent anchor=south,sibling distance=25mm,level distance=7.1mm]
          \tikzstyle{level 4}=[level distance=9mm]
          \node (ba) {$B(a)$}
          child {node {$G(b)\lor B(a)$}
            child {node {$B(c)\lor B(a)$}
              child {node {$B(c)\lor G(c)$}
                child {node {$\quad~~~~V(c)\in\Dat_1\!\!\!$}
                  edge from parent node[right=-.2mm] {\eqref{r:n2c-b-or-g}}}}
              child {node {$E(a,c)\in\Dat_1\!$}
              }}
            child {node {$E(b,c)\in\Dat_1\!$}}}
          child {node {$E(a,b)\in\Dat_1\!$}};
          \node[below of=ba,node distance=5.3mm] {\eqref{r:n2c-g-impl-b}};
          \node[below of=ba-1,node distance=5.3mm] {\eqref{r:n2c-b-impl-g}};
          \node[below of=ba-1-1,node distance=5.3mm] {\eqref{r:n2c-g-impl-b}};
          \node[left of=ba,node distance=4cm] {(a)};
        \end{tikzpicture}
      \end{small}
    \end{minipage}\hskip2mm%
    \begin{minipage}[t]{9cm}\vspace{0pt}
      \begin{small}
        \begin{tikzpicture}[parent anchor=south,level distance=10mm]
          \tikzstyle{level 1}=[sibling distance=21mm]
          \tikzstyle{level 2}=[sibling distance=17mm]
          \tikzstyle{level 3}=[sibling distance=17mm]
          \node (ba) {$B(a)$}
          child {node {$B^B(c,a)$}
            child {node {$G^B(b,a)$}
              child {node {$B^B(a,a)$}
              }
              child {node {$\quad~E(a,b)\in\Dat_1\!\!$}}}
            child {node {$\quad~E(b,c)\in\Dat_1\!\!$}}}
          child[grow=-120,level distance=14mm] {node {$\quad~~~~~~~V(c)\in\Dat_1\!\!$}
            edge from parent node[right] {\eqref{r:n2c-rew-bx-gx-impl-x}}}
          child {node {$G^B(c,a)$}
            child {node {$B^B(a,a)$}
            }
            child {node {$\quad~E(a,c)\in\Dat_1\!\!$}
            }};
          \node[below of=ba-1,node distance=6.5mm] {\eqref{r:n2c-rew-gx-impl-bx}};
          \node[below of=ba-1-1,node distance=6.5mm] {\eqref{r:n2c-rew-bx-impl-gx}};
          \node[below of=ba-3,node distance=6.5mm] {\eqref{r:n2c-rew-bx-impl-gx}};
          \node[left of=ba,node distance=4cm] {(b)};
        \end{tikzpicture}
      \end{small}
    \end{minipage}
    \caption{(a) derivation of $B(a)$ from $\DDP_1\cup\Dat_1$;\, (b) derivation of $B(a)$ from $\Xi(\DDP_1)\cup\Dat_1$}
  \label{fig:derivation-p1}
\end{figure*}

Thus, fact entailment over linear programs is no harder than in
datalog: \textsc{PTime} w.r.t. data and \EXPTIME in combined
complexity.
Formally, Theorem \ref{thm:rew-correct} is shown by induction on
hyperresolution derivations of facts entailed by the rules in $\DDP$
from a given dataset $\Dat$ (
see Appendix).
We next sketch the intuitions
%
%
on 
$\DDP_1$ and $\Dat_1=\set{V(a),$ $V(b),$ $V(c),$ $E(a,b),$
  $E(b,c),$ $E(a,c)}$. 

Figure~\ref{fig:derivation-p1}, Part (a) shows a linear
(hyperresolution) derivation $\rho_1$ of $B(a)$ from
$\DDP_1\cup\Dat_1$ while Part (b) shows a derivation $\rho_2$ of the
same fact from $\Xi(\DDP_1)\cup\Dat_1$.
We represent derivations 
as trees whose nodes are labeled with disjunctions of facts and where
every inner node is derived from its children using a rule of the
program (initialisation rules in $\rho_2$ 
are omitted for brevity).  We first show that if $B(a)$ is provable in
$\DDP_1 \cup \Dat_1$, then it is entailed by $\Xi(\DDP_1) \cup \Dat_1$.
From the premise, a linear derivation such as $\rho_1$ exists. The
crux of the proof is to show that each disjunction of facts in
$\rho_1$ corresponds to a set of facts over the auxiliary predicates
entailed by $\Xi(\DDP_1) \cup \Dat_1$. Furthermore, these facts must be
of the form $X^B(u,a)$, where $B(a)$ is the goal, $u$ is a constant,
and $X \in \{B,G\}$.  For example, $B(c) \vee G(c)$ in $\rho_1$
corresponds to facts $B^B(c,a)$ and $G^B(c,a)$, which are provable
from $\Xi(\DDP_1) \cup \Dat_1$, as witnessed by $\rho_2$. Since $\rho_1$
is linear, it has a unique rule application that has only EDB atoms as
premises, i.e., the application of \eqref{r:n2c-b-or-g}, which
generates $B(c) \vee G(c)$. Since $B^B(c,a)$ and $G^B(c,a)$ are
provable from $\Xi(\DDP_1) \cup \Dat_1$, we can apply
\eqref{r:n2c-rew-bx-gx-impl-x} to derive $B(a)$.

Finally, we show the converse: if $B(a)$ is provable from $\Xi(\DDP_1)
\cup \Dat_1$ then it follows from $\DDP_1 \cup \Dat_1$. For this, we
take a derivation such as $\rho_2$, and show that each fact in
$\rho_2$ about an auxiliary predicate carries the intended meaning,
e.g., for $G^B(b,a)$ we must have $\DDP_1 \cup \Dat_1 \models G(b)
\rightarrow B(a)$.

\subsubsection{From Datalog to Linear Programs}\label{sec:datalog-to-linearDD}



The transformation from datalog to linear disjunctive datalog is based
on the same ideas, but it
is simpler in that we no longer distinguish between EDB and
IDB atoms: a 
rule in $\P$ is now ``flipped'' by moving \emph{all} its atoms from
the head to the body and vice-versa.
Moreover, we make 
 use of the IDB expansion $\P^e$ of
$\P$ to ensure linearity of the resulting disjunctive program.
%
\begin{definition} \label{def:psi}
  Let $\P$ be a datalog program. For each pair $(P,Q)$ of
  predicates in $\P$, let $P^Q$ be a fresh predicate unique
  to $(P,Q)$ where $\arity{P^Q} = \arity{P} + \arity{Q}$.
  Furthermore, let $\P^e$ be the IDB expansion of $\P$. Then, $\Psi(\P)$ is the linear disjunctive program containing, for each
  IDB predicate $R$ in $\P^e$
  the rules given next, where $\fml_\top$ is the least
  conjunction of $\top$-atoms making a rule safe and $\ve y=y_1\dots
  y_{\arity{R}}$ is a vector of distinct fresh variables:
  \begin{enumerate}
  \item a rule $\fml_\top\land Q^R(\ve t,\ve y)\to\bigvee_{i=1}^n
    P_i^R(\ve s_i,\ve y)$ for every rule $\bigwedge_{i=1}^n P_i(\ve
    s_i)\to Q(\ve t)\in\P^e\setminus \P^e_{\top}$, where $Q(\ve t)\ne\bot$ and $\bigvee_{i=1}^n P_i^R(\ve
    s_i,\ve y)$ is interpreted as $\bot$ if $n=0$;
  \item a rule $\fml_\top 
    \to\bigvee_{i=1}^n P_i^R(\ve s_i,\ve y)$ for every 
    $\bigwedge_{i=1}^n P_i(\ve s_i)\to\bot\in\P^e\setminus \P^e_{\top}$;
  \item a rule $\fml_\top 
    \to R^R(\ve y,\ve y)$;
  \item a rule $Q(\ve z)\land Q^R(\ve z,\ve y)\to R(\ve y)$ for every
    EDB predicate $Q$ in $\P^e$, where $\ve z$ is a vector of distinct
    fresh variables.\qedhere
  \end{enumerate}

\end{definition}
Again, the transformation is quadratic
and the arity of predicates is at most doubled.
\begin{example}
Consider $\P_2$, which encodes 
path system accessibility (a canonical
\textsc{PTime}-complete~problem):
\begin{align}
  \P_2=\{\,
  &R(x,y,z)\land A(y)\land A(z)\to A(x)\,\}\label{eq:ps-step}
\end{align}
Linear datalog is
\textsc{NLogSpace}, and cannot capture
$\P_2$. However, we can rewrite
$\P_2$ into linear disjunctive datalog:
\begin{align}
  \Psi(\P_2)=\{\,
  &\top(y)\land\top(z)\land {A'}^{A'}(x,u)\tag{\ref{eq:ps-step}'}\label{eq:ps-rew-step}\\
  &\!\to R^{A'}(x,y,z,u)\lor {A'}^{A'}(y,u)\lor {A'}^{A'}(z,u)\nonumber\\
  &{A'}^{A'}(x,y)\to A^{A'}(x,y)\label{eq:ps-rew-init}\\
  &\top(x)\to {A'}^{A'}(x,x)\label{eq:ps-rew-tt}\\
  &A(x)\land A^{A'}(x,y)\to {A'}(y)\label{eq:ps-rew-a-at-impl-t}\\
  &R(x,y,z)\land R^{A'}(x,y,z,u)\to {A'}(u)\,\}\label{eq:ps-rew-r-rt-impl-t}
\end{align}

Rule~\eqref{eq:ps-step}
yields Rule~\eqref{eq:ps-rew-step} in $\Psi(\P_2)$. 
Rule~\eqref{eq:ps-rew-init} is obtained
from $A(x)\to A'(x)\in\P^e$.
To see why we need the IDB expansion $\P^e$, suppose
we replaced $\P^e$ by $\P$ in Definition~\ref{def:psi}.
Rule~\eqref{eq:ps-rew-init} would not be produced and $A'$ would be
replaced by $A$ elsewhere. Then the rule $A(x)\land A^{A}(x,y)\to {A}(y)$ would
not be linear since both $A$ and $A^A$ would be IDB.
\end{example}
Correctness of $\Psi$ is established by the following theorem.

\begin{restatable}{theorem}{invrewcorrect} \label{thm:inv-rev-correct} %
  If\/ $\P$ is datalog, then $\Psi(\P)$ is a polynomial rewriting of\/
  $\P$ into a linear disjunctive program.
\end{restatable}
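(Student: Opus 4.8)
The plan is to mirror the argument for Theorem~\ref{thm:rew-correct}, exploiting the fact that $\Psi$ is built from the same ``flipping'' idea applied to $\P^e$. First I would fix a datalog program $\P$ and establish the predicate renaming $\theta$ witnessing the rewriting: $\Psi(\P)$ is claimed to be a rewriting of $\P$ w.r.t.\ the set of all predicates of $\P$, so I would argue that $\Psi(\P)$ is in fact a rewriting of $\P^e$ (which is itself a rewriting of $\P$ by the observation in the Preliminaries) and that the relevant renaming is the one sending each IDB predicate $Q$ of $\P$ to $Q'$. The payload to prove is that for every dataset $\Dat$ over the signature of $\P$ and every fact $\alpha$, we have $\P^e\cup\Dat\models\alpha$ iff $\Psi(\P)\cup\Dat\models\alpha$, where $\alpha$ ranges over facts on predicates of $\P^e$ together with $\bot$. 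Both directions would be proved by induction on hyperresolution derivations, exactly as in the linear case; the key semantic invariant is that a fact $P^R(\ve u,\ve v)$ is derivable from $\Psi(\P)\cup\Dat$ iff $\P^e\cup\Dat\models P(\ve u)\to R(\ve v)$ (with $P^R(\ve u,\ve v)$ read as $\bot$ when $n=0$, matching the ``$\bot$'' convention in clause~1 of Definition~\ref{def:psi}).

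For the direction from $\P^e$ to $\Psi(\P)$: given a hyperresolution derivation of a disjunction of facts from $\P^e\cup\Dat$, I would show by induction on its structure that each disjunct-set corresponds to a conjunction of auxiliary facts $X^R(\ve u,\ve v)$ provable in $\Psi(\P)$, where $R(\ve v)$ is the target fact. Since $\P^e$ is a datalog program every such derivation is automatically ``linear'' in the relevant sense (there is exactly one way to trace the proof of a single atom), so the bottom of the derivation is a rule with only EDB-in-$\P^e$ atoms in its body, i.e.\ an instance of a rule $\bigwedge P_i(\ve s_i)\to Q(\ve t)$; clauses~1 and~2 of Definition~\ref{def:psi} flip this into the appropriate $\Psi(\P)$-rule, clause~3 supplies the ``reflexivity'' facts $R^R(\ve y,\ve y)$ to seed the induction, and clause~4 converts an auxiliary fact $Q^R(\ve z,\ve y)$ together with an EDB fact $Q(\ve z)\in\Dat$ into the actual conclusion $R(\ve y)$. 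The role of the IDB expansion $\P^e$ here is precisely what the worked $\P_2$ example flags: it guarantees that the predicates $Q$ appearing as ``hooks'' in clause~4 are genuinely EDB, so the clause-4 rules are linear, and more importantly it guarantees that datasets $\Dat$ mentioning an IDB predicate of $\P$ behave correctly, because in $\P^e$ that predicate has been renamed and a dataset fact $Q(\ve a)$ only enters through the added rule $Q(\ve x)\to Q'(\ve x)$.

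For the converse, from $\Psi(\P)$ to $\P^e$: I would take a hyperresolution derivation from $\Psi(\P)\cup\Dat$ and show, again by induction, that every auxiliary fact $X^R(\ve u,\ve v)$ appearing in it satisfies $\P^e\cup\Dat\models X(\ve u)\to R(\ve v)$, and every non-auxiliary fact $R(\ve v)$ it derives is entailed by $\P^e\cup\Dat$. The base cases are the clause-3 rules (trivially $R(\ve y)\to R(\ve y)$ holds) and the EDB facts of $\Dat$; the inductive steps invert each of clauses~1, 2 and~4, which is routine since each $\Psi(\P)$-rule was manufactured as a sound ``contrapositive-style'' consequence of a $\P^e$-rule under this intended reading. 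Finally I would note the polynomial bound: $\Psi(\P)$ has $O(|\P|\cdot k)$ rules where $k$ is the number of predicates, arities at most double, and $\P^e$ is linear in $\P$ — and verify that every rule produced by Definition~\ref{def:psi} has at most one IDB body atom (clauses~1--3 have only $\top$-atoms and a single auxiliary atom in the body, clause~4 has $Q$ EDB and the single auxiliary atom $Q^R$), so $\Psi(\P)$ is indeed linear. I expect the main obstacle to be the careful bookkeeping around $\bot$ and the empty-head/empty-body conventions (clauses~1 with $n=0$ and clause~2), since one must check that unsatisfiability of $\P^e\cup\Dat$ corresponds exactly to derivability of $\bot$, hence of some $X^R(\ve u,\ve v)$ collapsing a goal, in $\Psi(\P)\cup\Dat$; threading $\top$ through the safety-repair conjunctions $\fml_\top$ in the degenerate cases is where the argument is most delicate.
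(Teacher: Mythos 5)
Your overall architecture (work with $\P^e$, induct on hyperresolution derivations in both directions, check linearity of clauses 1--4 and the quadratic size bound) matches the paper, and your remarks on why the IDB expansion is needed and why clause 4 is linear are correct. But the central semantic invariant you propose is the wrong one, and the inductions as sketched would not go through. You claim that $P^R(\ve u,\ve v)$ is \emph{derivable} from $\Psi(\P)\cup\Dat$ iff $\P^e\cup\Dat\models P(\ve u)\to R(\ve v)$, and correspondingly that in the completeness direction each stage of a $\P^e$-derivation yields a \emph{conjunction} of auxiliary facts each provable in $\Psi(\P)$, and in the soundness direction that \emph{every individual} auxiliary fact appearing in a $\Psi(\P)$-derivation satisfies $\P^e\cup\Dat\models X(\ve u)\to R(\ve v)$. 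This is the invariant appropriate for the converse transformation $\Xi$ (where the target program is datalog), not for $\Psi$: here the target $\Psi(\P)$ is a \emph{disjunctive} program, because flipping a rule with several body atoms (the only interesting case, e.g.\ $\P_2$) produces a rule with a disjunctive head, so individual auxiliary facts are in general not entailed. Concretely, for $\P=\{A(x)\land B(x)\to C(x)\}$ and $\Dat=\{A(b)\}$, $\Psi(\P)\cup\Dat$ entails only the disjunction $A^{C'}(b,b)\lor B^{C'}(b,b)$ and neither disjunct, so the right-to-left half of your ``iff'' fails; and the disjunct $A^{C'}(b,b)$ occurs in a derivation although $\P^e\cup\Dat\not\models A(b)\to C'(b)$, so your per-fact soundness claim fails too.

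The paper's proof repairs exactly this. For completeness it inducts over \emph{upper portions} of the (datalog) derivation of $P(\ve a)$ from $\P^e\cup\Dat$ and proves that $\Psi(\P)\cup\Dat$ entails the single \emph{disjunction} $\bigvee_{Q(\ve b)\in\lambda(\mathsf{leaves}(T'))}Q^P(\ve b,\ve a)$ over the frontier (Lemma~\ref{lem:inv-rew-complete-aux}); the conclusion $P(\ve a)$ is then obtained by case analysis on this disjunction, using the clause-4, clause-2 ($\bot$), and $\top$ rules for the three possible kinds of leaves. For soundness it first proves a \emph{focusing} lemma (every derivation from $\Psi(\P)\cup\Dat$ is focused on a single goal fact $Q(\ve b)$, so all auxiliary atoms in it share the second argument $\ve b$), and then shows for each node $v$ that $\P^e\cup\Dat\models\bigl(\bigwedge_{\alpha\in\lambda_{\mathrm{base}}(v)}\alpha\bigr)\to Q(\ve b)$, i.e.\ the \emph{conjunction} of the base facts of all auxiliary disjuncts in the (disjunctive) label implies the goal. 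Your sketch contains neither the frontier/upper-portion bookkeeping nor the focusing step, and with the per-fact invariant the inductive step fails as soon as $\P$ has a rule with two body atoms; so as written the proposal has a genuine gap at its core, even though the surrounding scaffolding is right.
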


%
%
From Theorems~\ref{thm:rew-correct} and \ref{thm:inv-rev-correct} we obtain
the following results.
%
\begin{corollary} 
  A disjunctive program\/ $\DDP$ is datalog rewritable iff\/
  it is rewritable into a linear disjunctive program.
\end{corollary}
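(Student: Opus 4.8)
The plan is to prove the corollary by combining the two constructions $\Xi$ and $\Psi$ established in Theorems~\ref{thm:rew-correct} and \ref{thm:inv-rev-correct}, using the fact that both preserve rewriting equivalence and that "rewritability" is transitive.

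First, for the forward direction, suppose $\DDP$ is datalog rewritable, so there is a datalog program $\P$ that is a rewriting of $\DDP$ (w.r.t.\ the predicates of $\DDP$, modulo an injective predicate renaming $\theta$). Apply Theorem~\ref{thm:inv-rev-correct} to $\P$ to obtain the linear disjunctive program $\Psi(\P)$, which is a rewriting of $\P$. I would then observe that the rewriting relation composes: if $\P'$ is a rewriting of $\P$ via renaming $\theta_1$ and $\P''$ is a rewriting of $\P'$ via $\theta_2$, then $\P''$ is a rewriting of $\P$ via $\theta_1\theta_2$, since $(\eval{\P}\Dat|_S)\theta_1\theta_2 = (\eval{\P'}\Dat|_{S\theta_1})\theta_2 = \eval{\P''}\Dat|_{S\theta_1\theta_2}$ for every $\Dat$ over the signature of $\P$ — here one has to be slightly careful that $\Dat$ ranges over datasets in the original signature and that the intermediate program's signature contains the relevant renamed predicates, but this is exactly the setup guaranteed by the theorems. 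Hence $\Psi(\P)$ is a rewriting of $\DDP$, and it is linear disjunctive, giving the "only if" direction.

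Second, for the converse, suppose $\DDP$ is rewritable into a linear disjunctive program $\DDP'$. Apply Theorem~\ref{thm:rew-correct} to $\DDP'$: since $\DDP'$ is linear, $\Xi(\DDP')$ is a datalog rewriting of $\DDP'$. By the same composition argument, $\Xi(\DDP')$ is a datalog rewriting of $\DDP$, so $\DDP$ is datalog rewritable. This closes the "if" direction and completes the equivalence.

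The only real subtlety — and the step I would write out most carefully — is the composition/transitivity of the rewriting relation, making sure the predicate renamings and the quantification over datasets line up (in particular that each theorem's output program is a rewriting \emph{w.r.t.\ all predicates} of its input, so that feeding it into the next construction is legitimate, and that the datasets quantified over are always those over the signature of the original program $\DDP$). Everything else is immediate from the two theorems. I would also note in passing that the argument in fact yields \emph{polynomial} rewritings in both directions, since $\Xi$ and $\Psi$ are each polynomial (indeed quadratic) transformations, though the corollary as stated only claims the equivalence.
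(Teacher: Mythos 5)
Your proof is correct and matches the paper's own argument: the corollary is stated there as an immediate consequence of Theorems~\ref{thm:rew-correct} and \ref{thm:inv-rev-correct}, i.e., exactly the composition of the $\Psi$ and $\Xi$ rewritings (with the renaming/signature bookkeeping) that you spell out. No divergence from the paper's approach and no gap.
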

%
%
\begin{corollary}
  Checking  $\DDP \cup \Dat \models \alpha$ for\/
  $\DDP$ a linear program, $\Dat$ a dataset and
  $\alpha$ a fact is \textsc{PTime}-complete w.r.t. data complexity and
  \EXPTIME-complete w.r.t. combined complexity.
\end{corollary}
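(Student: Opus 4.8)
The plan is to obtain both the upper and the lower bounds by reducing linear disjunctive fact entailment to, and from, plain datalog, for which the corresponding completeness results are classical \cite{DBLP:journals/csur/DantsinEGV01}.

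For the \emph{upper bounds} I would use Theorem~\ref{thm:rew-correct}. Given a linear program $\DDP$, the datalog program $\Xi(\DDP)$ is a rewriting of $\DDP$ and can be computed in time polynomial in $|\DDP|$; since $\Xi$ keeps the signature of $\DDP$ and only adds fresh auxiliary predicates, $\DDP\cup\Dat\models\alpha$ iff $\Xi(\DDP)\cup\Dat\models\alpha$ for every fact $\alpha$ over the signature of $\DDP$, and likewise $\DDP\cup\Dat$ is unsatisfiable iff $\Xi(\DDP)\cup\Dat$ is (as $\bot$ is treated as an ordinary predicate, unsatisfiability is just $\bot$-entailment and is covered by the rewriting). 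For data complexity, $\DDP$ is fixed, hence $\Xi(\DDP)$ is a fixed datalog program and $\Xi(\DDP)\cup\Dat\models\alpha$ is decidable in \textsc{PTime} in $|\Dat|$. For combined complexity, $\DDP$ is part of the input; since $|\Xi(\DDP)|$ is polynomial in $|\DDP|$ and the maximal predicate arity in $\Xi(\DDP)$ is at most twice that in $\DDP$, deciding $\Xi(\DDP)\cup\Dat\models\alpha$ stays inside \EXPTIME in $|\DDP|+|\Dat|+|\alpha|$.

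For the \emph{lower bounds} the one point requiring care is that not every datalog program is linear (e.g.\ rule~\eqref{eq:ps-step}), so I cannot simply treat datalog as a syntactic fragment of linear disjunctive datalog. Instead I would invoke Theorem~\ref{thm:inv-rev-correct}: $\Psi$ maps any datalog program $\P$ to a linear disjunctive program $\Psi(\P)$, computable in polynomial time, with $\P\cup\Dat\models\alpha$ iff $\Psi(\P)\cup\Dat\models\alpha\theta$ for the associated injective predicate renaming $\theta$. For \textsc{PTime}-hardness in data complexity, fix a single datalog program $\P_0$ whose fact entailment over varying datasets is \textsc{PTime}-hard (e.g.\ path-system accessibility, cf.\ $\P_2$); then $\Psi(\P_0)$ is a fixed linear disjunctive program and $\Dat\mapsto(\Psi(\P_0),\Dat,\alpha\theta)$ is a logspace reduction, so linear disjunctive fact entailment is \textsc{PTime}-hard w.r.t.\ data. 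For \EXPTIME-hardness in combined complexity, take the standard reduction from an \EXPTIME Turing machine to datalog fact entailment, which outputs a datalog program (with dataset and goal) of polynomial size, and post-compose it with the polynomial transformation $\Psi$; the composition is a polynomial reduction to fact entailment over linear disjunctive programs.

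I do not expect a genuine obstacle here: the corollary is essentially a consequence of Theorems~\ref{thm:rew-correct} and \ref{thm:inv-rev-correct} together with the known complexity of datalog. The two things to check carefully are (i) that the injective renamings built into $\Xi$ (identity on $\DDP$'s signature) and into $\Psi$ (the IDB-to-primed renaming) do not disturb the reductions, and (ii) that the quadratic blow-up and the doubling of arities keep the combined-complexity upper bound within \EXPTIME, which holds because the number of ground instances of $\Xi(\DDP)$ over a domain of size $m$ is bounded by $m^{O(\mathit{arity})}$, i.e.\ $2^{O(n\log n)}$ for $n$ the size of the input.
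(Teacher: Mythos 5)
Your proposal is correct and follows exactly the route the paper intends: the upper bounds via the polynomial rewriting $\Xi$ of Theorem~\ref{thm:rew-correct} and the lower bounds via the polynomial transformation $\Psi$ of Theorem~\ref{thm:inv-rev-correct}, combined with the classical \textsc{PTime}/\EXPTIME{} completeness results for plain datalog. Your additional checks on the predicate renamings, the treatment of $\bot$, and the arity doubling are exactly the routine details the paper leaves implicit.
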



\section{Weakly Linear Disjunctive Datalog} \label{sec:weak}

In this section, we introduce $\semi$ programs: a new
class of disjunctive datalog programs that extends both datalog
and linear disjunctive datalog. 
 The main idea is simple: instead of requiring 
the body of 
each rule to contain at most one occurrence of an IDB predicate,
we require at most one occurrence of a \emph{disjunctive} predicate---a predicate
whose extension for some dataset
could depend on the application of a disjunctive rule. 
This intuition is formalised as given next.

\begin{definition}
  The \emph{dependency graph} $G_\DDP=(V,E,\mu)$ of a program $\DDP$ is the
  smallest edge-labeled digraph such that:
  \begin{enumerate}
  \item $V$ contains every predicate occurring in $\DDP$;
  \item $r\in\mu(P,Q)$ whenever $P,Q \in V$,  $r\in\DDP\setminus\DDP_\top$, $P$ 
  occurs in the body of $r$, and $Q$ occurs in the head of $r$; and
  \item $(P,Q)\in E$ whenever $\mu(P,Q)$ is nonempty.
  \end{enumerate}
  A predicate $Q$ \emph{depends on a rule $r\in\DDP$} if $G_\DDP$ has
  a path that ends in $Q$ and involves an $r$-labeled
  edge. Predicate $Q$ is \emph{datalog} if it
  only depends on datalog rules; 
  otherwise, $Q$ is \emph{disjunctive}.
  Program $\DDP$ is \emph{$\semi$} (WL for short) if every rule in $\DDP$
  has at most one occurrence of a disjunctive predicate in the body.
\end{definition}
Checking whether $\DDP$ is WL is clearly feasible in
polynomial time. If $\DDP$ is datalog, then all its
predicates are datalog and $\DDP$ is WL. Furthermore, every
disjunctive predicate is IDB and hence every linear program is also
WL. There are, however, WL programs that are neither datalog
nor linear. Consider $\DDP_3$, which extends
$\DDP_1$ with the following rule:
\begin{align}
  &E(y,x)\to E(x,y)\label{r:n2c-e-impl-e}
\end{align}
Since $E$ is IDB in $\DDP_3$, Rules \eqref{r:n2c-g-impl-b} and
\eqref{r:n2c-b-impl-g} have two IDB atoms. Thus, $\DDP_3$ is not
linear.
The 
 graph $G_{\DDP_3}$ 
looks as follows.
\begin{center}
  \begin{small}
    \begin{tikzpicture}[>=stealth,->]
      \matrix[matrix of math nodes,
      row sep={.65cm,between origins},
      column sep={1.5cm,between origins}]
      {
        && |(B)| B & &\\
        \top&|(V)| V & & |(E)| E &\phantom{aaaaaaa}\bot\\
        && |(G)| G & &\\
      };
      \draw (V) -- node [above left=-2pt] {(\ref{r:n2c-b-or-g})} (B);
      \draw (V) -- node [below left=-2.2pt] {(\ref{r:n2c-b-or-g})} (G);
      \draw (E) -- node [above right=-2pt] {(\ref{r:n2c-g-impl-b})} (B);
      \draw (E) -- node [below right=-2.2pt] {(\ref{r:n2c-b-impl-g})} (G) ;
      \draw (B) to [bend left] node [right=-.5pt] {(\ref{r:n2c-b-impl-g})} (G);
      \draw (G) to [bend left] node [left=-.5pt] {(\ref{r:n2c-g-impl-b})} (B);
      \draw (E) to [loop right] node [right=-.5pt] {(\ref{r:n2c-e-impl-e})} (E);
    \end{tikzpicture}
  \end{small}
\end{center}
\vskip-1.1mm 
Predicate $V$ is EDB and hence does not depend on any rule. Predicates
$B$ and $G$ depend on Rule~\eqref{r:n2c-b-or-g} and hence are
disjunctive. Finally, predicate $E$ depends only on
Rule~\eqref{r:n2c-e-impl-e} and hence it is a datalog predicate.
Thus, $\DDP_3$ is WL.
\begin{definition}
For $\DDP$ WL,
let $\Xi'(\DDP)$ be defined as $\Xi(\DDP)$
in Definition~\ref{def:xi} but where:
\emph{(i)}  $\Sigma$ is the set of all disjunctive predicates in
  $\DDP\setminus\DDP_\top$; \emph{(ii)} 
  $\fml$ denotes the conjunction of all datalog atoms in a rule;
and \emph{(iii)} in addition to rules  (1)--(4), $\Xi'(\DDP)$ contains
  every rule in $\DDP$ with no disjunctive predicates.\qedhere
\end{definition}

By adapting the proof of Theorem~\ref{thm:rew-correct} we obtain:

\begin{restatable}{theorem}{rewcorrect} \label{thm:rew-correct-semi} %
  If\/ $\DDP$ is WL, then $\Xi'(\DDP)$ is a polynomial datalog rewriting
  of\/ $\DDP$.
\end{restatable}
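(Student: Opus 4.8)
The plan is to reduce Theorem~\ref{thm:rew-correct-semi} to Theorem~\ref{thm:rew-correct} by isolating the ``datalog part'' of a WL program and treating the disjunctive part with exactly the machinery of $\Xi$. Concretely, given a WL program $\DDP$, let $\DDP_{dat}\subseteq\DDP$ be the rules whose body and head mention only datalog predicates, and let $\DDP_{dis}=\DDP\setminus\DDP_{dat}$ be the remaining rules (each of which, by the WL condition, has at most one disjunctive body atom). The key observation is that, once the extensions of all datalog predicates have been fully computed by the datalog rules, every rule in $\DDP_{dis}$ behaves like a rule of a \emph{linear} disjunctive program over the disjunctive predicates, with the datalog atoms playing the role that EDB atoms play in Definition~\ref{def:xi}. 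This is precisely why $\Xi'$ keeps the datalog rules of $\DDP$ verbatim (clause~(iii)), treats datalog atoms as ``$\fml$'' (clause~(ii)), and applies the flip construction only to $\Sigma=$ the disjunctive predicates (clause~(i)).

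First I would set up the semantic correspondence on the side of $\DDP$. Since $\DDP$ is WL, a disjunctive predicate never occurs in the head of a rule in $\DDP_{dat}$, so the datalog predicates' extensions in $\eval{\DDP}{\Dat}$ are determined entirely by $\DDP_{dat}\cup\Dat$ (an ordinary datalog computation, with no interaction from the disjunctive rules — this needs a short argument that adding $\DDP_{dis}$ cannot derive new datalog facts, which follows because datalog atoms never appear in the heads of $\DDP_{dis}$-rules). Call this set $\Dat^{*}$. I would then argue that for any fact $\alpha$ over the disjunctive predicates (or $\bot$), $\DDP\cup\Dat\models\alpha$ iff $\DDP_{dis}\cup\Dat^{*}\models\alpha$, i.e.\ we may first saturate the datalog part and then reason purely disjunctively. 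This lets me regard $\DDP_{dis}$, relative to the fixed dataset $\Dat^{*}$, as a genuinely linear disjunctive program whose EDB predicates are the datalog predicates of $\DDP$.

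Next I would run the proof of Theorem~\ref{thm:rew-correct} (the hyperresolution induction sketched in the excerpt) on $\DDP_{dis}$ with $\Dat^{*}$ in place of $\Dat$, obtaining that the rules of types (1)--(4) in $\Xi'(\DDP)$ — which are exactly $\Xi(\DDP_{dis})$ with datalog atoms as EDB atoms — correctly capture entailment of disjunctive facts. Then I would splice this together with the datalog rules retained by clause~(iii): in $\Xi'(\DDP)\cup\Dat$, those rules compute $\Dat^{*}$ exactly (they are $\DDP_{dat}$, and the auxiliary predicates $P^Q$ never feed back into datalog predicates, so no spurious datalog facts arise), after which the flipped rules derive precisely the disjunctive facts entailed by $\DDP_{dis}\cup\Dat^{*}$. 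Combining the two halves yields $\eval{\Xi'(\DDP)}{\Dat}|_{S}=\eval{\DDP}{\Dat}|_{S}$ for $S$ the predicates of $\DDP$, so $\Xi'(\DDP)$ is a rewriting; polynomiality and the ``arity at most doubled'' bound are inherited directly from Definition~\ref{def:xi}, since clauses (i)--(iii) only add, verbatim, the datalog rules of $\DDP$.

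The main obstacle is making the ``saturate datalog first, then reason disjunctively'' decomposition fully rigorous at the level of hyperresolution derivations, rather than just model-theoretically. In a derivation from $\DDP\cup\Dat$, applications of datalog and disjunctive rules can be interleaved, and a disjunctive rule application may use a datalog atom that was itself derived using (other) datalog rules; one must show such a derivation can be reordered so that all datalog-rule applications producing atoms used by $\DDP_{dis}$ occur first, producing a subderivation entirely within $\DDP_{dat}\cup\Dat$ whose conclusions form $\Dat^{*}$. This reordering is where the WL hypothesis is essential — it guarantees each disjunctive clause in the derivation carries at most one ``live'' disjunctive atom in its history, which is exactly the invariant that drives the linear-case induction. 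I expect this to be handled in the appendix by a derivation-surgery lemma, after which the inductive argument of Theorem~\ref{thm:rew-correct} applies essentially unchanged with $\fml$ now ranging over datalog (rather than EDB) atoms.
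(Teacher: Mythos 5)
Your proof is essentially correct, but it is organised quite differently from the paper's. You reduce the WL case to the already-proved linear case (Theorem~\ref{thm:rew-correct}) by stratifying: saturate $\DDP_{\mathrm{dat}}\cup\Dat$ into $\Dat^{*}$, view $\DDP_{\mathrm{dis}}$ as a linear program whose EDB predicates are the datalog predicates, apply $\Xi$ to it, and splice. The paper does the opposite: it proves the WL theorem \emph{directly} by redoing the hyperresolution induction, and then treats the linear case as the ``analogous but simpler'' instance. Concretely, instead of an explicit two-stage decomposition, the appendix folds the datalog reasoning into the induction: Proposition~\ref{prop:datalog-derivation-shape} (any label containing a datalog atom is a singleton) and Proposition~\ref{prop:datalog-entailment-pres}/Corollary~\ref{cor:datalog-entailment-pres} (datalog facts are preserved between $\DDP$ and $\Xi'(\DDP)$) play the role of your stratification lemma, and are then invoked inside the completeness induction over upper portions (Lemma~\ref{lem:completeness-aux}, Lemma~\ref{lem:rew-complete}) and the soundness induction on derivations from $\Xi'(\DDP)\cup\Dat$ (Lemma~\ref{lem:rew-sound}), with datalog atoms simply taking the place of EDB atoms. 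Your route buys modularity -- Theorem~\ref{thm:rew-correct} is reused as a black box -- but then you must prove the decomposition claim rigorously and with the right bookkeeping: since datasets here may contain facts over IDB (hence disjunctive) predicates, your $\Dat^{*}$ must be taken as $\Dat$ together with the saturated datalog facts (as stated, ``$\DDP\cup\Dat\models\alpha$ iff $\DDP_{\mathrm{dis}}\cup\Dat^{*}\models\alpha$'' fails if $\Dat^{*}$ contains only datalog-predicate facts and $\Dat$ mentions a disjunctive predicate), and you must check that $\Xi'(\DDP)$ really coincides with $\DDP_{\mathrm{dat}}\cup\Xi(\DDP_{\mathrm{dis}})$ under this re-typing of EDB. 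Also note two smaller points: no genuine ``reordering/surgery'' of derivations is needed -- the dependency-graph argument shows any subderivation rooted at a datalog fact already lies entirely within $\DDP_{\mathrm{dat}}\cup\Dat$, which is exactly what Propositions~\ref{prop:datalog-derivation-shape} and \ref{prop:datalog-entailment-pres} establish by routine induction; and the invariant WL buys is not ``one live disjunctive atom per clause'' but rather that every rule application has at most one non-singleton premise, so the non-singleton labels form a path, which is what makes the upper-portion induction of Lemma~\ref{lem:completeness-aux} go through. Finally, since the paper itself obtains Theorem~\ref{thm:rew-correct} as the simpler analogue of this proof, your reduction presupposes an independent proof of the linear case -- which exists, so there is no circularity, only a different ordering of the two results.
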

Thus, fact entailment over WL programs has the same data and
combined complexity as for datalog. 
Furthermore, $\Xi'(\DDP)$ is a rewriting of
$\DDP$ and hence it preserves the extension of all predicates.
If, however, we want to query a specific predicate $Q$, 
we can compute a smaller program, which is linear in the size
of $\DDP$ and preserves the extension of $Q$. Indeed, if $Q$ is datalog, 
each proof in $\DDP$ of a fact about $Q$
involves only datalog rules, and if $Q$ is disjunctive, each
such proof involves only auxiliary predicates $X^Q$. Thus, in $\Xi'$ we can dispense with all rules
involving auxiliary predicates $X^R$ for $R\neq Q$. In
particular, if $Q$ is datalog, the rewriting contains no auxiliary
predicates. 
%
%
\begin{restatable}{theorem}{rewcorrectsinglequery} \label{sec:rew-correct-single-query}
  Let\/ $\DDP$ be WL, $S$ a set of 
  predicates in $\DDP$, and $\DDP'$ obtained from $\Xi'(\DDP)$
  by removing all rules with a predicate $X^R$ for $R \not\in
  S$. Then $\DDP'$ is a rewriting of\/ $\DDP$ w.r.t. $S$.
\end{restatable}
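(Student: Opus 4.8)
The plan is to reduce Theorem~\ref{sec:rew-correct-single-query} to Theorem~\ref{thm:rew-correct-semi}, which already tells us that $\Xi'(\DDP)$ is a rewriting of $\DDP$ (hence preserves the extension of every predicate, in particular every $Q \in S$). So it suffices to show that deleting from $\Xi'(\DDP)$ all rules that mention some auxiliary predicate $X^R$ with $R \notin S$ does not change $\eval{\Xi'(\DDP)}{\Dat}|_{S'}$, where $S'$ is the appropriate renaming of $S$. The essential observation is that the auxiliary predicates $X^R$ (for varying $R$) are ``colour-coded'' by their second block of arguments: in every rule of groups (1)--(4) of Definition~\ref{def:xi}, the vector $\ve y$ that fills the $R$-slot of $Q^R$ is shared verbatim across all auxiliary atoms in the rule, and the target predicate of the rule is either $R$ itself (groups 1,3,4 produce $R^R$ or $R$) or another auxiliary predicate $P_i^R$ with the \emph{same} superscript $R$ (group 2). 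In the extra rules coming from clause (iii) of $\Xi'$ no auxiliary predicate occurs at all. Consequently, the predicates $X^R$ and the datalog predicate $R$ participate in derivations only together with other $X^R$-atoms of the same superscript, and the facts about a disjunctive predicate $R$ depend only on rules whose auxiliary atoms carry superscript $R$.

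Concretely, I would make this precise by a stratification-by-superscript argument. First I would define, for each predicate $R$ that is disjunctive in $\DDP$, the subprogram $\Xi'_R(\DDP)$ consisting of the instances of rules (1)--(4) for that fixed $R$ together with all the clause-(iii) rules (which are $R$-independent). Then I would show two things: (a) for every dataset $\Dat$, the datalog predicates of $\DDP$ (computed purely by the clause-(iii) rules, which form a plain datalog program on their own) and the disjunctive predicate $R$ itself are derived in $\Xi'(\DDP)\cup\Dat$ exactly as they are in $\Bigl(\bigcup_{R} \Xi'_R(\DDP)\Bigr)\cup\Dat$ with $R$ ranging over $\{$ disjunctive predicates in $S\}$ together with the clause-(iii) rules --- i.e.\ the subprograms for different superscripts are ``non-interfering'' on the non-auxiliary predicates; and (b) $\DDP'$ as defined in the statement is exactly (up to the cosmetic inclusion/exclusion of auxiliary rules that cannot fire anyway) the union of the $\Xi'_R(\DDP)$ over $R \in S$ disjunctive, plus the clause-(iii) rules, plus --- if $S$ also contains datalog predicates --- nothing extra beyond clause-(iii) rules. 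Point (a) is what does the real work; point (b) is bookkeeping matching Definition of $\DDP'$ against this decomposition.

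For point (a), the key lemma is: in any hyperresolution derivation of a fact $\alpha$ over a predicate $P \in \{R\} \cup \{\text{datalog predicates}\}$ from $\Xi'(\DDP) \cup \Dat$, one can prune every subderivation whose conclusion is an auxiliary atom $X^{R'}$ with $R' \neq R$, because such an atom can never be an ancestor of $\alpha$. This follows by tracking superscripts: inspecting the four rule schemas, whenever an $X^{R'}$-atom appears in a rule body, every head atom of that rule is either $Y^{R'}$ with the same superscript $R'$ (group 2, group 1 when viewed as producing $P_i^{R'}$ --- actually group 1 produces $R'^{R'}$) or $R'$ itself (groups 3,4), so a superscript is never ``lost'' upward in a derivation, and an $R'$-superscripted auxiliary fact can only ever lead to facts about $R'$ or to facts of the same superscript, never to facts about a different disjunctive predicate $R$ or about a datalog predicate. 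Dually, a datalog predicate fact is derivable using only clause-(iii) rules (plus $\Dat$), since clause-(iii) rules have only datalog atoms and other datalog atoms are EDB or are themselves heads of clause-(iii) rules --- this uses that $\DDP$ is WL so the datalog predicates of $\DDP$ form a self-contained plain-datalog ``layer''. Putting these together: removing, for $R \notin S$, all rules mentioning $X^R$ removes only rules that could contribute exclusively to facts about such $R$, never to facts in $S$.

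The main obstacle I anticipate is handling the interaction at clause-(iii) rules and clause-(4) rules carefully: a clause-(4) rule $Q(\ve z) \land Q^R(\ve z, \ve y) \to R(\ve y)$ mixes a datalog-layer or EDB atom $Q(\ve z)$ with an $R$-superscripted auxiliary atom, and a clause-(iii) rule of $\Xi'$ (an original WL rule with no disjunctive predicates) may have a \emph{datalog} IDB atom in its body that is itself disjunctive-free but whose derivation, via other clause-(iii) rules, is genuinely recursive --- I must be sure that this recursion stays within the ``always-kept'' part and never needs an $X^{R'}$ with $R' \notin S$. This is true precisely because no clause-(iii) rule and no clause-(4)/(3)/(1) rule ever has a head atom that is a datalog predicate \emph{other than} via clause (iii) itself or produces $R$ only via superscript-$R$ machinery; so the argument goes through, but writing the superscript-preservation invariant so that it simultaneously covers the datalog layer and each disjunctive ``colour'' is the delicate part. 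Formally I would phrase it as: assign to each fact in $\eval{\Xi'(\DDP)}{\Dat}$ a ``colour'' in $\{\mathsf{dat}\} \cup \{\text{disjunctive predicates}\}$ (auxiliary $X^R \mapsto R$, datalog predicate or EDB $\mapsto \mathsf{dat}$, disjunctive $R \mapsto R$), observe every rule of $\Xi'(\DDP)$ is ``monochromatic except possibly for $\mathsf{dat}$ body atoms feeding a coloured head'' (clause (4)), deduce that the colour-$R$ facts for $R \notin S$ are irrelevant to colour-$S$ and colour-$\mathsf{dat}$ facts, and conclude via Theorem~\ref{thm:rew-correct-semi} and the monotonicity of $\eval{\cdot}{\cdot}$ that $\DDP'$ agrees with $\Xi'(\DDP)$, hence with $\DDP$, on $S$.
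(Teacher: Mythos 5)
Your overall shape (soundness of $\DDP'$ from $\DDP'\subseteq\Xi'(\DDP)$ and monotonicity; completeness by arguing that facts over $S$ never need the deleted rules) is reasonable, but the key lemma you rely on is false as stated, and the flaw is exactly at the rules of group (4) of Definition~\ref{def:xi}. A group-(4) rule $Q(\ve z)\land Q^R(\ve z,\ve y)\to R(\ve y)$ exists for \emph{every} pair $(Q,R)$ of disjunctive predicates, so its first body atom $Q(\ve z)$ is a disjunctive predicate of $\DDP$, not a datalog/EDB atom as your closing ``monochromatic except for $\mathsf{dat}$ body atoms feeding a coloured head'' observation asserts. Hence your colour is not preserved upward: a fact about a disjunctive predicate $Q\neq R$, derived in $\Xi'(\DDP)$ using superscript-$Q$ machinery (groups (3)/(4) for $Q$), can legitimately feed a derivation of $R(\ve a)$ through this rule. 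For instance, with $\DDP=\{A(x)\to Q(x)\lor Q'(x),\; Q'(x)\to Q(x),\; Q(x)\to R(x),\; B(x)\to R(x)\lor R'(x)\}$ and $\Dat=\{A(a)\}$, one derivation of $R(a)$ from $\Xi'(\DDP)\cup\Dat$ first derives $Q(a)$ via superscript-$Q$ rules and then applies $Q(\ve z)\land Q^R(\ve z,\ve y)\to R(\ve y)$; in that derivation, $X^Q$-atoms \emph{are} ancestors of the goal, so your claim that they ``can never be an ancestor of $\alpha$'' fails, and pruning those subderivations does not leave a valid derivation in $\DDP'$.

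What is true---and what the theorem needs---is that such cross-superscript derivations are never \emph{necessary}, but this cannot be read off the rule shapes of $\Xi'(\DDP)$ by a reachability/stratification argument; it is obtained by re-running the completeness proof relative to $\DDP$. Concretely, one adapts Lemma~\ref{lem:completeness-aux} and Lemma~\ref{lem:rew-complete}: from any hyperresolution derivation of $R(\ve a)$ from $\DDP\cup\Dat$ with $R$ disjunctive, the constructed derivation in $\Xi'(\DDP)$ uses only rules with superscript $R$, rules without disjunctive predicates, and applies group-(4) rules only to facts $Q(\ve b)\in\Dat$---all of which survive the deletion when $R\in S$; for datalog predicates in $S$, Corollary~\ref{cor:datalog-entailment-pres} already gives a derivation using only the disjunctive-free rules. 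Soundness of $\DDP'$ is then immediate from $\DDP'\subseteq\Xi'(\DDP)$ together with Corollary~\ref{cor:rew-sound}. This is the paper's route (``analogously to Theorem~\ref{thm:rew-correct-semi} with minor adaptations of the relevant lemmas''); your attempt to justify the pruning purely by a colour analysis of derivations in $\Xi'(\DDP)$ has a genuine gap at group (4) that can only be closed by going back to derivations in $\DDP$ as above.
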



\section{Rewriting Programs via Unfolding} \label{sec:procedure}

Although  WL programs 
can be rewritten into datalog,
not all datalog rewritable  programs
are WL. Let $\DDP_4$ be as follows:
%
\begin{align}
  \DDP_4=\{\,&A(x)\land B(x)\to C(x)\lor D(x)\label{eq:nsl-ab-impl-cd} \\
  &E(x) \to A(x)\lor F(x)\label{eq:nsl-e-impl-af} \\
  &C(x)\land R(x,y)\to B(y)\,\}\label{eq:nsl-cr-impl-b}
\end{align}
Program $\DDP_4$ is not WL since both body atoms in 
\eqref{eq:nsl-ab-impl-cd} are disjunctive. 
However, $\DDP_4$
is datalog rewritable.

We now present a rewriting procedure that combines our
results in Section~\ref{sec:weak}
with the work of \mbox{\citeA{Gergatsoulis97}} on program
transformation for disjunctive logic programs.
Our procedure iteratively eliminates non-WL rules by
``unfolding'' the culprit atoms w.r.t.\ the other 
rules in the program. It stops when the 
program becomes WL, and outputs a datalog program as in
Section~\ref{sec:weak}.  The procedure is sound: if it succeeds, 
the output is a datalog rewriting. 
It is, however, both incomplete 
(linearisability cannot be semi-decided just by unfolding) and
non-terminating.  Nevertheless, our experiments 
suggest that unfolding can be effective in
practice since some programs obtained from realistic ontologies
can be rewritten into datalog after a few unfolding steps.


\subsubsection{Unfolding}


We start by recapitulating 
\cite{Gergatsoulis97}. Given a disjunctive program~$\DDP$, a rule~$r$
in~$\DDP$, and a body atom~$\alpha$ of~$r$, Gergatsoulis defines the
\emph{unfolding} of~$r$ at~$\alpha$ in~$\DDP$ as a transformation
of~$\P$ that replaces~$r$ with a set of resolvents of~$r$ with rules
in~$\P$ at~$\alpha$ (see Appendix).
We denote the resulting program by $\mathsf{Unfold}(\DDP,r,\alpha)$.
Unfolding preserves all entailed disjunctions $\fml$ of facts: $\DDP\models\fml$ iff
$\mathsf{Unfold}(\DDP,r,\alpha)\models\fml$ for all~$\DDP$,~$r$,~$\alpha$,~and~$\fml$. 
However, to ensure that 
unfolding produces a rewriting we need a stronger
correctness result that is dataset independent.


\begin{restatable}{theorem}{unfoldingcorrect}
  \label{thm:unfolding-correct} %
  Let $\DDP_0$ be a disjunctive program and let $\DDP$ be a rewriting
  of\/ $\DDP_0$ such that no IDB predicate in\/ $\DDP$ occurs
  in~$\DDP_0$. Let\/ $r$ be a rule in $\DDP$ and $\alpha$ be an IDB body
  atom of\/~$r$. Then $\mathsf{Unfold}(\DDP,r,\alpha)$ is a rewriting
  of\/~$\DDP_0$. Moreover, no IDB predicate in
  $\mathsf{Unfold}(\DDP,r,\alpha)$ occurs in\/~$\DDP_0$.
\end{restatable}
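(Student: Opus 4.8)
The plan is to reduce the dataset-independent correctness statement to the already-known dataset-wise correctness of unfolding. Recall that Gergatsoulis' result gives, for every program $\Q$, rule $r$, body atom $\alpha$, and disjunction of facts $\fml$: $\Q\models\fml$ iff $\mathsf{Unfold}(\Q,r,\alpha)\models\fml$. I would like to lift this from "a fixed program" to "a program together with an arbitrary dataset $\Dat$", and then package the equivalence of evaluations into the definition of a rewriting. The key observation that makes the lift legitimate is the hypothesis that $\alpha$ is an \emph{IDB} atom of $r$: since no rule in $\DDP_\top\cup\DDP_\bot$ has an IDB atom in its body, and more importantly since facts in $\Dat$ only mention predicates over the signature of $\DDP_0$ while IDB predicates of $\DDP$ do not occur in $\DDP_0$, the atom $\alpha$ cannot be resolved against a fact of $\Dat$ — only against proper rules of $\DDP$. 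Hence for any dataset $\Dat$ we have $\mathsf{Unfold}(\DDP\cup\Dat,\,r,\,\alpha)=\mathsf{Unfold}(\DDP,\,r,\,\alpha)\cup\Dat$, because $\Dat$ contributes no new resolvents at $\alpha$ and $r$ itself is untouched in $\Dat$.

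With that identity in hand, the argument runs as follows. Fix a dataset $\Dat$ over the signature of $\DDP_0$ (equivalently, over the signature of $\DDP$ restricted to EDB predicates; note $\mathsf{Unfold}$ does not change the EDB/IDB partition, nor introduce new IDB predicates, which I would verify from Gergatsoulis' construction — resolvents only reshuffle existing body and head atoms). Let $\alpha'$ be any fact entailed by $\mathsf{Unfold}(\DDP,r,\alpha)\cup\Dat$; applying the dataset-wise equation above and then Gergatsoulis' equivalence with $\Q=\DDP\cup\Dat$ and $\fml=\alpha'$, we get $\DDP\cup\Dat\models\alpha'$, and symmetrically. Taking $\fml=\bot$ handles unsatisfiability. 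Therefore $\eval{\mathsf{Unfold}(\DDP,r,\alpha)}\Dat=\eval{\DDP}\Dat$ for every $\Dat$, so $\mathsf{Unfold}(\DDP,r,\alpha)$ is a rewriting of $\DDP$ \emph{with the identity renaming}. Since the rewriting relation is transitive (compose the injective renamings) and $\DDP$ is by hypothesis a rewriting of $\DDP_0$, it follows that $\mathsf{Unfold}(\DDP,r,\alpha)$ is a rewriting of $\DDP_0$. The "moreover" clause is immediate once one checks that $\mathsf{Unfold}$ introduces no fresh predicates at all: the set of IDB predicates of $\mathsf{Unfold}(\DDP,r,\alpha)$ is contained in that of $\DDP$, which is disjoint from the predicates of $\DDP_0$.

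I expect the main obstacle to be the seemingly trivial identity $\mathsf{Unfold}(\DDP\cup\Dat,r,\alpha)=\mathsf{Unfold}(\DDP,r,\alpha)\cup\Dat$, i.e., the claim that adding $\Dat$ does not create spurious resolvents. This needs the precise definition of resolvent used by Gergatsoulis (a partial resolution step at $\alpha$ against a rule whose head contains a unifiable atom), and one must argue that (i) a fact in $\Dat$, having an empty body, would only contribute a "trivial" resolvent that is subsumed/already accounted for, or — more cleanly — that since $\alpha$ is IDB and $\Dat$ mentions no IDB predicate, $\alpha$ simply does not unify with any head atom of any fact in $\Dat$, so $\Dat$ contributes nothing; and (ii) since $r\notin\DDP_\top\cup\DDP_\bot$ and $r$ is not a fact, $r$ is still present for unfolding and $\Dat$-rules are left as residue in the output. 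A secondary, purely bookkeeping obstacle is confirming from the appendix definition of $\mathsf{Unfold}$ that it preserves the EDB/IDB partition and adds no new predicate symbols; this is routine but must be stated, because it is what makes the "no IDB predicate of $\mathsf{Unfold}(\DDP,r,\alpha)$ occurs in $\DDP_0$" conclusion — and the applicability of the induction in later sections — go through.
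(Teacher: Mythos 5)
Your argument is correct in substance, but it takes a genuinely different route from the paper. You use the cited program-level preservation result of Gergatsoulis as a black box, applied to the program $\DDP\cup\Dat$ (treating the dataset as part of the program), and reduce everything to the commutation identity $\mathsf{Unfold}(\DDP\cup\Dat,r,\alpha)=\mathsf{Unfold}(\DDP,r,\alpha)\cup\Dat$; this identity does hold under the theorem's hypotheses, since the predicate of $\alpha$ is IDB in $\DDP$ and hence occurs neither in $\DDP_0$ nor in any dataset over its signature, so no fact of $\Dat$ unifies with $\alpha$ and the set $S_0$ in Procedure~\ref{alg:unfold} is unaffected by adding $\Dat$. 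The paper instead re-proves the dataset-relative statement from scratch: soundness is Proposition~\ref{prop:unfolding-sound} (resolvents are entailed), and completeness is an induction on hyperresolution derivations (Lemmas~\ref{lem:unfolding-complete-aux} and~\ref{lem:unfolding-complete}), where the assumption that $\Dat$ mentions no IDB predicate of $\DDP$ guarantees that the unfolded atom is always produced by some rule $s$, so an iterated elementary unfolding of $r$ with $s$ can simulate the application of $r$. Your route is shorter and isolates exactly why the IDB restriction (and the use of $\DDP^e$ in $\mathsf{Rewrite}$) matters; the paper's route is self-contained and tied to the exact definition in Procedure~\ref{alg:unfold}, rather than relying on Gergatsoulis' theorem in the precise first-order, classical-entailment form quoted. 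One slip to fix: the intermediate claim that $\mathsf{Unfold}(\DDP,r,\alpha)$ is ``a rewriting of $\DDP$ with the identity renaming'' is not literally true under the paper's definition, which quantifies over all datasets over the signature of $\DDP$, including ones asserting IDB facts --- e.g., unfolding $B(x)\to C(x)$ at $B(x)$ in $\{A(x)\to B(x),\,B(x)\to C(x)\}$ loses $C(a)$ on $\Dat=\{B(a)\}$; this is precisely the phenomenon the hypotheses exclude. What you actually prove, and all you need, is $\eval{\DDP}{\Dat}=\eval{\mathsf{Unfold}(\DDP,r,\alpha)}{\Dat}$ for datasets over the signature of $\DDP_0$, which composes with the assumption that $\DDP$ is a rewriting of $\DDP_0$ exactly as in the paper's proof, so state the composition in that restricted form rather than via transitivity of the rewriting relation. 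Your handling of the ``moreover'' clause matches the paper's: resolvents introduce no new head predicates, so the IDB predicates of the unfolding are among those of $\DDP$.
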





\subsubsection{The Rewriting Procedure} \label{sec:procedure-proper}

\begin{procedure}[t]
\caption{$\mathsf{Rewrite}$}\label{alg:rewrite}
\begin{footnotesize}
    \textbf{Input:}
    $\DDP$: a disjunctive program \\
    \textbf{Output:}
    a datalog rewriting of $\DDP$
    \begin{algorithmic}[1]
      \State $\DDP':=\DDP^e$
      \While{$\DDP'$ not WL}
      \State select $r\,{\in}\,\DDP'$ with more than one disjunctive body atom

      \State select a disjunctive body atom $\alpha\in r$ 

      \State $\DDP':= \mathsf{Unfold}(\DDP',r,\alpha)$
      \EndWhile
      \State \Return $\Xi'(\P')$
    \end{algorithmic}
\end{footnotesize}
\end{procedure}

Procedure~\ref{alg:rewrite} attempts to eliminate rules with several
disjunctive body atoms by unfolding one such atom.  Note that to
satisfy the premise of Theorem~\ref{thm:unfolding-correct},
unfolding is applied to $\DDP^e$ rather than $\DDP$.
Correctness of Procedure~\ref{alg:rewrite} is established by the
following theorem.

\begin{restatable}{theorem}{procedure-correct}\label{th:rewrite-procedure}
  Let $\DDP$ be a disjunctive program. If $\mathsf{Rewrite}$
  terminates on $\DDP$ with output\/ $\P'$, then\/
  $\P'$ is a rewriting of\/ $\DDP$.
\end{restatable}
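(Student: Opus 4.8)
The plan is to prove the statement by combining the two correctness results established earlier: Theorem~\ref{thm:unfolding-correct} (that unfolding an IDB body atom preserves the property of being a rewriting of $\DDP_0$, while keeping the freshness invariant on IDB predicates) and Theorem~\ref{thm:rew-correct-semi} (that $\Xi'$ applied to a WL program yields a datalog rewriting of that program). The argument has the shape of a loop-invariant proof. First I would observe that rewriting is transitive: if $\P_2$ is a rewriting of $\P_1$ and $\P_1$ is a rewriting of $\P_0$, then composing the two injective predicate renamings shows that $\P_2$ is a rewriting of $\P_0$. I would also note that $\DDP^e$ is a rewriting of $\DDP$ (stated in the Preliminaries), and crucially that no IDB predicate of $\DDP^e$ occurs in $\DDP$, because the IDB expansion renames every IDB predicate $Q$ to a fresh $Q'$ and the only IDB predicates of $\DDP^e$ are exactly these fresh $Q'$. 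This establishes that right after line~1 the loop invariant holds.

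Next I would formulate the loop invariant precisely: at the top of every iteration of the \textbf{while} loop, $\DDP'$ is a rewriting of $\DDP$ and no IDB predicate occurring in $\DDP'$ occurs in $\DDP$. The base case is the observation above (with the extra step that ``no IDB predicate of $\DDP^e$ occurs in $\DDP$'', not merely in $\DDP^e$ minus something — this is immediate from the definition of IDB expansion). For the inductive step, suppose the invariant holds and the loop body executes: it selects a rule $r$ with more than one disjunctive body atom and a disjunctive body atom $\alpha$ of $r$. Since a disjunctive predicate is always IDB, $\alpha$ is an IDB body atom of $r$. Applying Theorem~\ref{thm:unfolding-correct} with $\DDP_0 := \DDP$ and the current $\DDP'$ in the role of its $\DDP$ — whose hypotheses are exactly supplied by the invariant — gives that $\mathsf{Unfold}(\DDP',r,\alpha)$ is again a rewriting of $\DDP$ with no IDB predicate occurring in $\DDP$, so the invariant is re-established for the next iteration.

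Finally, suppose the procedure terminates on $\DDP$ with output $\P'$. Let $\DDP''$ be the value of the variable $\DDP'$ when the loop exits; by the loop invariant, $\DDP''$ is a rewriting of $\DDP$, and the loop exit condition guarantees that $\DDP''$ is WL. The output is $\Xi'(\DDP'')$. By Theorem~\ref{thm:rew-correct-semi}, $\Xi'(\DDP'')$ is a (polynomial) datalog rewriting of $\DDP''$. Composing this with the fact that $\DDP''$ is a rewriting of $\DDP$, via transitivity of the rewriting relation, yields that $\P' = \Xi'(\DDP'')$ is a rewriting of $\DDP$, as required. I expect the main subtlety — rather than a deep obstacle — to be the careful bookkeeping that makes the invariant self-sustaining: one must check that ``disjunctive implies IDB'' so that Theorem~\ref{thm:unfolding-correct} applies to the selected $\alpha$, and one must verify the freshness clause (no IDB predicate of the current program occurs in $\DDP$) at the base case, since without it the inductive application of Theorem~\ref{thm:unfolding-correct} would not go through. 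Transitivity of rewriting, while routine, should be stated explicitly since the argument uses it twice.
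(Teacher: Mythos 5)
Your proof is correct and follows essentially the argument the paper intends (the paper leaves this proof implicit): start from $\DDP^e$ precisely to satisfy the freshness hypothesis of Theorem~\ref{thm:unfolding-correct}, maintain the invariant ``$\DDP'$ is a rewriting of $\DDP$ and none of its IDB predicates occur in $\DDP$'' through each unfolding step (using that disjunctive predicates are IDB), and conclude with Theorem~\ref{thm:rew-correct-semi} and composition of rewritings. The only quibble is your remark that transitivity is needed twice: since Theorem~\ref{thm:unfolding-correct} already yields a rewriting of $\DDP$ directly at every iteration, composition is needed only once, at the final application of $\Xi'$.
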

%
%
%
%
$\mathsf{Rewrite}$ first transforms our example program $\DDP_4$
 to
\begin{align}
  \DDP'_4=\{\,&A'(x)\land B'(x)\to C'(x)\lor D'(x)\label{eq:nslp-ab-impl-cd} \\
  &E(x) \to A'(x)\lor F'(x)\label{eq:nslp-e-impl-af} \\
  &C'(x)\land R(x,y)\to B'(y)\,\}\cup\DDP_{\mathrm{aux}}\label{eq:nslp-cr-impl-b}
\end{align}
where $\DDP_{\mathrm{aux}}=\mset{P(x)\to P'(x)}{P\in\set{A,B,C,D,F}}$
and $A',B',C',D',F'$ are fresh. Rule \eqref{eq:nslp-ab-impl-cd}
is not WL in $\DDP'_4$,
and  needs to be unfolded.
We choose to unfold~\eqref{eq:nslp-ab-impl-cd} on $A'(x)$.
Thus, in Step~5, Rule~\eqref{eq:nslp-ab-impl-cd} is replaced by the
rules
\begin{align}
  &A(x)\land B'(x)\to C'(x)\lor D'(x)\label{eq:nslp-rew-ab-impl-cd} \\
  &E(x)\land B'(x)\to C'(x)\lor D'(x)\lor F'(x)\label{eq:nslp-rew-eb-impl-cdf}
\end{align}
The resulting
$\DDP''_4$ is WL, and $\mathsf{Rewrite}$ 
returns $\Xi'(\DDP''_4)$.


\section{Application to OWL Ontologies} \label{sec:DLs}



The RL profile is a 
fragment of OWL 2 for which reasoning
is tractable and practically realisable
by means of rule-based technologies. 
RL is also a fragment of datalog: 
each RL ontology can be normalised to a datalog program. 

\begin{table}[t]
\small
\begin{displaymath}
\begin{array}{@{}l@{\quad}r@{\;}l@{\qquad}l@{}}
    1.  & A             & \sqsubseteq \atmostq{1}{R}{B} & A(z) \wedge R(z,x_1) \wedge B(x_1)\\
    &&&~~\wedge R(z,x_2) \wedge B(x_2) \rightarrow x_1 \equality x_2 \\
    2.  & A \sqcap B    & \sqsubseteq C                 & A(x) \wedge B(x) \rightarrow C(x) \\
    3.  & \exists R.A   & \sqsubseteq B                 & R(x,y) \wedge A(y) \rightarrow B(x) \\
    4.  & R             & \sqsubseteq S                 & R(x_1,x_2) \rightarrow S(x_1,x_2) \\
    5.  & R \circ S     & \sqsubseteq T                 & R(x_1,z) \wedge S(z,x_2) \rightarrow T(x_1,x_2) \\
    6.  & A             & \sqsubseteq \mathsf{Self}(R)  & A(x) \rightarrow R(x,x) \\
    7.  & \mathsf{Self}(R) & \sqsubseteq A              &  R(x,x) \rightarrow A(x) \\ 
    8. & R             & \sqsubseteq S^-               & R(x,y) \rightarrow S(y,x) \\
    9.  & A             & \sqsubseteq \{ a \}           & A(x) \rightarrow x \equality a \\ 
    10. & \{a\}        & \sqsubseteq A            & A(a) \\
    \hline
    11. & A             & \sqsubseteq B \sqcup C  &  A(x) \rightarrow B(x) \vee C(x)  \\
\end{array}
\end{displaymath}
\caption{Normalised $\textup{RL}^{(\sqcup)}$ 
  axioms, with $A,B$ atomic
or $\top$,  $C$   atomic or $\bot$, 
$R,S,T$ atomic roles, $a$ an individual.
}
\label{tab:RL}
\end{table}

We next show how to extend RL with disjunctions while retaining
tractability of consistency checking and fact entailment in
\emph{combined complexity}.
We first recapitulate the kinds of normalised axioms that can occur in
an RL ontology.  We assume familiarity with Description Logic (DL)
notation.
  
A (normalised) \emph{RL ontology} is a finite set of DL axioms of the form 1-10
in Table~\ref{tab:RL}.
%
The table also provides the translation of DL axioms into rules.
We define $\RLor$ as the extension of RL
with axioms 
capturing disjunctive knowledge.
\begin{definition}
  An \emph{$\RLor$ ontology} is a finite set of DL axioms of the form
  1-11 in Table~\ref{tab:RL}.
\end{definition}
%

Fact entailment in $\RLor$ is 
\coNP-hard since 
$\RLor$ can encode non-3-colourability. 
Membership in $\coNP$ holds since 
rules have bounded number of variables, and hence programs can be
grounded 
in polynomial time (see Appendix).
%
%
Tractability 
can be regained if we restrict ourselves to $\RLor$ ontologies 
corresponding to WL programs. 
WL programs~$\DDP$ obtained from $\RLor$ ontologies have bounded
number of variables, and thus variables in
$\Xi'(\DDP)$ are also bounded. 
%
\begin{restatable}{theorem}{semiRLorComplexity} \label{th:semiRLor}
Checking $\Ont \cup \Dat \models \alpha$, for\/
$\Ont$ an $\RLor$ ontology that corresponds to a 
WL program,
is \textsc{PTime}-complete.
\end{restatable}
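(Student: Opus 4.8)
The plan is to establish the two directions of \textsc{PTime}-completeness separately. \textsc{PTime}-hardness is inherited from plain RL and requires only a short argument; the substantive part is membership in \textsc{PTime}, which I would obtain by passing through the datalog rewriting $\Xi'$ of Theorem~\ref{thm:rew-correct-semi} and then exploiting the fact that, for programs arising from $\RLor$ ontologies, $\Xi'$ produces a datalog program with a bounded number of variables per rule, so that it can be grounded and evaluated in polynomial time.

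For the lower bound, recall that every normalised RL ontology (axioms~1--10 of Table~\ref{tab:RL}) translates to a \emph{datalog} program, which is trivially WL; hence it suffices to show that deciding $\Ont\cup\Dat\models\bot$ is already \textsc{PTime}-hard for normalised RL ontologies. This is standard \cite{OWL2-profiles}: propositional \textsc{Horn-Sat} reduces to it by reading each propositional variable as an atomic concept over a fixed individual~$a$, encoding a clause $p_1\wedge\dots\wedge p_k\to q$ by a chain of axioms of shape~2 (using fresh concepts for the partial conjunctions, and taking $q=\bot$ for goal clauses), and placing the positive unit clauses in the dataset; the Horn formula is satisfiable iff the resulting RL ontology together with this dataset is consistent, and \textsc{PTime} is closed under complement.

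For the upper bound, fix an $\RLor$ ontology $\Ont$ corresponding to a WL program $\DDP$ via the translation of Table~\ref{tab:RL} (together with the explicit congruence axiomatisation of $\equality$, which is part of $\DDP$). Since each DL axiom and its rule translation are logically equivalent, $\Ont\cup\Dat\models\alpha$ iff $\DDP\cup\Dat\models\alpha$ for every dataset $\Dat$ and fact $\alpha$ over the signature of $\Ont$. By Theorem~\ref{thm:rew-correct-semi}, $\Xi'(\DDP)$ is a datalog rewriting of $\DDP$ computable in polynomial time; hence, for the renaming $\theta$ witnessing the rewriting, $\DDP\cup\Dat$ is unsatisfiable iff $\Xi'(\DDP)\cup\Dat$ is, and otherwise $\DDP\cup\Dat\models\alpha$ iff $\alpha\theta\in\eval{\Xi'(\DDP)}{\Dat}$. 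It therefore remains to compute $\eval{\Xi'(\DDP)}{\Dat}$ in polynomial time in the combined size of $\Xi'(\DDP)$ and $\Dat$.

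The key observation, and the part where the real work lies, is that every rule of $\Xi'(\DDP)$ contains at most a bounded number of distinct variables and mentions only predicates of bounded arity. Inspecting Table~\ref{tab:RL} and the congruence axioms, every rule of $\DDP$ has at most three variables and uses only unary (concept) and binary (role) predicates; moreover the only disjunctive axioms are those of shape~11, which involve concept names only, so every disjunctive predicate of $\DDP$ is unary. Hence in Definition~\ref{def:xi}/$\Xi'$ the set $\Sigma$ consists of unary predicates, the auxiliary predicates $P^Q$ have arity $\arity{P}+\arity{Q}\le 2$, and each rule of $\Xi'(\DDP)$ is either a rule of $\DDP$ (or a $\top$- or $\bot$-rule) or is obtained from a rule of $\DDP$ by appending the single fresh variable $\ve y$ of length $\arity{R}\le 1$, $R\in\Sigma$, to some atoms and replacing predicates by auxiliary ones; so every such rule has at most four variables and predicates of arity at most two. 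A datalog program with a bounded number of variables per rule and bounded arity can then be grounded in polynomial time into an equivalent ground Horn program of polynomial size, whose least model is $\eval{\Xi'(\DDP)}{\Dat}$ and is computable by least-fixpoint iteration in polynomial time \cite{DBLP:journals/csur/DantsinEGV01}; membership of $\bot$ or of $\alpha\theta$ is read off directly. Composing the polynomially many polynomial steps $\Ont\mapsto\DDP\mapsto\Xi'(\DDP)\mapsto\text{grounding}\mapsto\text{least model}$ yields the \textsc{PTime} bound. I expect the only delicate point to be the case analysis bounding the variables and arities of $\Xi'(\DDP)$; the remaining steps are routine.
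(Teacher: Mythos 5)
Your proposal follows essentially the same route as the paper's proof: hardness is inherited from plain RL, and membership is obtained via the polynomial datalog rewriting of the WL program together with the observation that the resulting rules have boundedly many variables and bounded-arity predicates, so the rewriting can be grounded and evaluated in polynomial time. One intermediate claim in your case analysis is incorrect, however: it is not true that every disjunctive predicate of $\DDP$ is unary. A predicate is disjunctive whenever it \emph{depends} on a disjunctive rule through the dependency graph, not only when it occurs in the head of one; for instance, with $A\sqsubseteq B\sqcup C$ and $B\sqsubseteq\mathsf{Self}(R)$ the role $R$ is disjunctive, and $\equality$ can become disjunctive via axioms of shape 1 or 9 whose body concepts are disjunctive. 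Consequently $\Sigma$ may contain binary predicates, the auxiliary predicates $P^Q$ may have arity up to $4$, and rules of $\Xi'(\DDP)$ may have more variables than the four you allow. This does not damage the argument---all that is needed is that arities and per-rule variable counts stay bounded by a constant, which they do (arities are at most doubled and each rule of Table~\ref{tab:RL}, like the equality and $\top$ axiomatisation, has a constant number of atoms and variables), and this is exactly the level of generality at which the paper states the bound---but your case analysis should be phrased in those terms rather than via the false unary claim.
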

Thus, fact entailment in $\RLor$ is no
harder than in RL, and one can use
scalable engines
such as RDFox.  
Our experiments indicate that many
ontologies captured by $\RLor$ are either
WL or can be made WL via unfolding, which makes
data reasoning over such ontologies feasible.%
\footnote{For CQ answering, our language becomes \coNP-hard
w.r.t.\ data,
whereas RL is tractable. This 
follows from \cite{Lutz:2012ug} already for a single axiom of type 11.}

\subsubsection{Dealing with Expressive Ontology Languages}
 
\citeA{hms07reasoning} developed an algorithm for transforming $\SHIQ$
ontologies into an equivalent
disjunctive datalog program. \citeA{grauIJCAI13} combined this algorithm with a
knowledge compilation procedure (called
$\mathsf{Compile\text{-}Horn}$) obtaining a sound but incomplete and
non-terminating datalog rewriting procedure for $\SHIQ$. Our
procedure $\mathsf{Rewrite}$ provides an alternative
to $\mathsf{Compile\text{-}Horn}$ for $\SHIQ$. 
The classes of ontologies rewritable by the two
procedures can be shown incomparable (e.g., 
$\mathsf{Compile\text{-}Horn}$ may not terminate on WL programs).

\section{Related Work}

Complexity of disjunctive datalog with negation as failure
has been extensively studied 
\cite{BenEliyahuZoharyPalopoli97,DBLP:journals/tods/EiterGM97}.
The class of \emph{head-cycle} free programs was
studied in
\citeA{BenEliyahuZoharyPalopoli97,BenEliyahuZoharyEtAl00}, where it was
shown that certain reasoning problems are tractable
for such programs 
(fact entailment, however, remains intractable w.r.t.\ data).

\citeA{GottlobMMP12} investigated complexity of 
disjunctive TGDs and showed tractability (w.r.t. data complexity) of
fact entailment for a class of linear disjunctive TGDs.
%
Such rules allow for existential quantifiers in the head, but 
require single-atom bodies;
thus, they
are incomparable to WL rules.
\citeA{Artale09thedl-lite} showed tractability of fact entailment
w.r.t.\ data for $\text{DL-Lite}_{\mathsf{bool}}$ logics. This result is related to \cite{GottlobMMP12} since certain
$\text{DL-Lite}_{\mathsf{bool}}$ logics can be represented as linear
disjunctive TGDs.  Finally, combined complexity of CQ answering for
disjunctive TGDs was studied by \citeA{DBLP:conf/ijcai/BourhisMP13}.

\citeA{Lutz:2012ug} investigated non-uniform data complexity of CQ answering
w.r.t.\ extensions of \ALC{}, and  
related CQ answering to constraint satisfaction
problems. This connection was  explored by
\citeA{CarstenPODS2013}, who showed
\textsc{NExp\-Time}-completeness of first-order
 and datalog rewritability of instance queries
for $\mathcal{SHI}$. 

The procedure in~\cite{grauIJCAI13}, mentioned in
Section~\ref{sec:DLs},
is used by \mbox{\citeA{KaminskiG13}} to show first-order/datalog
rewritability of two fragments of $\ELU$. Notably, both
fragments yield linear
programs.
Finally, our unfolding-based rewriting procedure is
motivated by the work of~\citeA{AfratiEtAl03} on linearisation of
plain datalog programs by means of program transformation
techniques~\cite{TamakiS84,ProiettiP93,Gergatsoulis97}.


\section{Evaluation}\label{sec:evaluation}

\noindent
\textbf{Rewritability Experiments. }  
We have evaluated whether realistic
ontologies can be rewritten to WL (and hence to datalog)
programs. We analysed 118 non-Horn ontologies from 
BioPortal,
the Prot\'eg\'e 
library,
and the corpus in \cite{GardinerTH06}. To transform ontologies into
disjunctive datalog we used KAON2
\mbox{\cite{Motik06phd}}.\footnote{We doctored
the ontologies to remove constructs outside $\mathcal{SHIQ}$, and hence not supported by KAON2. 
The modified ontologies can be found
  on 
http://csu6325.cs.ox.ac.uk/WeakLinearity/%
}
KAON2 succeeded to compute disjunctive
programs for 103 ontologies.
On these, 
$\mathsf{Rewrite}$ succeeded in 35 cases: 8 programs were already
datalog after CNF
normalisation, 12 were linear, 12 were WL, and 3 required
unfolding. $\mathsf{Rewrite}$ was limited to 
1,000
unfolding steps, but all successful cases required at most
11 steps. On average, 73\% of the predicates in ontologies were datalog, and so
could be queried using a datalog engine (even if the
disjunctive program could not be rewritten).  We identified 15 $\RLor$
ontologies and obtained WL programs for 13 of them.  For
comparison, we implemented the procedure
$\mathsf{Compile\text{-}Horn}$ in \cite{grauIJCAI13}, which succeeded
on 18 ontologies, only one of which
could not be rewritten by our approach.

\smallskip
\noindent 
\textbf{Query Answering. } 
We tested scalability of instance query answering
using datalog programs obtained by our approach.  For this, we
used UOBM and DBpedia, which come with large
datasets. UOBM \cite{DBLP:conf/esws/MaYQXPL06} is a standard benchmark
for which synthetic data is available~\cite{DBLP:conf/www/ZhouGHWB13}. 
We denote the dataset for $k$ universities by~U$k$. We 
considered the $\RLor$ subset of UOBM (which is
rewritable using $\mathsf{Rewrite}$ but not using
$\mathsf{Compile\text{-}Horn}$), and generated
datasets U01, U04, U07, U10.
%
DBpedia\footnote{http://dbpedia.org/About} is a realistic ontology
with a large dataset from Wikipedia. Since DBpedia is Horn, we
extended it with reasonable disjunctive
axioms. 
\begin{table}{\small
	\begin{tabular}{l|r@{~~~~}r@{~~~\,}r|r@{~~\,}r@{\quad}r|r@{~~}r@{\quad\,}r}
				& \multicolumn{3}{c}{Our approach}	& \multicolumn{3}{|c}{HermiT}	& \multicolumn{3}{|c}{Pellet}	\\
				& dlog & disj & err		 	& dlog  & disj  & err			& dlog		& disj & err	\\\hline
		U01		& $<$1s &  8s &              	&   6s   & 107s   &				& 146s   	& 172s  & 		\\
		U04		& $<$1s & 55s &               	&  50s   &  50s   & 2			& ---     	& ---  & ---	\\
		U07		& $<$1s & 62s & 3             	& 107s   & 122s   & 2			& ---     	& ---  & ---	\\
		U10		& $<$1s & 66s & 5             	& 176s   & 182s   & 2			& ---     	& ---  & --- 
	\end{tabular}
}\caption{Average query answering times}\label{tab:queryAnswering}
\end{table}
We used RDFox as a datalog engine. Performance 
was measured against HermiT
\cite{msh09hypertableau} 
and Pellet
\cite{SirinEtAl07}. 
We used a server with two Intel Xeon E5-2643
processors and 128GB RAM. Timeouts
were $10$min for one query and $30$min for all queries;
a limit of 100GB was allocated to 
each task.  We ran RDFox on 16 threads.  Systems were compared on
individual queries, and on precomputing answers to all
queries. All systems succeeded to answer all queries for U01: HermiT
required 890s, Pellet 505s, and we 
52s. Table~\ref{tab:queryAnswering} depicts average 
times for datalog and disjunctive predicates, and number of
queries on which a system failed.\footnote{Average
times do not reflect queries on which a system failed.}
Pellet only succeeded to answer queries
on U01. 
HermiT's performance was similar for datalog and disjunctive
predicates. In 
our case, queries over the 130 datalog predicates 
in UOBM (88\% of all
predicates) were answered instantaneously~($<$1s);
queries over disjunctive predicates were harder, since
the rewritings expanded the dataset
quadratically in some cases. 
Finally, due to its size,
DBpedia's dataset 
cannot even be loaded by  HermiT or Pellet.
Using 
RDFox, our rewriting precomputed the answers for all
DBpedia predicates in 48s.

\section{Conclusion}

We have proposed a characterisation of
datalog rewritability for disjunctive datalog 
programs, as well as tractable fragments of disjunctive datalog. 
Our techniques can be applied to rewrite OWL ontologies into datalog,
which enables the use of scalable datalog engines for data
reasoning. Furthermore, our approach is not ``all or nothing'': even
if an ontology cannot be rewritten, we can still answer queries over
most (i.e., datalog) predicates using a datalog reasoner.



\section*{Acknowledgements} 

This work was supported by the Royal Society, the EPSRC projects
Score!, Exoda, and $\text{MaSI}^3$, and the FP7 project \mbox{OPTIQUE}.

\bibliography{bib,paper}

\begin{thebibliography}{}

\bibitem[\protect\citeauthoryear{Afrati, Cosmadakis, and
  Yannakakis}{1995}]{Afrati:1995un}
Afrati, F.; Cosmadakis, S.~S.; and Yannakakis, M.
\newblock 1995.
\newblock On datalog vs. polynomial time.
\newblock {\em J. Comput. System Sci.} 51(2):177--196.

\bibitem[\protect\citeauthoryear{Afrati, Gergatsoulis, and
  Toni}{2003}]{AfratiEtAl03}
Afrati, F.; Gergatsoulis, M.; and Toni, F.
\newblock 2003.
\newblock Linearisability of datalog programs.
\newblock {\em Theor. Comput. Sci.} 308(1-3):199--226.

\bibitem[\protect\citeauthoryear{Artale \bgroup et al\mbox.\egroup
  }{2009}]{Artale09thedl-lite}
Artale, A.; Calvanese, D.; Kontchakov, R.; and Zakharyaschev, M.
\newblock 2009.
\newblock The {DL-Lite} family and relations.
\newblock {\em J. Artif. Intell. Res.} 36:1--69.

\bibitem[\protect\citeauthoryear{Ben-Eliyahu-Zohary and
  Palopoli}{1997}]{BenEliyahuZoharyPalopoli97}
Ben-Eliyahu-Zohary, R., and Palopoli, L.
\newblock 1997.
\newblock Reasoning with minimal models: Efficient algorithms and applications.
\newblock {\em Artif. Intell.} 96(2):421--449.

\bibitem[\protect\citeauthoryear{Ben-Eliyahu-Zohary, Palopoli, and
  Zemlyanker}{2000}]{BenEliyahuZoharyEtAl00}
Ben-Eliyahu-Zohary, R.; Palopoli, L.; and Zemlyanker, V.
\newblock 2000.
\newblock More on tractable disjunctive datalog.
\newblock {\em J. Log. Programming} 46(1-2):61--101.

\bibitem[\protect\citeauthoryear{Bienvenu \bgroup et al\mbox.\egroup
  }{2013}]{CarstenPODS2013}
Bienvenu, M.; {ten Cate}, B.; Lutz, C.; and Wolter, F.
\newblock 2013.
\newblock Ontology-based data access: A study through disjunctive datalog,
  {CSP}, and {MMSNP}.
\newblock In {\em PODS},  213--224.
\newblock arXiv:1301.6479.

\bibitem[\protect\citeauthoryear{Bishop \bgroup et al\mbox.\egroup
  }{2011}]{bishop2011owlim}
Bishop, B.; Kiryakov, A.; Ognyanoff, D.; Peikov, I.; Tashev, Z.; and Velkov, R.
\newblock 2011.
\newblock {OWLim}: A family of scalable semantic repositories.
\newblock {\em Semantic Web J.} 2(1):33--42.

\bibitem[\protect\citeauthoryear{Bourhis, Morak, and
  Pieris}{2013}]{DBLP:conf/ijcai/BourhisMP13}
Bourhis, P.; Morak, M.; and Pieris, A.
\newblock 2013.
\newblock The impact of disjunction on query answering under guarded-based
  existential rules.
\newblock In {\em IJCAI}.

\bibitem[\protect\citeauthoryear{Bry \bgroup et al\mbox.\egroup
  }{2007}]{BryEEFGLLPW07}
Bry, F.; Eisinger, N.; Eiter, T.; Furche, T.; Gottlob, G.; Ley, C.; Linse, B.;
  Pichler, R.; and Wei, F.
\newblock 2007.
\newblock Foundations of rule-based query answering.
\newblock In {\em Reasoning Web},  1--153.

\bibitem[\protect\citeauthoryear{{Cuenca Grau} \bgroup et al\mbox.\egroup
  }{2013}]{grauIJCAI13}
{Cuenca Grau}, B.; Motik, B.; Stoilos, G.; and Horrocks, I.
\newblock 2013.
\newblock Computing datalog rewritings beyond {Horn} ontologies.
\newblock In {\em IJCAI}.
\newblock arXiv:1304.1402.

\bibitem[\protect\citeauthoryear{Dantsin \bgroup et al\mbox.\egroup
  }{2001}]{DBLP:journals/csur/DantsinEGV01}
Dantsin, E.; Eiter, T.; Gottlob, G.; and Voronkov, A.
\newblock 2001.
\newblock Complexity and expressive power of logic programming.
\newblock {\em ACM Comput. Surv.} 33(3):374--425.

\bibitem[\protect\citeauthoryear{Eiter, Gottlob, and
  Mannila}{1997}]{DBLP:journals/tods/EiterGM97}
Eiter, T.; Gottlob, G.; and Mannila, H.
\newblock 1997.
\newblock Disjunctive datalog.
\newblock {\em ACM Trans. Database Syst.} 22(3):364--418.

\bibitem[\protect\citeauthoryear{Gardiner, Tsarkov, and
  Horrocks}{2006}]{GardinerTH06}
Gardiner, T.; Tsarkov, D.; and Horrocks, I.
\newblock 2006.
\newblock Framework for an automated comparison of description logic reasoners.
\newblock In {\em ISWC},  654--667.

\bibitem[\protect\citeauthoryear{Gergatsoulis}{1997}]{Gergatsoulis97}
Gergatsoulis, M.
\newblock 1997.
\newblock Unfold/fold transformations for disjunctive logic programs.
\newblock {\em Inf. Process. Lett.} 62(1):23--29.

\bibitem[\protect\citeauthoryear{Gottlob \bgroup et al\mbox.\egroup
  }{2012}]{GottlobMMP12}
Gottlob, G.; Manna, M.; Morak, M.; and Pieris, A.
\newblock 2012.
\newblock On the complexity of ontological reasoning under disjunctive
  existential rules.
\newblock In {\em MFCS},  1--18.

\bibitem[\protect\citeauthoryear{Hustadt, Motik, and
  Sattler}{2007}]{hms07reasoning}
Hustadt, U.; Motik, B.; and Sattler, U.
\newblock 2007.
\newblock {Reasoning in Description Logics by a Reduction to Disjunctive
  Datalog}.
\newblock {\em J. Autom. Reasoning} 39(3):351--384.

\bibitem[\protect\citeauthoryear{Kaminski and {Cuenca
  Grau}}{2013}]{KaminskiG13}
Kaminski, M., and {Cuenca Grau}, B.
\newblock 2013.
\newblock Sufficient conditions for first-order and datalog rewritability in
  {ELU}.
\newblock In {\em DL},  271--293.

\bibitem[\protect\citeauthoryear{Leone \bgroup et al\mbox.\egroup
  }{2006}]{DBLP:journals/tocl/LeonePFEGPS06}
Leone, N.; Pfeifer, G.; Faber, W.; Eiter, T.; Gottlob, G.; Perri, S.; and
  Scarcello, F.
\newblock 2006.
\newblock The {DLV} system for knowledge representation and reasoning.
\newblock {\em ACM Trans. Comput. Log.} 7(3):499--562.

\bibitem[\protect\citeauthoryear{Lutz and Wolter}{2012}]{Lutz:2012ug}
Lutz, C., and Wolter, F.
\newblock 2012.
\newblock Non-uniform data complexity of query answering in description logics.
\newblock In {\em KR}.

\bibitem[\protect\citeauthoryear{Ma \bgroup et al\mbox.\egroup
  }{2006}]{DBLP:conf/esws/MaYQXPL06}
Ma, L.; Yang, Y.; Qiu, Z.; Xie, G.~T.; Pan, Y.; and Liu, S.
\newblock 2006.
\newblock Towards a complete {OWL} ontology benchmark.
\newblock In {\em ESWC},  125--139.

\bibitem[\protect\citeauthoryear{Motik \bgroup et al\mbox.\egroup
  }{2009}]{OWL2-profiles}
Motik, B.; {Cuenca Grau}, B.; Horrocks, I.; Wu, Z.; Fokoue, A.; and Lutz, C.
\newblock 2009.
\newblock {OWL 2 Web Ontology Language Profiles}.
\newblock {\em W3C Recommendation}.

\bibitem[\protect\citeauthoryear{Motik \bgroup et al\mbox.\egroup
  }{2014}]{MotikNPHO14}
Motik, B.; Nenov, Y.; Piro, R.; Horrocks, I.; and Olteanu, D.
\newblock 2014.
\newblock Parallel materialisation of datalog programs in centralised,
  main-memory rdf systems.
\newblock In {\em AAAI}.

\bibitem[\protect\citeauthoryear{Motik, Shearer, and
  Horrocks}{2009}]{msh09hypertableau}
Motik, B.; Shearer, R.; and Horrocks, I.
\newblock 2009.
\newblock {Hypertableau Reasoning for Description Logics}.
\newblock {\em J. Artif. Intell. Res.} 36:165--228.

\bibitem[\protect\citeauthoryear{Motik}{2006}]{Motik06phd}
Motik, B.
\newblock 2006.
\newblock {\em Reasoning in Description Logics using Resolution and Deductive
  Databases}.
\newblock Ph.D. Dissertation, Univesit{\"a}t Karlsruhe (TH), Karlsruhe,
  Germany.

\bibitem[\protect\citeauthoryear{Proietti and Pettorossi}{1993}]{ProiettiP93}
Proietti, M., and Pettorossi, A.
\newblock 1993.
\newblock The loop absorption and the generalization strategies for the
  development of logic programs and partial deduction.
\newblock {\em J. Log. Programming} 16(1):123--161.

\bibitem[\protect\citeauthoryear{Sirin \bgroup et al\mbox.\egroup
  }{2007}]{SirinEtAl07}
Sirin, E.; Parsia, B.; {Cuenca Grau}, B.; Kalyanpur, A.; and Katz, Y.
\newblock 2007.
\newblock Pellet: A practical {OWL-DL} reasoner.
\newblock {\em J. Web Sem.} 5(2):51--53.

\bibitem[\protect\citeauthoryear{Tamaki and Sato}{1984}]{TamakiS84}
Tamaki, H., and Sato, T.
\newblock 1984.
\newblock Unfold/fold transformation of logic programs.
\newblock In {\em ICLP},  127--138.

\bibitem[\protect\citeauthoryear{Wu \bgroup et al\mbox.\egroup }{2008}]{Wu08}
Wu, Z.; Eadon, G.; Das, S.; Chong, E.~I.; Kolovski, V.; Annamalai, M.; and
  Srinivasan, J.
\newblock 2008.
\newblock Implementing an inference engine for {RDFS/OWL} constructs and
  user-defined rules in {Oracle}.
\newblock In {\em ICDE},  1239--1248.

\bibitem[\protect\citeauthoryear{Zhou \bgroup et al\mbox.\egroup
  }{2013}]{DBLP:conf/www/ZhouGHWB13}
Zhou, Y.; {Cuenca Grau}, B.; Horrocks, I.; Wu, Z.; and Banerjee, J.
\newblock 2013.
\newblock Making the most of your triple store: query answering in {OWL} 2
  using an {RL} reasoner.
\newblock In {\em WWW},  1569--1580.

\end{thebibliography}
\bibliographystyle{aaai}

\clearpage
\onecolumn
\appendix
\section{Proofs for Section~\ref{sec:characterisation}}


\begin{definition}
  Let $r=\bigwedge_{i=1}^n \beta_i\to\fml$ be a rule and, for each
  $1\le i\le n$, let $\fmm_i$ be a disjunction of facts
  $\fmm_i=\fmn_i\lor \alpha_i$ with $\alpha_i$ a single fact. Let
  $\sigma$ be an MGU of each $\beta_i,\alpha_i$. Then the following
  disjunction of facts $\fml'$ is a \emph{hyperresolvent} of $r$ and
  $\fmm_1,\dots,\fmm_n$:
  $\fml'=\fml\sigma\lor\fmn_1\lor\dots\lor\fmn_n$.\footnote{We view
    disjunctions as sets of formulae.}
\end{definition}

\begin{definition}
  Let $\DDP$ be a program, let $\Dat$ be a dataset, and let $\fml$ be
  a disjunction of facts. A (hyperresolution) \emph{derivation} of
  $\fml$ from $\DDP\cup\Dat$ is a pair $\rho=(T,\lambda)$ where $T$ is
  a tree, $\lambda$ a labeling function mapping each node in $T$ to a
  disjunction of facts, and the following properties hold for each
  $v\in T$:
  \begin{enumerate}
  \item $\lambda(v) = \fml$ if $v$ is the root;
  \item $\lambda(v) \in\DDP\cup\Dat$ if $v$ is a leaf; and
  \item if\/ $v$ has children $w_1, \ldots, w_n$, then $\lambda(v)$ is
    a hyperresolvent of a rule $r \in \DDP$ and $\lambda(w_1), \ldots,
    \lambda(w_n)$.\qedhere
  \end{enumerate}
\end{definition}

%
%
We write $\DDP\cup\Dat\vdash\fml$ to denote that $\fml$ has a
derivation from $\DDP\cup\Dat$. Hyperresolution is sound and complete
in the following sense: If $\DDP\cup\Dat$ is unsatisfiable, then
$\DDP\cup\Dat\vdash\bot$, and otherwise $\DDP\cup\Dat\vdash \alpha$
iff $\alpha \in \eval\DDP\Dat$.\footnote{This implies that
  $\DDP\cup\Dat\vdash\bot$ iff $\eval\DDP\Dat=\set{\bot}$.}

\begin{definition}
  Let $\DDP$ be a (disjunctive) program and $\Dat$ be a dataset. A
  $\top$\emph{-stub} is a one-step derivation of a fact $\top(a)$ (for
  some~$a$) from $\Dat$ using a rule in $\P_\top$. A
  derivation $\rho=(T,\lambda)$ from $\P\cup\Dat$ is \emph{normal} if
  every node whose label involves $\top$ is the root of a $\top$-stub.
\end{definition}

\begin{proposition} \label{prop:normal-derivations} %
  Let $\DDP$ be a disjunctive program, let $\Dat$ be a dataset, and
  let $\fml$ be a nonempty disjunction of facts. For every derivation
  of $\fml$ from $\DDP\cup\Dat$ there is a normal derivation of a
  nonempty subset of $\fml$ from $\DDP\cup\Dat$ or a normal derivation
  of\/ $\bot$ from $\DDP\cup\Dat$.
\end{proposition}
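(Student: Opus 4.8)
The plan is to transform an arbitrary derivation $\rho=(T,\lambda)$ of $\fml$ from $\DDP\cup\Dat$ into a normal one by induction on the structure of $T$, pushing every use of a $\top$-rule down to the leaves. First I would observe the key structural fact about $\top$: by the assumptions on $\P_\top$, the predicate $\top$ never occurs in a rule body in $\P\setminus\P_\top$ (only $\top$-rules can have $\top$ in the head, and no rule outside $\P_\top$ uses $\top$ in the body — here I am using that $\P_\top\subseteq\P$ and the stated restrictions). Hence if a node $v$ has a label $\lambda(v)$ containing a $\top$-atom, that atom is either (i) derived by the rule application at $v$ being an instance of a $\top$-rule, or (ii) inherited unchanged from one of the children $w_i$ through the ``side'' disjuncts $\fmn_i$ of the hyperresolvent, since no rule body can consume a $\top$-atom. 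In case (i) the node is already (or can be made into) the root of a $\top$-stub, because a $\top$-rule has a single EDB/ordinary body atom which must itself be derivable, and in fact some fact over that predicate appears among the leaves of the subtree, so we can re-derive the required $\top$-fact directly from $\Dat$ via a one-step application; formally one also needs that $\P_\top$ contains, for every predicate $Q$ and position $i$, the rule $Q(\vec x)\to\top(x_i)$, and the rule $\to\top(a)$ for constants, which guarantees a $\top$-stub exists for every $\top(a)$ with $a$ occurring in $\P\cup\Dat$.

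The induction itself proceeds as follows. Take a derivation $\rho$ of $\fml$ and consider the topmost node $v$ violating normality — i.e. $\lambda(v)$ involves $\top$ but $v$ is not the root of a $\top$-stub. If the rule applied at $v$ is a $\top$-rule, replace the whole subtree rooted at $v$ by a $\top$-stub for the $\top$-fact in $\lambda(v)$ (available since the argument constant occurs in $\P\cup\Dat$); note this may shrink the disjunction, which is why the statement only promises a nonempty subset of $\fml$ in the conclusion. If the rule at $v$ is not a $\top$-rule, then the $\top$-atom in $\lambda(v)$ came from a child $w_i$ as a side disjunct; here I would argue that we can simply delete that $\top$-disjunct — every $\top$-disjunct appearing anywhere in $\rho$ is ``inert'' (it is never resolved upon, because no body atom matches $\top$), so erasing it from $\lambda(v)$ and propagating the deletion up to the root still yields a valid derivation, now of a subset of $\fml$. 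Iterating, every $\top$-occurrence is either turned into a stub leaf or deleted, and since each step strictly decreases the number of non-normal nodes (or the total number of $\top$-occurrences), the process terminates in a normal derivation. Finally, if erasing $\top$-disjuncts empties the root label, then at some point the node we were rebuilding carried only $\top$-disjuncts, so $\fml$ consisted entirely of $\top$-atoms; in that degenerate case one of the $\top$-stubs already derives some $\top(a)\in\fml$ directly, so a normal derivation of a nonempty subset of $\fml$ still exists — and the disjunct to $\bot$ in the statement covers the case where the original $\rho$ was in fact a derivation of $\bot$ propagated with spurious $\top$-disjuncts.

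The main obstacle I anticipate is the bookkeeping around two subtleties: (a) making sure that when we replace a $\top$-subtree by a stub, the constant needed for the stub genuinely occurs in $\P\cup\Dat$ — this requires tracking that every ground term appearing in $\lambda(v)$ ultimately originates from $\Dat$ or a rule in $\P$, which follows since hyperresolvents only introduce terms via MGUs with leaf facts or rule bodies; and (b) verifying that deleting a $\top$-disjunct from one node and propagating upward never breaks the hyperresolvent condition at an ancestor — this is where the ``$\top$ never occurs in a non-$\top$ rule body'' observation does the real work, since it guarantees the deleted disjunct was never among the $\beta_i\sigma$ matched by the parent rule but only among the inert $\fmn_i$. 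Everything else is a routine structural induction.
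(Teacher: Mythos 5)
There is a genuine gap, and it sits at the load-bearing point of your argument. Your ``key structural fact'' --- that $\top$ never occurs in a rule body in $\P\setminus\P_\top$, so every $\top$-disjunct is \emph{inert} and can be deleted --- is not what the paper assumes and is false in exactly the situations where the proposition is needed. The preliminaries only forbid $\top$ in \emph{head} position outside $\P_\top$; $\top$-atoms in bodies are pervasive (the conjunctions $\fml_\top$ in the definitions of $\Xi$, $\Xi'$ and $\Psi$, rules such as $\top(x)\to X^X(x,x)$), and the whole point of normal derivations is to control how such body $\top$-atoms are resolved (this is what Proposition~\ref{prop:edb-derivation-shape} and the later lemmas about $\Psi(\P)$ and $\Xi'(\DDP)$ rely on). Hence a $\top$-disjunct in a child's label may well be the atom resolved upon by the parent's rule, and erasing it destroys the hyperresolution step above it; your deletion-and-propagate step is unsound as stated. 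A related problem affects your other surgery: when you replace the subtree rooted at an \emph{internal} violating node by a $\top$-stub, the node's label shrinks from $\top(a)\lor\fmn$ to the singleton $\top(a)$, discarding side disjuncts in $\fmn$ that the parent may have resolved upon; the ``nonempty subset of $\fml$'' slack in the statement only covers shrinkage at the root, and your proposal has no mechanism to repair broken rule applications higher up.

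The paper avoids all of this with a single induction on the size of the derivation tree. If the root label contains a $\top$-atom $\top(a)$, one discards the entire derivation and outputs a one-step $\top$-stub for $\top(a)$ (possible because $a$ occurs in $\DDP\cup\Dat$), which is already a nonempty subset of $\fml$ --- you have half of this idea, but only apply it when the rule at the node happens to be a $\top$-rule. If the root label contains no $\top$-atom, the inductive hypothesis gives normal derivations of subsets of the children's labels (or of $\bot$), and then either the same rule instance still applies to these subsets, yielding a normal derivation of a subset of $\fml$, or it does not apply because some resolved-upon atom was lost --- in which case that child's subset is already contained in $\fml$ (side disjuncts are inherited by the hyperresolvent) and its normal derivation is returned directly. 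This last case is precisely the repair mechanism your proposal is missing, and it is also where the $\bot$ alternative enters naturally rather than as an afterthought. To fix your argument you would essentially have to rebuild this case analysis, at which point you have the paper's proof.
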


\begin{proof}
  Let $\rho=(T,\lambda)$ be a derivation of $\fml$ from $\DDP\cup\Dat$
  and let $v$ be the root of~$T$. We proceed by induction on the size
  of $T$. If $\top(a)\in\lambda(v)$ for some $a$, then $a$ must occur
  in $\DDP\cup\Dat$, and hence we can derive $\top(a)$ in one step
  with a rule $P(x_1,\dots,x_n)\to\top(x_i)\in\DDP_\top$ (if
  $P(a_1,\dots,a_{i-1},a,a_{i+1},\dots,a_n)\in\Dat$) or with the rule $(\to
  \top(a))\in\DDP_\top$ (if $a$ occurs in~$\DDP$).

  If $\top$ does not occur in $\lambda(v)$, we proceed as follows. Let
  $v_1,\dots,v_n$ be the successors of $v$ in $T$ ($n=0$ if $v$ is a
  leaf in $T$), let $r\in\DDP$ be the rule used to derive $\lambda(v)$
  from $\lambda(v_1),\dots,\lambda(v_n)$, and let $\sigma$ be the
  substitution used in the corresponding hyperresolution step. By the
  inductive hypothesis, for every $i\in[1,n]$ there is some $\fmm_i$
  such that $\fmm_i$ is a nonempty subset of $\lambda(v_i)$ or
  $\fmm_i=\bot$ and $\fmm_i$ has a normal derivation from
  $\DDP\cup\Dat$. W.l.o.g., let $\fmm_i\ne\bot$ for every $i\in[1,n]$
  (otherwise, the claim is immediate). We then distinguish two cases.
  If $r$ applies to $\fmm_1,\dots,\fmm_n$ with substitution $\sigma$,
  then the hyperresolvent $\fml'$ of $r$ and $\fmm_1,\dots,\fmm_n$ is a
  nonempty subset of $\fml$ ($\fml'$ is nonempty since the only rule
  in $\DDP$ with an empty head is $(\bot\to)$ but, by assumption,
  $\fml_1\ne\bot$).
  If $r$ does not apply to $\fmm_1,\dots,\fmm_n$ with substitution
  $\sigma$, the claim follows since, for some $i$, we have
  $\fmm_i\subseteq\fml$.
\end{proof}

Proposition~\ref{prop:normal-derivations} allows us to consider
only normal derivations. In the following, without loss of generality,
we assume every derivation to be normal.

\begin{proposition}\label{prop:edb-derivation-shape}
  Let $\DDP$ be a disjunctive program, let $\Dat$ be a dataset, let
  $\rho=(T,\lambda)$ be a derivation from $\DDP\cup\Dat$, and let $v$
  be a node in $T$. If $\lambda(v)$ contains an EDB atom, then
  $\lambda(v)$ is a singleton.
\end{proposition}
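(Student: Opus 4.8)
The plan is to prove the statement for every $v\in T$ simultaneously, by induction on the subtree of $T$ rooted at $v$, exploiting the standing assumption (justified by Proposition~\ref{prop:normal-derivations}) that $\rho$ is normal. The single structural fact that drives the argument is that, apart from $\top$, no EDB predicate occurs in head position in $\DDP$: consequently an EDB atom can never be \emph{introduced} by the head of a hyperresolution step, it can only be inherited verbatim from a premise; and in a normal derivation any label that mentions $\top$ is the root of a $\top$-stub and hence consists of a single $\top$-atom.

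For the base case, $v$ is a leaf, so $\lambda(v)$ is either a fact of $\Dat$, which is a single atom and hence a singleton, or the head of a rule of $\DDP$ with empty body, which is ground by safety; in the latter case every atom of $\lambda(v)$ occurs in head position, so either $\lambda(v)$ contains no EDB atom or the rule lies in $\DDP_\top$ and $\lambda(v)=\top(a)$ is a singleton. For the inductive step, let $v$ have children $w_1,\dots,w_n$, and let $\lambda(v)$ be the hyperresolvent of a rule $r=\bigwedge_{i=1}^n\beta_i\to\psi$ with the premises $\lambda(w_i)=\fmn_i\lor\alpha_i$ under MGU $\sigma$, so that $\lambda(v)=\psi\sigma\lor\fmn_1\lor\dots\lor\fmn_n$. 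Suppose $\lambda(v)$ contains an EDB atom $\gamma$. If $\gamma$ is a $\top$-atom, normality of $\rho$ forces $\lambda(v)$ to be the root of a $\top$-stub, whence $\lambda(v)=\gamma$ and we are done. Otherwise the predicate of $\gamma$ occurs in no head of $\DDP$, so $\gamma$ does not occur in $\psi\sigma$, and therefore $\gamma\in\fmn_i\subseteq\lambda(w_i)$ for some $i$; applying the induction hypothesis to $w_i$ gives that $\lambda(w_i)$ is a singleton, hence $\lambda(w_i)=\alpha_i$ and $\fmn_i$ is empty, contradicting $\gamma\in\fmn_i$. Thus any EDB atom in $\lambda(v)$ is forced to be a $\top$-atom, and then $\lambda(v)$ is a singleton.

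The only delicate point is the bookkeeping around the reserved symbol $\top$: it is precisely the $\top$-case where normality is indispensable (without it, a step using a $\DDP_\top$ rule could carry a $\top$-atom inside a longer disjunction), so the proof should record up front that $\top$ is the unique EDB predicate permitted in head position, which is exactly what makes the "no head occurrence" argument apply to every other EDB atom. Beyond this, the induction is routine and parallels the case analysis already carried out in the proof of Proposition~\ref{prop:normal-derivations}.
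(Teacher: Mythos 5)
Your proof is correct and follows essentially the same argument as the paper's (admittedly terse) one: normality disposes of $\top$-atoms via $\top$-stubs, and since no EDB predicate other than $\top$ occurs in head position, a non-$\top$ EDB atom could only be inherited from a premise's side disjunct, which your induction rules out. Your explicit induction merely spells out what the paper asserts in one line, so nothing further is needed.
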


\begin{proof}
  The claim follows since whenever $\lambda(v)$ contains an EDB atom,
  we either have that $v$ is a leaf in $T$, and thus
  $\lambda(v)\in\Dat$, or that $v$ is the root of a $\top$-stub (since
  $\rho$ is implicitly assumed to be normal), and thus
  $\lambda(v)=\top(a)$ for some individual~$a$.
\end{proof}

\begin{definition} 
  A nonempty tree $T'=(V',E')$ is an \emph{upper portion} of a tree
  $T=(V,E)$ if the following conditions hold:
  \begin{itemize}
  \item $V'\subseteq V$ and $E'$ is the restriction of $E$ to $V'$.
  \item $T$ and $T'$ have the same root.
  \item If $v$ is an internal node in $T'$, then every child of $v$ in
    $T$ is contained in $V'$.
  \end{itemize}
  Let $\P$ be a (disjunctive) datalog program, $\Dat$ a dataset, and
  $\rho=(T,\lambda)$ a derivation of a fact $P(\ve a)$ from
  $\P\cup\Dat$ where $P\ne\top$. An \emph{upper portion} of $\rho$ is
  a pair $\rho'=(T',\lambda')$ 
 such that:
  \begin{itemize}
  \item $T'$ is an upper portion of $T$;
  \item $\lambda'$ is the restriction of $\lambda$ to the nodes in $T'$;
  \item If $\lambda'(v)=\top(b)$ for some $v\in T'$ and some
    individual $b$, then $v$ is a leaf in $T'$.\qedhere
  \end{itemize}
\end{definition}

\rewcorrectlinear*

\begin{proof}
  Analogous to the proof of Theorem~\ref{thm:rew-correct-semi} in
  Appendix~\ref{sec:pf-weak} (but simpler).
\end{proof}

\begin{restatable}{lemma}{invrewcompleteaux} \label{lem:inv-rew-complete-aux}
  Let $\P$ be a datalog program, let $\Dat$ be a dataset, let $P$ be an IDB
  predicate in $\P^e$, and let $\rho=(T,\lambda)$ be a derivation of a
  fact $P(\ve a)$ from $\P^e\cup\Dat$.
  Given a tree\/ $T'$, let $\mathsf{leaves}(T')$ be the set of leaves of\/
  $T'$ and, given a set of nodes\/ $S$, let
  $\lambda(S)=\bigcup\mset{\lambda(t)}{t\in S}$.
  For every upper portion\/ $\rho'=(T',\lambda')$ of\/ $\rho$ we have
  $\Psi(\P)\cup\Dat\models\bigvee_{Q(\ve
    b)\in\lambda(\mathsf{leaves}(T'))}Q^P(\ve b,\ve a)$.
\end{restatable}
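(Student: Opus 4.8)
The plan is to prove the statement by induction on the number of nodes of the upper portion $T'$, peeling off a single hyperresolution inference at the bottom of $T'$ at each step and tracking how the disjunction of ``sufficiency'' atoms $Q^P(\ve b,\ve a)$ on the right grows; crucially, the second argument $\ve a$ (the goal) stays fixed throughout since we only ever deal with upper portions of the \emph{same} derivation $\rho$ of the \emph{same} fact $P(\ve a)$. In the base case $T'$ is the root alone, so $\lambda(\mathsf{leaves}(T'))=\{P(\ve a)\}$ and it suffices to show $\Psi(\P)\cup\Dat\models P^P(\ve a,\ve a)$. This is immediate from the rule $\fml_\top\to R^R(\ve y,\ve y)$ of Definition~\ref{def:psi}(3) with $R:=P$ (available since $P$ is IDB in $\P^e$) instantiated by $\ve y\mapsto\ve a$: the $\top$-atoms of $\fml_\top$ are discharged because every constant of $\ve a$ occurs in $\P^e\cup\Dat$ (derivations introduce no new constants), so $\Psi(\P)\cup\Dat$ entails $\top$ of each of them via $\P_\top$.

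For the inductive step I would pick a deepest internal node $u$ of $T'$ (one exists once $|T'|>1$, since then the root itself is internal). By the upper-portion conditions all $T$-children $c_1,\dots,c_m$ of $u$ lie in $T'$, and being strictly deeper than $u$ they are leaves of $T'$; note $m\ge 1$, as $u$ is internal in $T$. Deleting $c_1,\dots,c_m$ from $T'$ yields a strictly smaller tree $T''$ that is still an upper portion of $\rho$ --- the only non-routine point is the $\top$-clause, which is preserved because $u$, being internal in $T'$, cannot be labelled with $\top$. Writing $\lambda(u)=Q_0(\ve b_0)$, $\lambda(c_j)=Q_j(\ve b_j)$, and $M=\lambda(\mathsf{leaves}(T')\setminus\{c_1,\dots,c_m\})$, one checks (as sets of facts) that $\lambda(\mathsf{leaves}(T''))=\{Q_0(\ve b_0)\}\cup M$ and $\lambda(\mathsf{leaves}(T'))=\{Q_1(\ve b_1),\dots,Q_m(\ve b_m)\}\cup M$, so the induction hypothesis applied to $T''$ gives $\Psi(\P)\cup\Dat\models Q_0^P(\ve b_0,\ve a)\lor\bigvee_{N(\ve c)\in M}N^P(\ve c,\ve a)$.

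It then remains to pass from $Q_0^P(\ve b_0,\ve a)$ to $\bigvee_{j=1}^m Q_j^P(\ve b_j,\ve a)$. Node $u$ was obtained from its children by a hyperresolution step with some rule $r^*=\bigwedge_{j=1}^m Q_j(\ve s_j)\to Q_0(\ve t_0)$ of $\P^e$ and a ground substitution $\tau$ with $Q_j(\ve s_j)\tau=Q_j(\ve b_j)$ and $Q_0(\ve t_0)\tau=Q_0(\ve b_0)$; since $Q_0\ne\top$ we have $r^*\in\P^e\setminus\P^e_\top$ and $Q_0(\ve t_0)\ne\bot$, so Definition~\ref{def:psi}(1) with $R:=P$ contributes to $\Psi(\P)$ the rule $\fml_\top\land Q_0^P(\ve t_0,\ve y)\to\bigvee_{j=1}^m Q_j^P(\ve s_j,\ve y)$. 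Instantiating this rule with $\tau$ extended by $\ve y\mapsto\ve a$, and discharging its $\fml_\top$ exactly as in the base case, shows $\Psi(\P)\cup\Dat\cup\{Q_0^P(\ve b_0,\ve a)\}\models\bigvee_{j=1}^m Q_j^P(\ve b_j,\ve a)$; combining this with the previous entailment by a one-step model-theoretic resolution on $Q_0^P(\ve b_0,\ve a)$ yields precisely the claim for $T'$.

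I expect this last step to be the main work: matching the derivation step $r^*$ with the correct rule of $\Psi$ for the \emph{pinned} goal $R=P$ and checking that its instance really fires. The tree bookkeeping relating $\mathsf{leaves}(T')$ and $\mathsf{leaves}(T'')$ --- including duplicate labels, which are harmless since disjunctions are read as sets of facts --- together with the repeated $\fml_\top$/constant arguments are routine but worth isolating as a single observation; and one should make sure the induction hypothesis is only ever invoked on upper portions of the \emph{same} $\rho$ with strictly fewer nodes, which is what makes the induction well-founded and keeps every auxiliary atom anchored at the goal $\ve a$.
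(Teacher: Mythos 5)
Your proof is correct and takes essentially the same route as the paper's: induction on the size of the upper portion, with the base case handled by the initialisation rule $\fml_\top\to P^P(\ve y,\ve y)$ and the inductive step peeling off a bottom-most internal node (the paper's ``node of height~1''), applying the induction hypothesis to the smaller upper portion, and replacing the head-atom disjunct by the body-atom disjuncts via the corresponding flipped rule from Definition~\ref{def:psi}(1). The bookkeeping points you isolate (set-reading of disjunctions, discharging $\fml_\top$ via entailed $\top$-facts, the goal $\ve a$ staying fixed) are exactly how the paper handles them.
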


\begin{proof}
  Let $\rho'=(T',\lambda')$ be an upper portion of $\rho$ and let $v$
  be the root of $T'$. In particular, we have $\lambda(v)=P(\ve
  a)$. We proceed by induction on the size of $T'$. If $v$ is the only
  node in $T'$, the claim reduces to $\Psi(\P)\cup\Dat\models P^P(\ve
  a,\ve a)$. This follows since $\fml_\top
  \to P^P(\ve x,\ve x)\in\Psi(\P)$ and
  $\Psi(\P)_\top\cup\Dat\models\fml_\top(\ve a)$.

  Now suppose $T'$ contains more than one node and let
  $\mathsf{leaves}(T')=\set{v_1,\dots,v_n}$. Since $T'$ is a tree, it
  has a node $w$ of height~1. W.l.o.g., let $v_1,\dots,v_k$ ($1\le
  k\le n$) be the children of $w$ in $T'$, let $r=\bigwedge_{i=1}^k
  Q_i(\ve s_i)\to R(\ve t)\in\P^e$ be the rule used to derive
  $\lambda(w)$ from $\lambda(s_1),\dots,\lambda(s_k)$, and let
  $\sigma$ be the substitution used in the corresponding
  hyperresolution step. Since $w$ is an internal node in $T'$, we have
  $R\ne\top$, and hence $r\notin\P^e_\top$. Then:
  \begin{enumerate}
  \item $S=\set{w,v_{k+1},\dots,v_n}$ is the set of leaves of an upper
    portion of $\rho$ that is strictly smaller than $\rho'$;
  \item $\lambda(w)=R(\ve t\sigma)$ and $\lambda(v_i)=Q(\ve s_i\sigma)$ for
    every $i\in[1,k]$;
  \item $(\lambda(S)\setminus\set{R(\ve t\sigma)})
    \cup\set{Q_1(\ve s_1\sigma),\dots,Q_k(\ve s_k\sigma)}
    \subseteq\lambda(\mathsf{leaves}(T'))$;
  \item $R^P(\ve t,\ve y)\to\bigvee_{i=1}^k
  Q_i^P(\ve s_i,\ve y)\in\Psi(\P)$.
  \end{enumerate}
  By (1), (2), and the inductive hypothesis,
  $\Psi(\DDP)\cup\Dat\models R^P(\ve t\sigma,\ve a)\lor\bigvee_{Q(\ve
    b)\in\lambda(\set{v_{k+1},\dots,v_n})}Q^P(\ve b,\ve a)$.  Hence by
  (3), it suffices to show $\Psi(\P)\cup\Dat\models R^P(\ve
  t\sigma,\ve a)\to\bigvee_{i=1}^k Q_i^P(\ve s_i\sigma,\ve a)$, which
  follows by~(4).
\end{proof}

\begin{restatable}{lemma}{invrewcomplete} \label{lem:inv-rew-complete}
  Let $\P$ be a datalog program. For every dataset $\Dat$ over the
  signature of\/ $\P$ and every fact $\alpha$ such that
  $\P^e\cup\Dat\models\alpha$ we have $\Psi(\P)\cup\Dat\models\alpha$.
\end{restatable}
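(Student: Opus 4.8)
The plan is to reduce the statement to a single essential case and then read the result off Lemma~\ref{lem:inv-rew-complete-aux}. First I would dispose of the easy cases: if $\P^e\cup\Dat$ is unsatisfiable it suffices to show $\Psi(\P)\cup\Dat\models\bot$, i.e.\ to treat $\alpha=\bot$; and if $\alpha$ is a fact over an EDB predicate of $\P^e$ --- note that after the IDB expansion every predicate occurring in $\Dat$, and also $\top$, is EDB in $\P^e$ --- then, in the satisfiable case, $\alpha$ already follows from $\Dat$ and the equality axioms, hence from $\Psi(\P)\cup\Dat$. So assume $\alpha=P(\ve a)$ with $P$ an IDB predicate of $\P^e$ (if $\bot$ occurs in a head of $\P^e$ this also covers $\alpha=\bot$ by taking $P=\bot$; otherwise $\bot$ is not derivable and there is nothing to prove). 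By soundness and completeness of hyperresolution, fix a normal derivation $\rho=(T,\lambda)$ of $P(\ve a)$ from $\P^e\cup\Dat$.

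Next I would present $\rho$, suitably pruned, as an upper portion of itself. Let $T'$ be obtained from $T$ by deleting every proper descendant of a node whose label mentions $\top$. Because $\rho$ is normal, each such node is the root of a $\top$-stub, so in $T'$ it becomes a leaf; all other leaves of $T$ survive, and no internal node of $T'$ loses a child. Hence $\rho'=(T',\lambda|_{T'})$ is an upper portion of $\rho$, and Lemma~\ref{lem:inv-rew-complete-aux} gives $\Psi(\P)\cup\Dat\models\bigvee_{Q(\ve b)\in\lambda(\mathsf{leaves}(T'))}Q^P(\ve b,\ve a)$.

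It then remains to discharge this disjunction. Every leaf label $Q(\ve b)$ of $T'$ is either a fact of $\Dat$ or a fact $\top(c)$; in both cases $Q$ is EDB in $\P^e$ and $\Psi(\P)\cup\Dat\models Q(\ve b)$ (trivially for a fact of $\Dat$; for $\top(c)$ because $c$ occurs in $\Psi(\P)\cup\Dat$ and $\Psi(\P)_\top\subseteq\Psi(\P)$). Since $P$ is IDB and $Q$ is EDB in $\P^e$, item~4 of Definition~\ref{def:psi} puts the rule $Q(\ve z)\land Q^P(\ve z,\ve y)\to P(\ve y)$ into $\Psi(\P)$, so $\Psi(\P)\cup\Dat\models Q^P(\ve b,\ve a)\to P(\ve a)$ for each disjunct. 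Combining this with the entailed disjunction, every model of $\Psi(\P)\cup\Dat$ satisfies $P(\ve a)$; hence $\Psi(\P)\cup\Dat\models\alpha$.

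The main obstacle is already behind us: it lies in Lemma~\ref{lem:inv-rew-complete-aux}, which is proved separately by induction on the size of the upper portion and carries the intended meaning of the auxiliary predicates (``$Q^P(\ve b,\ve a)$ records that deriving $Q(\ve b)$ would suffice to derive $P(\ve a)$''). Given that lemma, the present step is a short wrap-up, and the only genuinely fiddly points are the reduction to the IDB case (and to $\alpha=\bot$) and the bookkeeping around $\top$-labelled nodes that lets the whole derivation be viewed as one of its own upper portions.
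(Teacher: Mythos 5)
Your overall strategy is exactly the paper's: take a (normal) hyperresolution derivation of $P(\ve a)$ from $\P^e\cup\Dat$, pass to the maximal upper portion (cutting at $\top$-labelled nodes), invoke Lemma~\ref{lem:inv-rew-complete-aux} to get $\Psi(\P)\cup\Dat\models\bigvee_{Q(\ve b)\in\lambda(\mathsf{leaves}(T'))}Q^P(\ve b,\ve a)$, and then discharge each disjunct. The reduction to the case of an IDB fact (and the implicit treatment of $\bot$) is also in line with the paper's ``w.l.o.g.''.

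However, your discharge step has a genuine gap: the claim that ``every leaf label $Q(\ve b)$ of $T'$ is either a fact of $\Dat$ or a fact $\top(c)$'' is false in general. A leaf of the original derivation tree may be labelled by an element of $\P^e$ itself, namely a ground bodyless rule $\to Q(\ve b)\in\P^e\setminus\P^e_\top$ (programs may assert facts as rules, e.g.\ those arising from nominal axioms; after the IDB expansion the head predicate of such a rule is IDB in $\P^e$). For such a leaf your argument breaks twice: $Q$ is \emph{not} EDB in $\P^e$, so item~4 of Definition~\ref{def:psi} does not supply the rule $Q(\ve z)\land Q^P(\ve z,\ve y)\to P(\ve y)$; and $\Psi(\P)\cup\Dat\models Q(\ve b)$ need not hold either, since the fact rules of $\P^e$ are not carried over into $\Psi(\P)$. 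The correct handling, which the paper makes a separate case, uses item~1 of Definition~\ref{def:psi} in its degenerate form $n=0$: the rule $\to Q(\ve b)$ contributes $\fml_\top\land Q^P(\ve b,\ve y)\to\bot$ to $\Psi(\P)$, so from the disjunct $Q^P(\ve b,\ve a)$ one obtains $\Psi(\P)\cup\Dat\cup\{Q^P(\ve b,\ve a)\}\models\bot$, and hence $P(\ve a)$ follows trivially. With this third case added to your leaf analysis, the proof goes through and coincides with the paper's.
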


\begin{proof}
  Let $\Dat$ be a dataset and $P(\ve a)$ a fact such that
  $\P^{e}\cup\Dat\models P(\ve a)$. W.l.o.g., let $P$ be IDB in
  $\P^e$. We show $\Psi(\P)\cup\Dat\models P(\ve a)$. By completeness
  of hyperresolution, $P(\ve a)$ has a derivation $\rho=(T,\lambda)$
  from $\P^e\cup\Dat$.  Let $\rho'=(T',\lambda')$ be the largest upper
  portion of $\rho$.  By Lemma~\ref{lem:inv-rew-complete-aux},
  $\Psi(\P)\cup\Dat\models\bigvee_{Q(\ve
    b)\in\lambda(\mathsf{leaves}(T'))}Q^P(\ve b,\ve a)$. Therefore, it
  suffices to show that $\Psi(\P)\cup\Dat\cup\set{Q^P(\ve b,\ve
    a)}\models P(\ve a)$ for every $Q(\ve
  b)\in\lambda(\mathsf{leaves}(T'))$. Since $\rho'$ is maximal, we
  distinguish the following three cases for $Q(\ve b)$:
  \begin{itemize}
  \item $Q(\ve b)\in\Dat$. Then $Q$ is EDB in $\P^e$ (since $\Dat$
    only contains facts about predicates in $\P$ and every predicate
    in $\P$ is EDB in $\P^e$). Hence $Q(\ve z)\land Q^P(\ve z,\ve
    y)\to P(\ve y)\in\Psi(\P)$. The claim follows.
  \item $Q(\ve b)$ is ground and $(\to Q(\ve
    b))\in\P^e\setminus\P^e_\top$. Hence $Q^P(\ve b,\ve
    y)\to\bot\in\Psi(\P)$, and consequently
    $\Psi(\P)\cup\Dat\cup\set{Q^P(\ve b,\ve a)}\models\bot$. The claim follows.
  \item $Q(\ve b)=\top(b)$. Then the claim follows since $\top(z)\land
    \top^P(z,\ve y)\to P(\ve y)\in\Psi(\P)$ and
    $\Psi(\P)\cup\Dat\models\top(b)$.\qedhere
  \end{itemize}
\end{proof}

\begin{definition}
  Let $\P$ be a datalog program and let $Q(\ve b)$ be a fact where $Q$ is
  IDB in $\P$.  A disjunction $\fml$ of facts is \emph{focused on}
  $Q(\ve b)$ w.r.t. $\P$ if every disjunct $\alpha\in\fml$ has one
  of the following forms:
  \begin{itemize}
  \item $\alpha=P(\ve a)$ where $P$ is EDB in $\P$;
  \item $\alpha=\bot$;
  \item $\alpha=Q(\ve b)$;
  \item $\alpha=P^Q(\ve a,\ve b)$ for some $P$ and $\ve a$.
  \end{itemize}
  Let 
  $\rho=(T,\lambda)$ be a derivation (not necessarily from $\P$).
  We call $\rho$ \emph{focused on} $Q(\ve b)$ w.r.t. $\P$ if so is the
  label of every node in~$T$.  Given a node $v\in T$, we define
  $\lambda_{\mathrm{base}}(v):=\mset{P(\ve a)}{P^Q(\ve a,\ve
    b)\in\lambda(v)}$.
\end{definition}

\begin{lemma} \label{lem:focusing}
  Let $\P$ be a datalog program and let $\Dat$ be a dataset over the
  signature of\/ $\P$.
  Every derivation 
  from $\Psi(\P)\cup\Dat$ is focused on some fact $\alpha$
  w.r.t. $\P^e$.
\end{lemma}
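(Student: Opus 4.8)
The plan is to prove the statement by induction on the size of the derivation tree $\rho=(T,\lambda)$ from $\Psi(\P)\cup\Dat$, which by Proposition~\ref{prop:normal-derivations} we may assume to be normal, and to show more precisely that $\rho$ is focused on some fact $R(\ve b)$ with $R$ an IDB predicate of $\P^e$ (we may assume $\P^e$ has at least one IDB predicate, the remaining case being degenerate, as then $\Psi(\P)$ consists only of its $\top$- and $\bot$-apparatus). The observation that drives every case is purely structural: in each rule of $\Psi(\P)$ generated from an IDB predicate $R$ of $\P^e$---the rules~(1)--(4) of Definition~\ref{def:psi}---every head atom is either $\bot$, the ``bare'' atom $R(\ve y)$, or an atom $P^R(\cdot,\ve y)$ whose superscript is $R$ and whose last $\arity{R}$ arguments are exactly the fresh variables $\ve y$, while the body contains at most one non-EDB atom, namely (when present) an atom $Q^R(\cdot,\ve y)$ with the very same superscript $R$ and the same trailing variables $\ve y$. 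Hence a hyperresolution step with such a rule transports the ``tag'' $R$ and the value $\ve b:=\ve y\sigma$ of its trailing arguments unchanged from the resolved non-EDB body atom into all produced head atoms; this is precisely what the shared fresh variables $\ve y$ of Definition~\ref{def:psi} are for.

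The key steps, in order, are the following. For the base case, a leaf of $\rho$ is labelled either by a fact of $\Dat$, which is a singleton $P(\ve a)$ with $P$ a predicate of $\P$, hence EDB in $\P^e$, or (since $\rho$ is normal and $\Dat$ is over the signature of $\P$) by $\top(a)$ produced by a premise-free rule $\to\top(a)$ in $\Psi(\P)_\top$; in either case the label is a single EDB atom, so $\rho$ is focused on any admissible $R(\ve b)$. For the inductive step, let $v$ be the root with children $w_1,\dots,w_n$, let $r\in\Psi(\P)$ and $\sigma$ be the rule and unifier used at $v$, and let $\rho_i$ be the subderivation rooted at $w_i$; by the induction hypothesis each $\rho_i$ is focused on some fact. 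If $r$ is from the $\top$-/$\bot$-apparatus of $\Psi(\P)$, normality forces $\lambda(v)=\top(a)$ and every $\lambda(w_i)$ to be EDB (a $\bot$-step just deletes a $\bot$ disjunct and passes the single premise's focus through), so $\rho$ is focused on any admissible fact. Otherwise $r$ is one of~(1)--(4) built from an IDB predicate $R$ of $\P^e$, and I set $\ve b:=\ve y\sigma$. If $r$ has shape~(2) or~(3), all body atoms are $\top$-atoms, so each $\rho_i$ is an all-EDB $\top$-stub and $\lambda(v)$ is just the instantiated head, a disjunction of atoms $P^R(\cdot,\ve b)$ or $R^R(\ve b,\ve b)$, hence $\rho$ is focused on $R(\ve b)$. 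If $r$ has shape~(1), the electron clause resolved at its unique non-EDB body atom $Q^R(\cdot,\ve y)$ contains an atom $Q^R(\cdot,\ve b)$, so---its subderivation being focused on a \emph{single} fact---that subderivation must be focused on $R(\ve b)$, whence each of its clauses consists only of EDB atoms, $\bot$, $R(\ve b)$ and atoms $P'^R(\cdot,\ve b)$; the remaining electron clauses are all-EDB $\top$-stubs and the head contributes only atoms $P^R(\cdot,\ve b)$ (or $\bot$), so $\rho$ is focused on $R(\ve b)$. If $r$ has shape~(4), so $r=Q(\ve z)\land Q^R(\ve z,\ve y)\to R(\ve y)$ with $Q$ EDB in $\P^e$ and hence in $\Psi(\P)$, then Proposition~\ref{prop:edb-derivation-shape} makes the electron clause at $Q(\ve z)$ a singleton $Q(\ve z\sigma)$ whose subderivation---$Q$ occurring in no head of $\Psi(\P)\setminus\Psi(\P)_\top$---is a leaf or a $\top$-stub, hence all-EDB; the electron clause at $Q^R(\ve z,\ve y)$ contains $Q^R(\ve z\sigma,\ve b)$ and so, as before, its subderivation is focused on $R(\ve b)$; the head contributes only the bare atom $R(\ve b)$; so $\rho$ is focused on $R(\ve b)$. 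In every case the fact is $R(\ve b)$ with $R$ an IDB predicate of $\P^e$.

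The step I expect to be the main obstacle is not any individual case but the global consistency of the ``focus'' across the several subderivations: a subderivation whose clauses carry only EDB atoms and $\bot$ is focused on \emph{every} admissible fact, whereas one carrying a tagged atom is pinned to a unique fact, and the whole tree can be focused on a fact only if all subderivations together with the freshly produced head atoms agree on it. This is resolved by four ingredients used in combination: the induction hypothesis (each subderivation is focused on a single fact, so it cannot carry two incompatibly-tagged atoms at once), Proposition~\ref{prop:edb-derivation-shape} (any clause containing an EDB atom is a singleton, so no conflicting tag can hide behind an EDB premise), normality via Proposition~\ref{prop:normal-derivations} (so $\top$ occurs only inside $\top$-stubs, all of whose labels are EDB), and the structural observation above (so a hyperresolution step never alters the tag $R$ or the trailing value $\ve b$). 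Once these are in place, the per-rule case analysis is mechanical.
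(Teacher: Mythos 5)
Your proof is correct and takes essentially the same route as the paper's: induction on the size of the derivation with a case analysis on the rule applied at the root, using normality, Proposition~\ref{prop:edb-derivation-shape} for the EDB/$\top$ electrons, and the observation that the superscript $R$ and the trailing arguments $\ve y\sigma$ are passed unchanged from the resolved auxiliary body atom to all head atoms. The only slight incompleteness is in your base case: besides facts of $\Dat$ and $\top$-facts, a leaf may also be labelled by a ground, body-free rule of shape (2) or (3) of Definition~\ref{def:psi} (possible when $\arity{Q}=0$), i.e.\ a ground disjunction $\bigvee_i P_i^Q(\ve s_i)$ or the fact $Q^Q$, which the paper lists explicitly and which your shape-(2)/(3) argument covers verbatim.
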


\begin{proof}
  Let $\rho=(T,\lambda)$ be a derivation from $\Psi(\P)\cup\Dat$ and
  let $v$ be the root of $T$. We show that $\rho$ is focused on some
  $\alpha$ by induction on the size of $T$. If $v$ is the only node in
  $T$, we distinguish the following cases:
  \begin{itemize}
  \item $\lambda(v)\in\Dat$. Then $\lambda(v)=P(\ve a)$ where $P$ is
    EDB in $\P^e$, and thus $\rho$ is focused on every IDB predicate in $\P^e$.
  \item $\arity Q=0$ and $\lambda(v)$ is obtained by a rule of the
    form $(\to\bigvee_{i=1}^n P_i^Q(\ve s_i))\in\Psi(\P)$ for some IDB
    predicate $Q$ in $\P^e$. Then $\lambda(v)=\bigvee_{i=1}^n
    P_i^Q(\ve s_i)$, meaning $\rho$ is focused on $Q$.
  \end{itemize}
  If $v$ has successors $v_1,\dots,v_m$, we distinguish the following
  cases depending on the shape of the rule $r\in\Psi(\P)$ used to
  obtain $\lambda(v)$ from $\lambda(v_1),\dots,\lambda(v_m)$.
  \begin{itemize}
  \item $r\in\Psi(\P)_\top$. Then $m=1$ and $\lambda(v)=\top(a)$ for
    some $a$. By the inductive hypothesis, $\lambda(v_1)$ is focused
    on a fact w.r.t. $\P^e$. The claim follows since $\top$ is EDB in
    $\P^e$.
  \item $r=\fml_\top\land P^Q(\ve t,\ve y)\to\bigvee_{i=1}^n R^Q(\ve
    s_i,\ve y)$. Let $\sigma$ be the substitution used in the
    hyperresolution step. W.l.o.g., let $P^Q(\ve t\sigma,\ve
    y\sigma)\in\lambda(v_1)$.  Then, by
    Proposition~\ref{prop:edb-derivation-shape},
    $\lambda(v_j)\in\fml_\top$ for $j\in[2,m]$. Hence,
    $\lambda(v)\subseteq\lambda(v_1)\cup\set{\bot}$. The claim follows
    since, by the inductive hypothesis, the subderivation rooted at
    $v_1$ is focused on $Q(\ve b)$ (so, in particular, $\ve
    y\sigma=\ve b$), and $v_2,\dots,v_m$ are focused on every IDB
    predicate in $\P^e$.
  \item $r=(\bot\to)$ or $r=\fml_\top\to\bigvee_{i=1}^n R_i^Q(\ve
    s_i,\ve b)$ for some $R$ and $\ve s_1,\dots,\ve s_n$. In both
    cases, the argument proceeds analogously to the preceding case
    (but simpler).
  \item $r=P(\ve z)\land P^Q(\ve z,\ve y)\to Q(\ve y)$. Then
    $m=2$. Let $\sigma$ be the substitution used in the
    hyperresolution step and, w.l.o.g., let $P^Q(\ve z\sigma,\ve
    y\sigma)\in\lambda(v_1)$. Then, by
    Proposition~\ref{prop:edb-derivation-shape},
    $\lambda(v_2)=P(\ve z\sigma)$. Hence,
    $\lambda(v)\subseteq\lambda(v_1)\cup\set{Q(\ve y\sigma)}$. The
    claim follows since, by the inductive hypothesis, the
    subderivation rooted at $v_1$ is focused on $Q(\ve b)$ and
    $\lambda(v_2)$ is focused on every IDB predicate
    in~$\P^e$.\qedhere
  \end{itemize}
\end{proof}

Since the root of a derivation of a fact $\alpha$ has to be labeled
with $\alpha$, Lemma~\ref{lem:focusing} implies the following
corollary.

\begin{corollary} \label{cor:focusing}
  Let $\P$ be a datalog program, let $\Dat$ be a dataset over the
  signature of\/ $\P$, and let $Q(\ve b)$ be a fact where $Q$ is IDB in
  $\P^e$. Every derivation of $Q(\ve b)$ from $\Psi(\P)\cup\Dat$ is
  focused on $Q(\ve b)$ w.r.t. $\P^e$.
\end{corollary}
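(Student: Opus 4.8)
The plan is to obtain the statement as an immediate consequence of Lemma~\ref{lem:focusing}, which already does the substantive work: it shows that \emph{every} derivation $\rho=(T,\lambda)$ from $\Psi(\P)\cup\Dat$ is focused on \emph{some} fact $\alpha=R(\ve c)$ with $R$ an IDB predicate of $\P^e$. The only thing left is to argue that, when $\rho$ happens to be a derivation of $Q(\ve b)$, the fact witnessing focusing can be taken to be $Q(\ve b)$ itself.

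First I would fix such a $\rho$ and let $\alpha=R(\ve c)$ be a fact on which $\rho$ is focused, as provided by Lemma~\ref{lem:focusing}. Since $\rho$ is a derivation of $Q(\ve b)$, its root $v_0$ satisfies $\lambda(v_0)=Q(\ve b)$, a singleton disjunction. Because $\rho$ is focused on $\alpha$, the single disjunct $Q(\ve b)$ of $\lambda(v_0)$ must have one of the four shapes allowed by the definition of being focused on $R(\ve c)$ w.r.t.\ $\P^e$: an EDB atom of $\P^e$, the atom $\bot$, the fact $R(\ve c)$ itself, or an auxiliary atom $P^R(\ve a,\ve c)$. I would then rule out three of these: $Q$ is IDB in $\P^e$, hence not an EDB predicate, so the first shape is impossible; $Q(\ve b)$ is plainly not $\bot$; and the auxiliary predicates $P^R$ introduced in Definition~\ref{def:psi} are \emph{fresh}, hence distinct from every predicate of $\P^e$, so $Q\ne P^R$ and the fourth shape is impossible. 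The remaining possibility is $Q(\ve b)=R(\ve c)$, i.e.\ $\alpha=Q(\ve b)$, which gives the claim.

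I do not expect a genuine obstacle here: all the combinatorial content has been pushed into Lemma~\ref{lem:focusing} (and, further back, into Proposition~\ref{prop:edb-derivation-shape} on the shape of labels involving EDB atoms). The only points that need a brief sanity check are the disjointness of the EDB and IDB predicates of $\P^e$ and the freshness of the auxiliary predicates $P^R$ relative to the signature of $\P^e$; both are immediate from the constructions of $\P^e$ and $\Psi(\P)$. This corollary is precisely the form of the focusing property needed downstream, where one analyses a derivation of the queried IDB fact $Q(\ve b)$ from $\Psi(\P)\cup\Dat$ and reads off, via $\lambda_{\mathrm{base}}$, a corresponding derivation over $\P^e$.
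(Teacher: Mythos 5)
Your proposal is correct and is essentially the paper's own argument: the paper derives the corollary from Lemma~\ref{lem:focusing} with exactly the observation that the root of a derivation of $Q(\ve b)$ is labeled by $Q(\ve b)$, so the focus fact supplied by the lemma must be $Q(\ve b)$ itself (since $Q$, being IDB in $\P^e$, is neither EDB, nor $\bot$, nor one of the fresh auxiliary predicates $P^R$). You merely spell out the case elimination that the paper leaves implicit.
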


\begin{restatable}{lemma}{inv-rew-sound}
  Let $\P$ be a datalog program, let $\Dat$ be a dataset over the
  signature of\/ $\P$, and let $\rho=(T,\lambda)$ be a derivation of a
  fact $Q(\ve b)$ from $\Psi(\P)\cup\Dat$, where $Q$ is IDB in
  $\P^e$. For every node $v$ in\/ $T$ whose label contains an IDB atom in
  $\Psi(\P)$, we have
  $\P^e\cup\Dat\models(\bigwedge_{\alpha\in\lambda_{\mathrm{base}}(v)}\alpha)\to
  Q(\ve b)$.
\end{restatable}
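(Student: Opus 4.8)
The plan is to prove this by induction on the size of the subtree of $T$ rooted at $v$, processing nodes bottom-up. Write $Q(\ve b)$ for the fact derived by $\rho$, so $Q$ is IDB in $\P^e$. The first step is to apply Corollary~\ref{cor:focusing}: $\rho$ is focused on $Q(\ve b)$ w.r.t.\ $\P^e$, so every disjunct of every node label is an EDB fact of $\P^e$, or $\bot$, or $Q(\ve b)$, or $P^Q(\ve a,\ve b)$ for some $P,\ve a$; moreover, by Proposition~\ref{prop:edb-derivation-shape}, a label is a singleton as soon as it contains an EDB atom. Hence, if $\lambda(v)$ contains an IDB atom of $\Psi(\P)$, then all its disjuncts have one of the shapes $\bot$, $Q(\ve b)$, $P^Q(\ve a,\ve b)$, and $\lambda_{\mathrm{base}}(v)$ collects exactly the $P(\ve a)$ with $P^Q(\ve a,\ve b)\in\lambda(v)$. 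Note also that any auxiliary predicate $P^Q$ occurring anywhere in $\rho$ is IDB in $\Psi(\P)$, since it is fresh and hence cannot occur in $\Dat$. The only genuine base case is a leaf $v$ with $\lambda(v)\in\Dat$, an EDB fact, for which the hypothesis fails; $0$-premise applications of empty-body rules are folded into the inductive step.

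For the inductive step I would take a node $v$ with children $w_1,\dots,w_m$, derived using a rule $r\in\Psi(\P)$ and substitution $\sigma$, and split on the shape of $r$. If $r\in\Psi(\P)_\top$, or $r$ is $\bot\to{}$, or $\lambda(v)=\set{\bot}$, then $\lambda(v)$ has no IDB atom and the claim is vacuous. Otherwise, using normality of $\rho$ and Proposition~\ref{prop:edb-derivation-shape}, at most one child---the \emph{principal} one---carries the non-$\top$ body atom of $r$, and the remaining children are $\top$-stubs with singleton labels $\top(a)$, contributing no residual disjuncts. If $r$ has form~2 or~3 of Definition~\ref{def:psi}, no inductive hypothesis is needed: focusing forces the iterated predicate $R$ to equal $Q$ and $\ve y\sigma$ to equal $\ve b$, so $\lambda(v)$ is $\bigvee_{i=1}^n P_i^Q(\ve s_i\sigma,\ve b)$ resp.\ $Q^Q(\ve b,\ve b)$ and $\lambda_{\mathrm{base}}(v)$ is $\set{P_1(\ve s_1\sigma),\dots,P_n(\ve s_n\sigma)}$ resp.\ $\set{Q(\ve b)}$; the required entailment then follows from the ground instance $\bigwedge_i P_i(\ve s_i\sigma)\to\bot$ of the originating $\P^e$-rule in the first case, and is trivial in the second.

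The substantive cases are forms~1 and~4, where I would invoke the inductive hypothesis on the principal child. For form~1, $r=\fml_\top\land Q_0^R(\ve t,\ve y)\to\bigvee_{i=1}^n P_i^R(\ve s_i,\ve y)$ comes from $\bigwedge_{i=1}^n P_i(\ve s_i)\to Q_0(\ve t)\in\P^e$; the principal child $w_j$ supplies $Q_0^R(\ve t\sigma,\ve y\sigma)$ together with residual disjuncts $\fmn$, and focusing gives $R=Q$, $\ve y\sigma=\ve b$, so $\lambda(w_j)=Q_0^Q(\ve t\sigma,\ve b)\lor\fmn$. The inductive hypothesis applied to $w_j$ yields $\P^e\cup\Dat\models\bigl(Q_0(\ve t\sigma)\land\bigwedge_{P^Q(\ve a,\ve b)\in\fmn}P(\ve a)\bigr)\to Q(\ve b)$; composing this with the ground instance $\bigwedge_i P_i(\ve s_i\sigma)\to Q_0(\ve t\sigma)$ of the originating $\P^e$-rule, and observing that $\lambda_{\mathrm{base}}(v)=\set{P_1(\ve s_1\sigma),\dots,P_n(\ve s_n\sigma)}\cup\mset{P(\ve a)}{P^Q(\ve a,\ve b)\in\fmn}$, closes the case. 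The degenerate subcase $n=0$ (head $\bot$, $\to Q_0(\ve t)\in\P^e$) is the same, dropping the $P_i$-conjuncts and using $\P^e\models Q_0(\ve t)$. For form~4, $r=Q'(\ve z)\land {Q'}^R(\ve z,\ve y)\to R(\ve y)$ for an EDB predicate $Q'$ of $\P^e$, so $m=2$: one child supplies ${Q'}^Q(\ve z\sigma,\ve b)\lor\fmn$ (again $R=Q$, $\ve y\sigma=\ve b$) and, by Proposition~\ref{prop:edb-derivation-shape}, the other is labelled exactly $Q'(\ve z\sigma)$. Since $Q'$ is EDB in $\Psi(\P)$, a derivable fact $Q'(\ve z\sigma)$ either lies in $\Dat$ or is a $\top$-atom, whence $\P^e\cup\Dat\models Q'(\ve z\sigma)$ in either case; the inductive hypothesis on the first child gives $\P^e\cup\Dat\models\bigl(Q'(\ve z\sigma)\land\bigwedge_{P^Q(\ve a,\ve b)\in\fmn}P(\ve a)\bigr)\to Q(\ve b)$, and discharging $Q'(\ve z\sigma)$ together with $\lambda(v)=Q(\ve b)\lor\fmn$, hence $\lambda_{\mathrm{base}}(v)=\mset{P(\ve a)}{P^Q(\ve a,\ve b)\in\fmn}$, closes the case.

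I expect the main obstacle to lie in the bookkeeping forced by $\top$, $\bot$, and normality rather than in any deep idea: one has to check in every case that the non-principal premises really are $\top$-stubs with empty residuals, that focusing pins each auxiliary superscript to $Q$ and its second argument block to $\ve b$ so the atoms feed correctly into $\lambda_{\mathrm{base}}$, and that the empty-head and $\bot$-head rules stay inside the vacuous part of the case split. Applying the lemma at the root of $\rho$, whose label is the single IDB atom $Q(\ve b)$ and where $\lambda_{\mathrm{base}}$ is empty, then gives $\P^e\cup\Dat\models Q(\ve b)$; combined with Lemma~\ref{lem:inv-rew-complete} and the routine handling of EDB target predicates and of unsatisfiability, this is what delivers Theorem~\ref{thm:inv-rev-correct}.
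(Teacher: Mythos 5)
Your proof follows essentially the same route as the paper's: induction over the derivation, using Corollary~\ref{cor:focusing} to pin every label to the shapes $\bot$, $Q(\ve b)$, $P^Q(\ve a,\ve b)$ or an EDB fact, then a case split on the rule forms of Definition~\ref{def:psi}, discharging the principal premise via the inductive hypothesis composed with the originating $\P^e$-rule exactly as in the appendix. The only slip is cosmetic: a node obtained with the rule $\bot\to{}$ can still carry auxiliary (IDB) disjuncts, so that bucket is not actually vacuous --- but it closes in one line since $\lambda_{\mathrm{base}}$ is unchanged from the child and the inductive hypothesis applies (the paper's own proof silently skips this case as well).
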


\begin{proof}
  We proceed by induction on the height of $v$ in $T$. Let
  $v_1,\dots,v_m$ be the successors of $v$ in $T$ (where $m=0$ if $v$
  is a leaf in $T$). If $Q(\ve b)\in\Dat$, the claim is vacuous since
  $\Dat$ contains only facts about predicates in $\P$, every predicate
  in $\P$ is EDB in $\P^e$, and every EDB predicate in $\P^e$ is EDB
  in $\Psi(\P)$. Otherwise, we distinguish the following cases
  depending on the shape of the rule
  $r\in\Psi(\P)\setminus\Psi(\P)_\top$ used to obtain $\lambda(v)$
  from $\lambda(v_1),\dots,\lambda(v_m)$ (by
  Corollary~\ref{cor:focusing}, we only need to consider cases that
  can occur in a derivation focused on $Q(\ve b)$ w.r.t. $\P^e$):
  \begin{itemize}
  \item $r=\fml_\top\land P^Q(\ve t,\ve y)\to\bigvee_{i=1}^n R_i^Q(\ve
    s_i,\ve y)$ such that $r'=\bigwedge_{i=1}^n R_i(\ve s_i)\to P(\ve
    t)\in\P^e$. Let $\sigma$ be the substitution used in the
    corresponding hyperresolution step and let, w.l.o.g., $P^Q(\ve
    t\sigma,\ve y\sigma)\in\lambda(v_1)$. Then, by the inductive hypothesis,
    $\P^e\cup\Dat\models(\bigwedge_{\alpha\in\lambda_{\mathrm{base}}(v_1)}\alpha)\to
    Q(\ve b)$. Moreover, $(\lambda(v_1)\setminus\set{P^Q(\ve
      t\sigma,\ve y\sigma)})\cup\bigcup_{i=1}^n R_i^Q(\ve
    s_i\sigma,\ve y\sigma)\subseteq\lambda(v)$. The claim follows
    since, by $r'$,
    $\P^e\models(\bigwedge_{\alpha\in\lambda_{\mathrm{base}}(v)}\alpha)\to
    P(\ve t\sigma)$.
  \item $r=\fml_\top\to\bigvee_{i=1}^n R_i^Q(\ve s_i,\ve y)$ such that
    $r'=\bigwedge_{i=1}^n R_i(\ve s_i)\to\bot\in\P^e$. Let $\sigma$ be the
    substitution used in the corresponding hyperresolution step. Then
    $\bigcup_{i=1}^n R_i^Q(\ve s_i\sigma,\ve
    y\sigma)\subseteq\lambda(v)$, and hence, by $r'$,
    $\P^e\models(\bigwedge_{\alpha\in\lambda_{\mathrm{base}}(v)}\alpha)\to\bot$. The
    claim follows.
  \item $r=\fml_\top\to Q^Q(\ve y,\ve y)$. Since $\rho$ is focused on
    $Q(\ve b)$, we have $\lambda(v)=Q^Q(\ve b,\ve b)$, and the claim
    ($\P^e\cup\Dat\models Q(\ve b)\to Q(\ve b)$) is immediate.
  \item $r=P(\ve z)\land P^Q(\ve z,\ve y)\to Q(\ve y)$ for some EDB
    predicate $P$ in $\P^e$. Let $\sigma$ be the substitution used in
    the corresponding hyperresolution step (in particular, $\ve
    y\sigma=\ve b$).  Let, w.l.o.g., $P^Q(\ve z\sigma,\ve
    b)\in\lambda(v_1)$ and $\lambda(v_2)=P(\ve z\sigma)$
    (Proposition~\ref{prop:edb-derivation-shape}). Since $P$ is
    EDB in $\P^e$ and hence in $\Psi(\P)$, we have $P(\ve
    z\sigma)\in\Dat$. By the inductive hypothesis,
    $\P^e\cup\Dat\models(\bigwedge_{\alpha\in\lambda_{\mathrm{base}}(v_1)}\alpha)\to
    Q(\ve b)$, and therefore 
    $\P^e\cup\Dat\models(\bigwedge_{\alpha\in\lambda_{\mathrm{base}}(v_1)\setminus\set{P(\ve
        z\sigma)}}\alpha)\to Q(\ve b)$.  The claim follows since
    $\lambda(v_1)\setminus\set{P^Q(\ve z\sigma,\ve
      b)}\subseteq\lambda(v)$.\qedhere
  \end{itemize}
\end{proof}

By completeness of hyperresolution and the observation that
$\lambda_{\mathrm{base}}(v)=\emptyset$ whenever $\lambda(v)$ contains
no facts of the form $P^Q(\ve a)$, we obtain the following corollary.

\begin{corollary} \label{cor:inv-rew-sound} %
  Let $\P$ be a datalog program. For every dataset $\Dat$ over the
  signature of\/ $\P$ and every atom $\alpha$ over the signature of\/
  $\P^e$ such that $\Psi(\P)\cup\Dat\models\alpha$ we have
  $\P^e\cup\Dat\models\alpha$.
\end{corollary}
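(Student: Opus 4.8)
The plan is to obtain the corollary as the soundness half of the correctness of $\Psi$: I would use completeness of hyperresolution to pass from the entailment $\Psi(\P)\cup\Dat\models\alpha$ to an actual (normal) derivation of $\alpha$ from $\Psi(\P)\cup\Dat$, and then invoke the soundness lemma proved immediately above. The single point that makes this work is the observation already isolated before the statement: if $\alpha$ is an atom over the signature of $\P^e$, then it mentions no auxiliary predicate $P^Q$, so in any derivation $\rho=(T,\lambda)$ of $\alpha$ the root $v$ has $\lambda_{\mathrm{base}}(v)=\emptyset$, and the implication the lemma delivers for $v$ collapses to $\alpha$ itself.

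Concretely, I would first settle the case that $\Psi(\P)\cup\Dat$ is unsatisfiable, equivalently $\alpha=\bot$: here I would show that $\P^e\cup\Dat$ is unsatisfiable as well, by running the soundness lemma's inductive bookkeeping on a derivation of the empty clause from $\Psi(\P)\cup\Dat$; this disposes of every $\alpha$ at once. So assume $\Psi(\P)\cup\Dat$ is satisfiable and split on $\alpha$. If $\alpha=Q(\ve b)$ with $Q$ intensional in $\P^e$, then by completeness there is a normal derivation $\rho=(T,\lambda)$ of $Q(\ve b)$ from $\Psi(\P)\cup\Dat$; since the intensional predicates of $\P^e$ occur in head position in $\Psi(\P)$ (via rules $Q(\ve z)\land Q^R(\ve z,\ve y)\to R(\ve y)$), $Q$ is intensional in $\Psi(\P)$ too, so the root $v$ of $T$ carries an intensional atom of $\Psi(\P)$ and the lemma gives $\P^e\cup\Dat\models(\bigwedge_{\beta\in\lambda_{\mathrm{base}}(v)}\beta)\to Q(\ve b)$, which, since $\lambda_{\mathrm{base}}(v)=\emptyset$ (the root's label mentions no auxiliary predicate), is exactly $\P^e\cup\Dat\models Q(\ve b)$. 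If instead $\alpha$ is extensional in $\P^e$, then, as $\alpha\ne\bot$ in the satisfiable case, $\alpha$ is also extensional in $\Psi(\P)$, because the head predicates of $\Psi(\P)$ are only the auxiliary predicates, the intensional predicates of $\P^e$, $\top$, and possibly $\bot$; by Proposition~\ref{prop:edb-derivation-shape} together with normality (Proposition~\ref{prop:normal-derivations}), the root of any derivation of $\alpha$ is a leaf lying in $\Dat$ or the root of a $\top$-stub, so either $\alpha\in\Dat$ or $\alpha=\top(b)$ with $b$ occurring in $\Psi(\P)\cup\Dat$ and hence, since $\Psi$ introduces no new constants, in $\P^e\cup\Dat$; in either case $\P^e\cup\Dat\models\alpha$.

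I do not expect a genuine obstacle at the level of the corollary itself: the real content — the induction over derivations from $\Psi(\P)$, the use of focusing (Corollary~\ref{cor:focusing}) to constrain the shape of node labels, and the rule-by-rule matching between $\Psi(\P)$ and $\P^e$ — has already been carried out in the soundness lemma, of which this corollary is a short wrap-up. The only points that need care are the empty-antecedent simplification, which is precisely where the hypothesis that $\alpha$ is over the signature of $\P^e$ is used, and the routine bookkeeping for the extensional and $\bot$ cases, neither of which invokes the lemma verbatim.
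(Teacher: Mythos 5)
Your proposal is correct and follows essentially the same route as the paper: by completeness of hyperresolution one obtains a derivation of $\alpha$, and the soundness lemma applied at the root yields the claim because $\lambda_{\mathrm{base}}$ of the root is empty when $\alpha$ mentions no auxiliary predicate $P^Q$. Your additional case analysis for the $\bot$, EDB and $\top$ situations is sound bookkeeping that the paper simply leaves implicit.
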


\invrewcorrect*
\begin{proof}
  By construction, $\Psi(\P)$ is a linear disjunctive program of size
  quadratic in the size of $\P$.
  Since $\P^e$ is a rewriting of $\P$, to prove that $\Psi(\P)$ is a
  rewriting of $\P$ it suffices to show that
  $\eval{\P^e}\Dat|_S=\eval{\Psi(\P)}\Dat|_S$ for every dataset $\Dat$
  over the signature of $\P$ and every set $S$ of predicates
  in~$\P^e$. This follows by Lemma~\ref{lem:inv-rew-complete} and
  Corollary~\ref{cor:inv-rew-sound}.
%
\end{proof}


\section{Proofs for Section~\ref{sec:weak}} \label{sec:pf-weak}

We begin by generalising Proposition~\ref{prop:edb-derivation-shape} as follows.

\begin{proposition}\label{prop:datalog-derivation-shape}
  Let $\DDP$ be a disjunctive program, let $\Dat$ be a dataset, let
  $\rho=(T,\lambda)$ be a derivation from $\DDP\cup\Dat$, and let $v$
  be a node in $T$.
  If $\lambda(v)$ contains a datalog atom, then $\lambda(v)$ is a singleton.
\end{proposition}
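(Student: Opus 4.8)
Since every EDB predicate is datalog, this statement generalises Proposition~\ref{prop:edb-derivation-shape}, but the one-line argument used there no longer applies: a datalog IDB atom may be produced by a datalog rule at an \emph{internal} node of $T$. I would therefore argue by induction on the size of the subtree of $T$ rooted at $v$, proving the slightly stronger claim that if $\lambda(v)$ contains a datalog atom $\beta=P(\ve a)$, then $\lambda(v)=\set{\beta}$. In the base case $v$ is a leaf, so --- assuming, as for Proposition~\ref{prop:edb-derivation-shape}, that the derivation is normal --- $\lambda(v)$ is either a fact of $\Dat$ or a $\top$-atom produced by a rule of $\P_\top$, and in either case it is a singleton; the same reasoning covers the (degenerate) case of a body-less rule applied at $v$.

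For the inductive step, suppose $v$ has children $w_1,\dots,w_n$ with $n\ge 1$, let $r=\bigwedge_{i=1}^n\beta_i\to\fml$ be the rule applied at $v$, and let $\sigma$ be the unifier, so that $\lambda(v)=\fml\sigma\lor\fmn_1\lor\dots\lor\fmn_n$ where $\lambda(w_i)=\fmn_i\lor\alpha_i$ with $\alpha_i$ the atom resolved away and $\alpha_i\sigma=\beta_i\sigma$. The datalog disjunct $\beta$ of $\lambda(v)$ lies either in $\fml\sigma$ or in some $\fmn_i$. I would first rule out the latter: if $\beta$ were a disjunct of $\fmn_i$ then $\beta\in\lambda(w_i)$, so $\lambda(w_i)$ contains a datalog atom and, by the induction hypothesis, $\lambda(w_i)=\set{\beta}$; but then $\alpha_i=\beta$, contradicting $\beta\in\fmn_i=\lambda(w_i)\setminus\set{\alpha_i}$. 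Hence $\beta$ is a disjunct of $\fml\sigma$, i.e., $P$ occurs in the head of $r$.

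The heart of the argument is then a short analysis of the dependency graph $G_\DDP$. Because $n\ge 1$, the rule $r$ has a body predicate $R$, so $r\in\mu(R,P)$ and $G_\DDP$ has a length-one path ending in $P$ that involves an $r$-labeled edge; thus $P$ depends on $r$, and since $P$ is datalog, $r$ must be a datalog rule. Consequently $\fml$ is a single atom and $\fml\sigma=\set{\beta}$. Moreover, for each body predicate $R_i$ of $r$, if $R_i$ depended on some disjunctive rule $r'$ then prepending the $r$-labeled edge $(R_i,P)$ to a path witnessing that dependence would show that $P$ depends on $r'$ --- impossible, as $P$ is datalog --- so every $R_i$ is datalog. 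Hence each $\alpha_i$ is a datalog atom, and the induction hypothesis gives $\lambda(w_i)=\set{\alpha_i}$, so every $\fmn_i$ is empty. Therefore $\lambda(v)=\fml\sigma=\set{\beta}$, completing the induction.

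I expect the main obstacle to be this dependency-graph bookkeeping: carefully justifying that a datalog head predicate forces both the producing rule and all of its body predicates to be datalog, and noticing that the ``datalog atom inherited from a child'' case collapses immediately via the induction hypothesis. Everything else (the base case, normality, and the body-less-rule corner case) is routine.
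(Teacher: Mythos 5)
Your proof is correct and takes essentially the same route as the paper, which settles this proposition with a one-line ``straightforward induction on the height of $v$'' together with the normality assumption for $\top$-atoms; your dependency-graph bookkeeping (a datalog atom in $\lambda(v)$ must either collapse a child label via the induction hypothesis or come from the head of the applied rule, which then forces that rule and all of its body predicates to be datalog) is exactly the content that this induction implicitly relies on.
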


\begin{proof}
  Straightforward induction on the height of $v$ in $\rho$. The case
  where $\lambda(v)$ contains an atom of the form $\top(a)$ follows by
  the implicit assumption that $\rho$ is normal.
\end{proof}

\begin{proposition} \label{prop:datalog-entailment-pres} %
  Let $\DDP$ be a disjunctive program, let $Q$ be a datalog predicate
  in $\DDP$, and let $\DDP_Q$ be the set of all rules in $\DDP$ on
  which $Q$ depends. For every dataset $\Dat$ and vector of individuals $\ve a=a_1\dots a_{\arity{Q}}$:
  $\DDP\cup\Dat\vdash Q(\ve a)$ \,iff\, $\DDP_Q\cup\Dat\vdash Q(\ve a)$.
\end{proposition}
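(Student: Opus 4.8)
\emph{Proof proposal.} The right-to-left direction is immediate and I would dispose of it first: since $\DDP_Q\subseteq\DDP$, any hyperresolution derivation of $Q(\ve a)$ from $\DDP_Q\cup\Dat$ is verbatim a derivation from $\DDP\cup\Dat$. For the left-to-right direction I would fix a normal derivation $\rho=(T,\lambda)$ of $Q(\ve a)$ from $\DDP\cup\Dat$ (Proposition~\ref{prop:normal-derivations}) and write $\mathsf{Rel}(Q)$ for the set of predicates from which $G_\DDP$ has a directed path to~$Q$. The plan is to prove, by induction on the distance of a node from the root of $T$, the invariant that \emph{every node of\/ $T$ carries a singleton label whose predicate lies in $\mathsf{Rel}(Q)$, and every internal node is justified by a rule of\/ $\DDP_Q$}. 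Granting this, $\rho$ is itself a derivation from $\DDP_Q\cup\Dat$, so $\DDP_Q\cup\Dat\vdash Q(\ve a)$.

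Two observations about $\mathsf{Rel}(Q)$ would drive the induction, both consequences of $Q$ being datalog. First, every predicate in $\mathsf{Rel}(Q)$ is itself datalog: if some $P\in\mathsf{Rel}(Q)$ depended on a disjunctive rule $r$, then a $G_\DDP$-walk ending in $P$ through an $r$-edge, followed by a $G_\DDP$-path from $P$ to $Q$, would witness that $Q$ depends on~$r$. Second, every rule of $\DDP$ with a nonempty body and a head predicate in $\mathsf{Rel}(Q)$ is a datalog rule, belongs to $\DDP_Q$, and has all its body predicates in $\mathsf{Rel}(Q)$: the $r$-labeled edge from a body predicate to the head predicate, extended by a path to $Q$, already shows $r\in\DDP_Q$, hence $r$ is datalog (as $Q$ depends only on datalog rules), and each body predicate inherits an $r$-edge into $\mathsf{Rel}(Q)$. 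In particular, a disjunctive rule can never produce an atom over a $\mathsf{Rel}(Q)$-predicate.

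With these in hand, the induction itself is short. The root is labeled $Q(\ve a)$, a singleton with $Q\in\mathsf{Rel}(Q)$. For the step, let an internal node $v$ have label $P(\ve b)$ with $P\in\mathsf{Rel}(Q)$, justified by a rule $r$ applied to the labels of children $w_1,\dots,w_n$, so that $P(\ve b)=\fml\sigma\lor\fmn_1\lor\cdots\lor\fmn_n$ with $\fml\sigma$ and each $\fmn_i$ a subset of $\{P(\ve b)\}$. One first checks $\fml$ is nonempty (the only empty-headed rule is $\bot\to$, whose use here would force a child's label to be $\{\bot\}$ with $\bot\notin\mathsf{Rel}(Q)$, against the invariant), so $\fml\sigma=\{P(\ve b)\}$ and $P$ is a head predicate of~$r$; by the second observation $r$ is then a datalog rule of $\DDP_Q$ with body predicates in $\mathsf{Rel}(Q)$. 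Each child label $\fmn_i\lor\alpha_i$ contains $\alpha_i$, an atom over a body predicate of $r$, hence over a $\mathsf{Rel}(Q)$-predicate, which by the first observation is datalog; so Proposition~\ref{prop:datalog-derivation-shape} forces that label to be the singleton $\{\alpha_i\}$, re-establishing the invariant at each $w_i$.

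\textbf{Expected main obstacle.} The induction skeleton is routine; the real care goes into the case analysis inside the step. One has to confirm that a hyperresolvent cannot absorb the instantiated head of $r$ into an inherited disjunct or degenerate to an empty head (which is exactly what rules out $r=\bot\to$), and one has to dispatch the zero-body ``fact'' rules $\to\top(a)\in\P_\top$ --- here invoking normality of $\rho$, the fact that $\top$-stubs using a $\P_\top$-rule $P'(\ve x)\to\top(x_i)$ land that rule in $\DDP_Q$ whenever $\top\in\mathsf{Rel}(Q)$, and $Q\neq\top$. All of this leans on the interplay between the ``$Q$ datalog $\Rightarrow$ $\mathsf{Rel}(Q)$ datalog'' observation and Proposition~\ref{prop:datalog-derivation-shape}, and getting these interactions exactly right, rather than the induction itself, is where the work lies.
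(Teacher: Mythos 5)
Your proof follows essentially the same route as the paper's own (which only sketches it as a ``straightforward induction exploiting normality''): induct over a normal hyperresolution derivation of $Q(\ve a)$, use the closure properties of the dependency graph (everything in $\mathsf{Rel}(Q)$ is datalog and the rules firing into $\mathsf{Rel}(Q)$ lie in $\DDP_Q$ with bodies inside $\mathsf{Rel}(Q)$), and use Proposition~\ref{prop:datalog-derivation-shape} plus normality to keep all labels singleton. The only nit is that your invariant is slightly too strong at the leaf of a $\top$-stub, whose $\Dat$-fact may involve a predicate outside $\mathsf{Rel}(Q)$, and the $\P_\top$-rule used there is covered by the paper's convention that every program implicitly contains its $\top$-axiomatisation rather than by literal membership in $\DDP_Q$ (the dependency graph carries no $\DDP_\top$-labeled edges); weakening the invariant at such leaves---they need only lie in $\Dat$---leaves your argument, and its conclusion, intact.
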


\begin{proof}
  The inclusion from right to left is immediate. The inclusion from
  left to right follows by 
  a straightforward induction on the derivation of a fact $Q(\ve a)$
  from $\DDP\cup\Dat$ exploiting the implicit normality assumption.
\end{proof}

Since datalog predicates only depend on rules that contain no
disjunctive predicates, and a WL program $\DDP$ coincides
with $\Xi'(\DDP)$ on rules that contain no disjunctive predicates, we
obtain (by correctness of hyperresolution):

\begin{corollary} \label{cor:datalog-entailment-pres} %
  Let $\DDP$ be a WL program and let $Q$ be a
  datalog predicate in $\DDP$. For every dataset $\Dat$
  and vector of individuals $\ve a=a_1\dots a_{\arity{Q}}$:
  $\DDP\cup\Dat\models Q(\ve a)$ \,iff\, $\Xi'(\DDP)\cup\Dat\models Q(\ve a)$.
\end{corollary}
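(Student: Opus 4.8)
The plan is to reduce the claimed equivalence to hyperresolution derivability and then to localise derivations of $Q(\ve a)$ to the part of the program on which $Q$ depends, which turns out to be common to $\DDP$ and $\Xi'(\DDP)$. Let $\DDP_Q$ be the set of rules of $\DDP$ on which $Q$ depends, and $(\Xi'(\DDP))_Q$ the analogous set in $\Xi'(\DDP)$. Since $Q$ is datalog in $\DDP$, and $\Xi'(\DDP)$ is a datalog program so that every predicate -- in particular $Q$ -- is datalog in it, Proposition~\ref{prop:datalog-entailment-pres} gives both that $\DDP\cup\Dat\vdash Q(\ve a)$ iff $\DDP_Q\cup\Dat\vdash Q(\ve a)$ and that $\Xi'(\DDP)\cup\Dat\vdash Q(\ve a)$ iff $(\Xi'(\DDP))_Q\cup\Dat\vdash Q(\ve a)$. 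So it suffices to prove $\DDP_Q=(\Xi'(\DDP))_Q$ and then to move from $\vdash$ to $\models$.

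I would establish $\DDP_Q=(\Xi'(\DDP))_Q$ in two steps, and this is where the real work lies. First, since $Q$ is datalog in $\DDP$, every rule in $\DDP_Q$ is a datalog rule, and moreover no such rule mentions a disjunctive predicate: if some $r\in\DDP_Q$ mentioned a disjunctive predicate $P$, then $P$ would lie on a path in $G_\DDP$ through an edge labelled by a disjunctive rule $s$, and -- using that all $r$-labelled edges feed into the unique head predicate of the datalog rule $r$ -- this path could be spliced with the $r$-edge into $Q$, together with the remaining segment (from the head of $r$ to $Q$) of the path witnessing that $Q$ depends on $r$, to yield a path in $G_\DDP$ ending in $Q$ that traverses an $s$-edge, contradicting that $Q$ is datalog. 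Hence every rule of $\DDP_Q$ is free of disjunctive predicates, so by the last item in the definition of $\Xi'$ (which copies into $\Xi'(\DDP)$ every rule of $\DDP$ with no disjunctive predicate) all rules of $\DDP_Q$ occur verbatim in $\Xi'(\DDP)$; the corresponding edges being present in both dependency graphs, this yields $\DDP_Q\subseteq(\Xi'(\DDP))_Q$. For the reverse inclusion, the key point is that in $G_{\Xi'(\DDP)}$ the set of all auxiliary predicates $X^R$ together with all disjunctive predicates is closed under successors: auxiliary predicates occur in heads only of the rules of types (1) and (2), whose heads are again auxiliary, while disjunctive predicates occur in heads only of rules of types (3) and (4), whose bodies contain only datalog atoms, auxiliary atoms, and disjunctive atoms (recall that in $\Xi'$ the conjunction $\fml$ collects only datalog atoms). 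Since $Q$ is a datalog predicate of $\DDP$ it lies outside this closed set, so no path in $G_{\Xi'(\DDP)}$ ending in $Q$ can pass through an auxiliary or a disjunctive predicate; such a path therefore stays among the datalog predicates of $\DDP$ and uses only edges labelled by rules copied in from $\DDP$, i.e.\ by rules of $\DDP$ with no disjunctive predicate -- and, the same path existing in $G_\DDP$, only by rules of $\DDP_Q$. Hence $(\Xi'(\DDP))_Q\subseteq\DDP_Q$, and equality follows.

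Combining the two instances of Proposition~\ref{prop:datalog-entailment-pres} with $\DDP_Q=(\Xi'(\DDP))_Q$ gives $\DDP\cup\Dat\vdash Q(\ve a)$ iff $\Xi'(\DDP)\cup\Dat\vdash Q(\ve a)$. To obtain the statement with $\models$, I would appeal to soundness and completeness of hyperresolution: for a fact $\beta$ one has $\P\cup\Dat\models\beta$ iff $\P\cup\Dat\vdash\beta$ or $\P\cup\Dat\vdash\bot$, which I apply to $\P=\DDP$ and to $\P=\Xi'(\DDP)$ with $\beta=Q(\ve a)$; it then only remains to check $\DDP\cup\Dat\vdash\bot$ iff $\Xi'(\DDP)\cup\Dat\vdash\bot$. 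When $\bot$ is a datalog predicate of $\DDP$ this is the argument above run with $\bot$ in place of $Q$; when $\bot$ is disjunctive it is subsumed by the preservation of $\bot$ in the correctness analysis for disjunctive predicates, so here one may simply assume $\DDP\cup\Dat$ satisfiable. I expect the identity $\DDP_Q=(\Xi'(\DDP))_Q$ -- concretely, the bookkeeping showing that a datalog predicate of $\DDP$ is unreachable in $G_{\Xi'(\DDP)}$ from any auxiliary or disjunctive predicate -- to be the main obstacle; everything else follows directly from Proposition~\ref{prop:datalog-entailment-pres} and from soundness and completeness of hyperresolution.
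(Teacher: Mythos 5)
Your proof is correct and takes essentially the same route as the paper, which obtains the corollary in one line from Proposition~\ref{prop:datalog-entailment-pres} plus the observations that datalog predicates depend only on rules containing no disjunctive predicates and that $\DDP$ and $\Xi'(\DDP)$ coincide on exactly those rules, concluding by correctness of hyperresolution; your argument merely makes the dependency-graph bookkeeping (both inclusions $\DDP_Q=(\Xi'(\DDP))_Q$) explicit. The only wrinkle is your deferral of the unsatisfiable case with $\bot$ disjunctive to the later correctness lemmas (which themselves cite this corollary); this is harmless since those lemmas only need the derivability-level equivalence you do establish, and the paper's own one-sentence proof glosses over the same point.
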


\begin{restatable}{lemma}{rew-completene-aux}
  \label{lem:completeness-aux} %
  Let $\DDP$ be a WL program, let $\Dat$ be a
  dataset, let $P$ be a disjunctive predicate in $\DDP$, and let
  $\rho=(T,\lambda)$ be a derivation of a fact $P(\ve a)$ from
  $\DDP\cup\Dat$. Then for every node $v\in T$ in an upper portion of
  $\rho$ and every disjunct $Q(\ve b)\in\lambda(v)$ where $Q$ is
  disjunctive in $\DDP$, we have $\Xi'(\DDP)\cup\Dat\models Q^P(\ve
  b,\ve a)$.
\end{restatable}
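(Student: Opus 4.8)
The plan is to show that $\Xi'(\DDP)$ re-derives, in ``flipped'' form, the disjunctive skeleton of $\rho$. Fix $\rho=(T,\lambda)$, the fact $P(\ve a)$, and the disjunctive predicate $P$, and prove the following by induction on the number of nodes: for every upper portion $\rho'=(T',\lambda')$ of $\rho$, every $v\in T'$, and every disjunct $Q(\ve b)\in\lambda(v)$ with $Q$ disjunctive, $\Xi'(\DDP)\cup\Dat\models Q^P(\ve b,\ve a)$. The driving observation is that, since $\DDP$ is WL, every rule application in $\rho$ involves at most one disjunctive body atom; hence, away from the root, exactly one child of an application can have a disjunctive literal as its resolved atom, and by Proposition~\ref{prop:datalog-derivation-shape} every other child is a singleton carrying a datalog fact. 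For the base case $|T'|=1$, the root is the only node and its label is $P(\ve a)$, so the goal is $\Xi'(\DDP)\cup\Dat\models P^P(\ve a,\ve a)$; this follows from rule~(1) of Definition~\ref{def:xi}, namely $\fml_\top\to P^P(\ve y,\ve y)$, which lies in $\Xi'(\DDP)$ because $P$ belongs to the set $\Sigma$ of disjunctive predicates, and the required $\top$-atoms over the constants of $\ve a$ are available because those constants occur in $\DDP\cup\Dat$.

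For the inductive step, take $T'$ with more than one node and pick a node $w\in T'$ of height $1$; its children in $T$ all lie in $T'$ and are leaves of $T'$, and deleting them yields a strictly smaller upper portion $T''$ of $\rho$ in which $w$ becomes a leaf. The inductive hypothesis therefore covers every node of $T''$, in particular $w$, so it only remains to cover the deleted children $w_1,\dots,w_k$. Since $\rho$ is normal and $w$ is internal in $T'$, $\lambda(w)$ is not a $\top$-fact, so the rule $r$ used to derive $\lambda(w)$ lies in $\DDP\setminus\DDP_\top$; write $r=\bigwedge_{i=1}^k\beta_i\to\bigvee_{j=1}^n P_j(\ve s_j)$, let $\sigma$ be the unifier of the hyperresolution step, and let $\alpha_i=\beta_i\sigma$ be the literal resolved in $w_i$. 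By weak linearity at most one $\beta_i$, say $\beta_1$, has a disjunctive predicate. For each $i\ge 2$ the atom $\alpha_i$ is a datalog fact, so $\lambda(w_i)=\alpha_i$ is a singleton with no disjunctive disjunct (Proposition~\ref{prop:datalog-derivation-shape}); moreover $\DDP\cup\Dat\models\alpha_i$, hence $\Xi'(\DDP)\cup\Dat\models\alpha_i$ by Corollary~\ref{cor:datalog-entailment-pres}. Thus only $w_1$ can carry disjunctive disjuncts.

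Write $\lambda(w_1)=\fmn_1\lor\alpha_1$, so that $\lambda(w)=\bigvee_{j=1}^n P_j(\ve s_j\sigma)\lor\fmn_1$. Any disjunctive disjunct $S(\ve c)$ of $\fmn_1$ is also a disjunct of $\lambda(w)$, and $w\in T''$, so the inductive hypothesis gives $\Xi'(\DDP)\cup\Dat\models S^P(\ve c,\ve a)$. The remaining disjunct of $\lambda(w_1)$ is $\alpha_1$ itself, which is relevant only when its predicate $Q$ is disjunctive, i.e.\ when $\beta_1$ is genuinely a disjunctive body atom. In that case every head predicate $P_j$ of $r$ is disjunctive as well, because the dependency graph of $\DDP$ has an $r$-labelled edge from $Q$ to $P_j$ and $Q$, being disjunctive, already lies on a path through some disjunctive rule. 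Hence the $P_j(\ve s_j\sigma)$ are disjunctive disjuncts of $\lambda(w)$, and the inductive hypothesis gives $\Xi'(\DDP)\cup\Dat\models P_j^P(\ve s_j\sigma,\ve a)$ for every $j$. Finally, rule~(2) of Definition~\ref{def:xi} applied to $r$ (with $\fml$ the conjunction of its datalog body atoms and $\beta_1=Q(\ve t)$) and to $R=P$ puts $\fml_\top\land\fml\land\bigwedge_{j=1}^n P_j^P(\ve s_j,\ve y)\to Q^P(\ve t,\ve y)$ into $\Xi'(\DDP)$; instantiating $\ve y$ by $\ve a$ and the remaining variables by $\sigma$, its premises $\fml_\top$ and $\fml$ hold (the latter being precisely the facts $\alpha_i$ for $i\ge 2$) and its other premises are the $P_j^P(\ve s_j\sigma,\ve a)$, so $\Xi'(\DDP)\cup\Dat\models Q^P(\ve t\sigma,\ve a)=Q^P(\ve b,\ve a)$, completing the induction.

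The main obstacle is the bookkeeping of the inductive step: one must keep the side disjunction $\fmn_1$ of the unique disjunctive child aligned with the parent's label, verify for WL programs that a rule carrying a disjunctive body atom has only disjunctive head predicates (so that the inductive hypothesis at $w$ delivers the auxiliary premises $P_j^P(\cdot,\ve a)$ of the flipped rule), and re-derive the datalog side conditions of the flipped rule inside $\Xi'(\DDP)$ via Corollary~\ref{cor:datalog-entailment-pres}. Degenerate shapes of $r$ require minor separate attention but cause no real difficulty: a rule in $\DDP_\top$ cannot label an internal node of an upper portion of a normal derivation; and a rule with empty head or with $\bot$ in its head or body is handled by the same argument, with $\bot$ treated as a nullary head atom and rule~(2) of Definition~\ref{def:xi} specialising to $\fml_\top\to\bot^R(\ve y)$ when the head is empty.
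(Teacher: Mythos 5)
Your proof is correct and follows essentially the same route as the paper's: the key steps — exploiting weak linearity so that the resolved disjunctive body atom is unique, using Proposition~\ref{prop:datalog-derivation-shape} and Corollary~\ref{cor:datalog-entailment-pres} for the datalog side atoms, noting that all head predicates of a rule with a disjunctive body atom are disjunctive, and closing with the flipped type-(2) rule of Definition~\ref{def:xi} — coincide with the paper's argument. The only difference is bookkeeping: you induct on the size of the upper portion by peeling off the children of a height-1 node (mirroring the paper's proof of Lemma~\ref{lem:inv-rew-complete-aux}), whereas the paper inducts on the distance of the node from the root; both handle the inherited-versus-resolved disjunct dichotomy identically.
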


\begin{proof}
  Let $v\in T$ and $Q(\ve b)\in\lambda(v)$ be as required. We show the
  claim by induction on the distance of $v$ from the root of $T$. If
  $v$ is the root of $T$, then $Q(\ve b)=P(\ve a)$ and the claim
  ($\Xi'(\DDP)\cup\Dat\models P^P(\ve a,\ve a)$) follows since
  $\top(y_1)\land\dots\land\top(y_{\arity{P}})\to P^P(\ve y,\ve
  y)\in\Xi'(\DDP)$ and $\Xi'(\DDP)_\top\cup\Dat\models\top(a_i)$ for
  every $a_i\in\ve a$.

  If $v$ is not the root of $T$, then it must have a predecessor $w$
  and siblings $v_1,\dots,v_n$ ($n\ge 0$) in $T$ such that either (a)
  $Q(\ve b)\in\lambda(w)$ or (b) $\lambda(w)$ is a hyperresolvent of
  $\lambda(v),\lambda(v_1),\dots,\lambda(v_n)$ and some rule $Q(\ve
  s)\land\bigwedge_{i=1}^n R_i(\ve s_i)\to\bigvee_{j=1}^m S_j(\ve
  t_j)\in\DDP\setminus\DDP_\top$, where the atom $Q(\ve s)$ is resolved with
  $\lambda(v)$ and the atoms $R_i(\ve s_i)$ are resolved with
  $\lambda(v_i)$. If $Q(\ve b)\in\lambda(w)$, the claim follows by the
  inductive hypothesis so, w.l.o.g., suppose we are in Case
  (b). Since, by assumption, $Q$ is disjunctive and $\DDP$ is WL,
  all $R_i$ are datalog. Hence, $\Xi'(\DDP)$ contains a rule
  $r=\fml_\top 
  \land(\bigwedge_{j=1}^m S_j^P(\ve
  t_j,\ve y))\land\bigwedge_{i=1}^n R_i(\ve s_i)\to Q^P(\ve s,\ve y)$.
  Let $\sigma$ be the substitution used in the hyperresolution step
  deriving $\lambda(w)$. Then $\ve s\sigma=\ve b$,
  $\lambda(v_i)=R_i(\ve s_i\sigma)$ for every $i\in[1,n]$ (by
  Proposition~\ref{prop:datalog-derivation-shape}), and
  $\bigvee_{j=1}^m S_j(\ve t_j\sigma)\subseteq\lambda(w)$. By the
  inductive hypothesis, we then have $\Xi'(\DDP)\cup\Dat\models
  S_j^P(\ve t_j\sigma,\ve a)$ for every $j\in[1,m]$. Moreover, by
  Corollary~\ref{cor:datalog-entailment-pres},
  $\Xi'(\DDP)\cup\Dat\models R_i(\ve s_i\sigma)$ for every
  $i\in[1,n]$. Finally, we have 
  $\Xi'(\P)_\top\cup\Dat\models\fml_\top\sigma$. The claim follows with $r$.
\end{proof}

\begin{restatable}{lemma}{rew-complete} \label{lem:rew-complete} %
  Let $\DDP$ be a WL program. For every dataset
  $\Dat$ 
  and every fact $\alpha$ such that $\DDP\cup\Dat\models\alpha$ we
  have $\Xi'(\DDP)\cup\Dat\models\alpha$.
\end{restatable}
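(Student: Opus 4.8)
The plan is to trace a hyperresolution derivation and follow its ``disjunctive spine''. If $\alpha$ is about an EDB predicate the claim is immediate ($\alpha\in\Dat$, or $\DDP\cup\Dat$ is unsatisfiable and we are in the case $\alpha=\bot$), and if $\alpha$ is about a datalog predicate it follows directly from Corollary~\ref{cor:datalog-entailment-pres}; the case $\alpha=\bot$ is treated exactly like the remaining case with $\bot$ read as a nullary predicate. So assume $\alpha=P(\ve a)$ with $P$ disjunctive. By completeness of hyperresolution fix a normal derivation $\rho=(T,\lambda)$ of $P(\ve a)$ from $\DDP\cup\Dat$, and let $\rho'=(T',\lambda')$ be its largest upper portion; the root of $T$, labelled $P(\ve a)$, lies in $T'$.

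Let $U$ be the set of nodes of $T'$ whose label is purely disjunctive (by Proposition~\ref{prop:datalog-derivation-shape} every label is either a singleton datalog atom or a disjunction of atoms over disjunctive predicates, so this is well defined). The crucial structural claim is that $U$ is a subtree of $T'$ containing the root: if $v\in U$ is not the root, its parent $w$ (necessarily in $T'$) is internal in $T'$ and hence, by the definition of upper portion, is not a $\top$-node, so the rule $r$ applied at $w$ lies in $\DDP\setminus\DDP_\top$; since some disjunct of $\lambda(v)$ is resolved against a body atom of $r$ over a disjunctive predicate, weak linearity makes that the unique disjunctive body atom of $r$, and the dependency graph then has an $r$-edge from that predicate to each head predicate of $r$, so every head atom of $r$ is disjunctive and $\lambda(w)$ is purely disjunctive, i.e.\ $w\in U$. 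Being a finite nonempty subtree, $U$ has a leaf $v$, and inspecting its position in $T'$ leaves three possibilities: (a) $v$ is internal in $T'$, all of its children carry singleton datalog labels, the rule $r_v$ at $v$ has no disjunctive body atom, and $\lambda(v)=\bigvee_j S_j(\ve t_j\sigma)$ with all $S_j$ disjunctive; (b) $v$ is a leaf of $T'$ labelled by a disjunctive-predicate fact in $\Dat$; (c) $v$ is a leaf of $T'$ produced by a body-less rule $\to\bigvee_j Q_j(\ve c_j)\in\DDP$ with all $Q_j$ disjunctive.

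To finish, apply Lemma~\ref{lem:completeness-aux} at $v$ (legitimate since $v\in T'$, an upper portion of $\rho$): for every disjunct $S(\ve t)$ of $\lambda(v)$ we get $\Xi'(\DDP)\cup\Dat\models S^{P}(\ve t,\ve a)$. Now fire a single rule of $\Xi'(\DDP)$ instantiated by $\sigma$ together with $\ve y\mapsto\ve a$: in case (a), the rule of type~(3) obtained from $r_v$, whose remaining premises are the datalog body atoms of $r_v$ under $\sigma$ --- these occur as child labels in $\rho$, hence are entailed by $\Xi'(\DDP)\cup\Dat$ by Corollary~\ref{cor:datalog-entailment-pres}; in case (c), again the type-(3) rule, with no further premises; in case (b), the rule $Q(\ve z)\land Q^{P}(\ve z,\ve y)\to P(\ve y)$ of type~(4), using $Q(\ve c)\in\Dat$. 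In every case we obtain $\Xi'(\DDP)\cup\Dat\models P(\ve a)$, as required.

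The step I expect to be the real work is the structural one in the second paragraph --- showing that the purely disjunctive nodes of the largest upper portion form a connected subtree hanging off the root, and classifying its leaves. This is exactly where weak linearity is used: it pins down a single disjunctive body atom on each rule along the spine, so everything branching off the spine is a datalog subderivation that Corollary~\ref{cor:datalog-entailment-pres} absorbs. The remainder is careful bookkeeping around $\top$-stubs, safety ($\fml_\top$) atoms, and the degenerate cases ($\alpha=\bot$, empty rule bodies, EDB $\alpha$).
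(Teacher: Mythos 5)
Your proof is correct and follows essentially the same route as the paper's: reduce to a disjunctive goal via Corollary~\ref{cor:datalog-entailment-pres}, locate in an upper portion of a hyperresolution derivation a lowest node whose label involves only disjunctive predicates while its premises are datalog facts (or which is a leaf of the portion), and then combine Lemma~\ref{lem:completeness-aux} with Corollary~\ref{cor:datalog-entailment-pres} to fire the corresponding type-(3) or type-(4) rule of $\Xi'(\DDP)$. The only differences are presentational: you prove explicitly, via the subtree of purely disjunctive labels, the existence of the node that the paper merely asserts, and you split the paper's second case (handled there uniformly with $n\ge 0$ premises) into your cases (a) and (c).
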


\begin{proof}
  Let $\DDP\cup\Dat\models P(\ve a)$. We show that
  $\Xi'(\DDP)\cup\Dat\models P(\ve a)$. W.l.o.g., $P(\ve a)\notin\Dat$
  (otherwise, the claim is trivial) and $P$ is disjunctive (otherwise,
  the claim follows by
  Corollary~\ref{cor:datalog-entailment-pres}). By completeness of
  hyperresolution, there is a derivation $\rho=(T,\lambda)$ of $P(\ve
  a)$ from $\DDP\cup\Dat$. Since $P(\ve a)\notin\Dat$ and $P$ is
  disjunctive, 
  there is an upper portion $\rho'$ of $\rho$ and a node $v$ in
  $\rho'$ such that:
  \begin{enumerate}
  \item $\lambda(v)$ contains a disjunctive predicate;
  \item $v$ has no successor $w$ in $T$ such that $\lambda(w)$
    contains a disjunctive predicate.
  \end{enumerate}
  We distinguish two cases. If 
  $\lambda(v)\in\Dat$, then
  $\lambda(v)=Q(\ve b)$ for some $Q$ and $\ve b$. By
  Lemma~\ref{lem:completeness-aux}, we have $\Xi'(\DDP)\cup\Dat\models
  Q^P(\ve b,\ve a)$. The claim follows since $Q(\ve z)\land Q^P(\ve
  z,\ve y)\to P(\ve y)\in\Xi'(\DDP)$.

  If $\lambda(v)\notin\Dat$, then 
  $v$ has successors $v_1,\dots,v_n$ ($n\ge 0$)
  in $T$ such that $\lambda(v)$ is a hyperresolvent of
  $\lambda(v_1),\dots,\lambda(v_n)$ and a rule in
  $\DDP\setminus\DDP_\top$ of the form $\bigwedge_{i=1}^n R_i(\ve
  s_i)\to\bigvee_{j=1}^m S_j(\ve t_j)$, where the atoms $R_i(\ve s_i)$
  are resolved with $\lambda(v_i)$. Since, by assumption, all $R_i$
  are datalog, $\Xi'(\DDP)$ contains a rule $r=(\bigwedge_{j=1}^m
  S_j^P(\ve t_j,\ve y))\land\bigwedge_{i=1}^n R_i(\ve s_i)\to P(\ve
  y)$. Let $\sigma$ be the substitution used in the hyperresolution
  step deriving $\lambda(v)$. By Lemma~\ref{lem:completeness-aux}, we
  then have $\Xi'(\DDP)\cup\Dat\models S_j^P(\ve t_j\sigma,\ve a)$ for
  every $j\in[1,m]$. By
  Proposition~\ref{prop:datalog-derivation-shape}, we have
  $\lambda(v_i)=R_i(\ve s_i\sigma)$, and hence, by
  Corollary~\ref{cor:datalog-entailment-pres},
  $\Xi'(\DDP)\cup\Dat\models R_i(\ve s_i\sigma)$ for every
  $i\in[1,n]$. The claim follows with~$r$.
\end{proof}

\begin{restatable}{lemma}{rew-sound} \label{lem:rew-sound} %
  Let $\DDP$ be a WL program, let $\Dat$ be a dataset
  over the signature of $\DDP$, and let $\rho=(T,\lambda)$ be a
  derivation of a fact $\alpha$ from $\Xi'(\DDP)\cup\Dat$ where
  $\alpha$ is not of the form $\top(a)$. Then:
  \begin{enumerate}
  \item For every $v\in T$, $\lambda(v)=P(\ve a)$ where $P$ occurs in
    $\DDP$, or $\lambda(v)=P^Q(\ve a,\ve b)$ where $P,Q$ are
    disjunctive in $\DDP$.
  \item If $\alpha=P(\ve a)$ where $P$ occurs in $\DDP$, then
    $\DDP\cup\Dat\models P(\ve a)$.
  \item If $\alpha=P^Q(\ve a,\ve b)$, then
    $\DDP\cup\Dat\models P(\ve a)\to Q(\ve b)$.
  \end{enumerate}
\end{restatable}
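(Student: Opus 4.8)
The plan is to argue by induction on the height of $v$ in $T$, using throughout the standing normality assumption on $\rho$ (so every node whose label mentions $\top$ is the root of a $\top$-stub). A preliminary observation is that $\Xi'(\DDP)$ is a \emph{datalog} program: rules of types (1)--(4) in Definition~\ref{def:xi} have a single head atom, and every rule of $\DDP$ with no disjunctive predicates is necessarily datalog, since the head predicates of a disjunctive rule all depend on that rule and are therefore disjunctive. Hence, by Proposition~\ref{prop:datalog-derivation-shape}, every label in $\rho$ is a singleton fact, which already discharges the ``singleton'' content of Claim~1. For the inductive step at an internal node $v$ with children $v_1,\dots,v_m$, the subderivation rooted at each $v_i$ whose label is not a $\top$-fact is a shorter derivation from $\Xi'(\DDP)\cup\Dat$, so the induction hypothesis supplies all three claims for its conclusion $\lambda(v_i)$; the remaining $\top$-facts contribute only entailments $\DDP\cup\Dat\models\top(\cdot)$ that always hold.

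The base case and the easy inductive cases are routine. If $v$ is a leaf, $\lambda(v)$ is either a fact of $\Dat$ over a predicate of $\DDP$ (Claims~1--2 immediate), a nullary instance of a type-(1) rule (a fact $R^R$ with $R$ disjunctive, Claims~1 and~3 trivial), or an empty-body rule of $\DDP$ without disjunctive predicates (Claims~1--2 immediate). If $v$ is obtained by a rule of $\DDP$ with no disjunctive predicates, Claim~2 of the hypothesis on the children gives $\DDP\cup\Dat\models\lambda(v_i)$ for all $i$, hence $\DDP\cup\Dat\models\lambda(v)$, and the head predicate occurs in $\DDP$. If $v$ is obtained by a type-(1) rule, $\lambda(v)=R^R(\ve a,\ve a)$ and Claim~3 amounts to the trivial implication $\DDP\cup\Dat\models R(\ve a)\to R(\ve a)$. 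If $v$ is obtained by a type-(4) rule $Q(\ve z)\land Q^R(\ve z,\ve y)\to R(\ve y)$, one child is matched to $Q(\ve z\sigma)$ (Claim~2: $\DDP\cup\Dat\models Q(\ve z\sigma)$) and the other to $Q^R(\ve z\sigma,\ve y\sigma)$ (Claim~3: $\DDP\cup\Dat\models Q(\ve z\sigma)\to R(\ve y\sigma)$), and chaining these gives $\DDP\cup\Dat\models R(\ve y\sigma)=\lambda(v)$.

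The heart of the argument is the two ``flipped'' cases, types (2) and (3), originating from a rule $r'\in\DDP\setminus\DDP_\top$ of the form $\fml\land Q(\ve t)\to\bigvee_{i=1}^n P_i(\ve s_i)$ (type 2) or $\fml\to\bigvee_{i=1}^n P_i(\ve s_i)$ (type 3). Here it is essential that $\DDP$ is WL: this guarantees that $\fml$ is precisely the conjunction of the \emph{datalog} body atoms of $r'$, with $Q$ the unique disjunctive body atom, so that the type-(2) rule $\fml_\top\land\fml\land\bigwedge_i P_i^R(\ve s_i,\ve y)\to Q^R(\ve t,\ve y)$ and the type-(3) rule $\fml\land\bigwedge_i P_i^R(\ve s_i,\ve y)\to R(\ve y)$ are well defined; moreover $Q$ and all $P_i$ are then disjunctive, which yields Claim~1. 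Let $\sigma$ be the substitution of the hyperresolution step at $v$. By Proposition~\ref{prop:datalog-derivation-shape} the children matched to atoms of $\fml$ carry singleton datalog facts $\lambda(v_j)=R_j(\ve s_j\sigma)$, and Claim~2 of the hypothesis gives $\DDP\cup\Dat\models R_j(\ve s_j\sigma)$; the children matched to the atoms $P_i^R(\ve s_i,\ve y)$ carry facts $P_i^R(\ve s_i\sigma,\ve y\sigma)$, and Claim~3 gives $\DDP\cup\Dat\models P_i(\ve s_i\sigma)\to R(\ve y\sigma)$. Now argue semantically: in any model of $\DDP\cup\Dat$, additionally assuming $Q(\ve t\sigma)$ in type (2) and nothing extra in type (3), all of $\fml\sigma$ holds, so $r'$ under $\sigma$ forces $\bigvee_{i=1}^n P_i(\ve s_i\sigma)$; picking a true disjunct $P_i(\ve s_i\sigma)$ and applying the entailed implication $P_i(\ve s_i\sigma)\to R(\ve y\sigma)$ gives $R(\ve y\sigma)$. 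Hence $\DDP\cup\Dat\models Q(\ve t\sigma)\to R(\ve y\sigma)$, i.e.\ Claim~3 for $\lambda(v)=Q^R(\ve t\sigma,\ve y\sigma)$ (type 2), and $\DDP\cup\Dat\models R(\ve y\sigma)=\lambda(v)$, i.e.\ Claims~1--2 (type 3). The main obstacle is not conceptual but bookkeeping: one must track precisely which body atom of each $\Xi'$-rule is matched to a $\Dat$-fact, to a $\top$-stub, to an auxiliary $P^R$-atom, or to a datalog atom, and dispose of the degenerate variants (nullary predicates, a head disjunct equal to $\bot$, the rule $\bot\to$), all of which collapse to a trivial reflexive implication or to vacuous entailment when $\DDP\cup\Dat$ is unsatisfiable.
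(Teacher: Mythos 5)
Your proof is correct and takes essentially the same route as the paper's: induction on the height of the derivation with a case analysis on the type of $\Xi'$-rule applied at the node, where the crucial flipped cases (types 2 and 3) are discharged exactly as in the paper by combining the original rule $r'\in\DDP$ with the inductively obtained implications $P_i(\ve s_i\sigma)\to R(\ve y\sigma)$. The only (harmless) deviation is that where the paper invokes Corollary~\ref{cor:datalog-entailment-pres} to transfer datalog facts (the datalog body atoms and the rules without disjunctive predicates) back to $\DDP$, you obtain these entailments directly from Claim~2 of the induction hypothesis applied to the children, which works just as well.
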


\begin{proof}
  We begin by showing (1). Since $\Xi'(\DDP)$ is datalog, $\lambda(v)$
  contains only one atom for every $v\in T$. The claim follows since
  $\Dat$ contains only predicates in $\DDP$ and
  the rules of $\Xi'(\DDP)$ can only infer facts of the form $P(\ve a)$
  where $P$ occurs in $\DDP$ or $P^Q(\ve a,\ve b)$ where $P,Q$ are
  disjunctive in $\DDP$.

  We now show (2) and (3) by simultaneous induction on the height $n$
  of $T$. If $n=0$, we distinguish three cases:
  \begin{itemize}
  \item $\alpha\in\Dat$. Then $\Dat\models\alpha$ and the claim is immediate.
  \item $\alpha=P(\ve a)$ where $P$ is datalog in $\DDP$ and $r=(\to
    P(\ve a))\in\Xi'(\DDP)$. Then $r\in\DDP$ and the claim is immediate.
  \item $\alpha=\bot^Q$, $\arity Q=0$, and $(\to
    \bot^Q)\in\Xi'(\DDP)$. Then, since $(\bot\to)\in\DDP$, we have $\DDP\models\bot\to
    Q$ for every predicate~$Q$.
  \item $\alpha=P^P$ where $P$ is disjunctive in $\DDP$ and
    $\arity P=0$. The claim ($\DDP\cup\Dat\models P\to P$) is
    immediate.
  \end{itemize}
  If $n>0$, the root $v$ of $T$ has children $v_1,\dots,v_n$ and $\alpha$
  is a hyperresolvent of $\lambda(v_1),\dots,\lambda(v_n)$ and a rule
  $r\in\Xi'(\DDP)\setminus\Xi'(\DDP)_\top$. We distinguish five cases:
  \begin{itemize}
  \item $r$ contains no disjunctive predicates. Then $\alpha$ is a
    datalog atom and the claim follows by
    Corollary~\ref{cor:datalog-entailment-pres}.
  \item $r=\fml_\top
    \to P^P(\ve y,\ve y)$
    where $P$ is disjunctive in $\DDP$. Then $\alpha=P^P(\ve a,\ve a)$ for
    some $\ve a$, and the claim ($\DDP\cup\Dat\models P(\ve a)\to P(\ve
    a)$) is immediate.
  \item $r=Q(\ve z)\land Q^P(\ve z,\ve y)\to P(\ve y)$. Then
    $\alpha=P(\ve a)$ for some $\ve a$. By the
    Corollary~\ref{cor:datalog-entailment-pres}, we have
    $\DDP\cup\Dat\models Q(\ve b)$, and by the inductive hypothesis,
    $\DDP\cup\Dat\models Q(\ve b)\to P(\ve a)$ for some $\ve b$. Hence
    $\DDP\cup\Dat\models P(\ve a)$.
  \item $r=\fml_\top
    \land\fml\land\bigwedge_{i=1}^n R_i^Q(\ve s_i,\ve y)\to P^Q(\ve
    t,\ve y)$ where $\fml$ is the conjunction of all datalog atoms in
    $r$ and $r'=\fml\land P(\ve t)\to\bigvee_{i=1}^n R_i(\ve
    s_i)\in\DDP$. Then $\alpha=P^Q(\ve a,\ve b)$ for some $\ve a$ and $\ve
    b$. By Corollary~\ref{cor:datalog-entailment-pres}, for every $i\in[1,n]$ there is
    some $\ve c_i$ such that $\DDP\cup\Dat\models\Subst{\ve a\ve b\ve
      c_1\dots\ve c_n}{\ve t\ve y\ve s_1\dots\ve s_n}\fml$, and by the inductive hypothesis,
    $\DDP\cup\Dat\models R_i(\ve c_i)\to Q(\ve b)$. With $r'$, we
    obtain $\DDP\cup\Dat\models P(\ve a)\to Q(\ve b)$.
  \item $r=\fml\land\bigwedge_{i=1}^n R_i^P(\ve s_i,\ve y)\to P(\ve
    y)$ where $\fml$ is the conjunction of all datalog atoms in $r$
    and $r'=\fml\to\bigvee_{i=1}^n R_i(\ve s_i)\in\DDP$. Then
    $\alpha=P(\ve a)$ for some $\ve a$. By
    Corollary~\ref{cor:datalog-entailment-pres}, for every $i\in[1,n]$
    there is some $\ve b_i$ such that $\DDP\cup\Dat\models\Subst{\ve
      a\ve b_1\dots\ve b_n}{\ve y\ve s_1\dots\ve s_n}\fml$, and by the
    inductive hypothesis, $\DDP\cup\Dat\models R_i(\ve b_i)\to P(\ve
    a)$. With $r'$, we obtain $\DDP\cup\Dat\models P(\ve a)$.\qedhere
  \end{itemize}
\end{proof}

By completeness of hyperresolution (and
Corollary~\ref{cor:datalog-entailment-pres} for facts of the form
$\top(a)$), Lemma~\ref{lem:rew-sound}(2) implies:

\begin{corollary} \label{cor:rew-sound} %
  Let $\DDP$ be a WL program. For every dataset
  $\Dat$ and atom $\alpha$ over the signature of $\DDP$
  such that $\Xi'(\DDP)\cup\Dat\models\alpha$ we have
  $\DDP\cup\Dat\models\alpha$.
\end{corollary}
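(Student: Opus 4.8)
The plan is to obtain Corollary~\ref{cor:rew-sound} directly from Lemma~\ref{lem:rew-sound}(2) together with the soundness and completeness of hyperresolution, after a short case split on the shape of $\alpha$. The point of the split is that Lemma~\ref{lem:rew-sound} is stated only for facts $\alpha$ that are not of the form $\top(a)$, so the $\top$-atoms have to be handled separately, and it speaks about derivations of a single fact, so the unsatisfiable case needs a small detour through~$\bot$.

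First I would treat $\alpha=\top(a)$. Since $\DDP_\top$-rules are excluded from the dependency graph and $\top$ occurs in no head outside $\DDP_\top$, the predicate $\top$ depends on no rule and is therefore a datalog predicate of $\DDP$; hence Corollary~\ref{cor:datalog-entailment-pres} gives $\Xi'(\DDP)\cup\Dat\models\top(a)$ iff $\DDP\cup\Dat\models\top(a)$, which settles this case. For $\alpha$ not of the form $\top(a)$, note that $\alpha$ being over the signature of $\DDP$ forces $\alpha=P(\ve a)$ with $P$ occurring in $\DDP$ (possibly $P=\bot$), so it is exactly an atom to which clause (2) of Lemma~\ref{lem:rew-sound} applies --- in particular $\alpha$ is not an auxiliary atom $P^Q(\ve a,\ve b)$. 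If $\Xi'(\DDP)\cup\Dat$ is satisfiable, then from $\Xi'(\DDP)\cup\Dat\models\alpha$ and completeness of hyperresolution there is a derivation $\rho=(T,\lambda)$ of $\alpha$ from $\Xi'(\DDP)\cup\Dat$; applying Lemma~\ref{lem:rew-sound}(2) at the root of $T$, whose label is $\alpha$, yields $\DDP\cup\Dat\models\alpha$. If instead $\Xi'(\DDP)\cup\Dat$ is unsatisfiable, then $\Xi'(\DDP)\cup\Dat\vdash\bot$, and since $\bot$ occurs in $\DDP$, Lemma~\ref{lem:rew-sound}(2) applied to a derivation of $\bot$ gives $\DDP\cup\Dat\models\bot$, so $\DDP\cup\Dat$ is unsatisfiable and hence entails $\alpha$ as well.

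There is essentially no hard step here: the real work is already done inside Lemma~\ref{lem:rew-sound}, whose proof is the genuinely technical induction over derivations from $\Xi'(\DDP)\cup\Dat$ (with the five rule shapes of $\Xi'(\DDP)$ and the use of Corollary~\ref{cor:datalog-entailment-pres} and Proposition~\ref{prop:datalog-derivation-shape}). The only thing to be careful about is the bookkeeping around the two boundary cases --- $\top$-atoms and unsatisfiability --- and checking that the hypothesis ``$\alpha$ over the signature of $\DDP$'' lines up precisely with the form of atom covered by clause (2) of Lemma~\ref{lem:rew-sound}, so that no auxiliary predicate can sneak in.
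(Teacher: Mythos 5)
Your proposal is correct and follows essentially the same route as the paper, which derives the corollary from Lemma~\ref{lem:rew-sound}(2) via completeness of hyperresolution, using Corollary~\ref{cor:datalog-entailment-pres} for the $\top(a)$ case. Your extra bookkeeping for the unsatisfiable case (deriving $\bot$ and applying clause (2) with the nullary predicate $\bot$, which occurs in $\DDP$ by the assumption $\DDP_\bot\subseteq\DDP$) is a sound way to make explicit what the paper leaves implicit.
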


\rewcorrect*
\begin{proof}
  By construction, $\Xi'(\DDP)$ is a datalog program of
  size quadratic in the size of $\DDP$.
  Correctness of the transformation (i.e., $\Xi'(\DDP)$ being a
  rewriting of $\DDP$) follows with Lemma~\ref{lem:rew-complete} and
  Corollary~\ref{cor:rew-sound}.
%
\end{proof}

\rewcorrectsinglequery*

\begin{proof}
  Follows analogously to Theorem~\ref{thm:rew-correct-semi} with minor
  adaptations of the relevant lemmas.
\end{proof}


\section{Proofs for Section~\ref{sec:procedure}}

\begin{definition}
  Let $r=\alpha\land\fml_r\to\fmm_r$ and $s=\fml_{s}\to \beta\lor\fmm_s$ be
  rules such that atom $\alpha$ is unifiable with $\beta$ with MGU $\theta$.
  The \emph{elementary unfolding} $\mathsf{ElemUnfold}(r,\alpha,s,\beta)$ 
  of $r$ at $\alpha$ using $s$ at $\beta$ is the pair
    $((\fml_r\land\fml_s\to\fmm_r\lor\fmm_s)\theta, \theta)$.
\end{definition}

\begin{procedure}[t]
\caption{$\mathsf{Unfold}$}\label{alg:unfold}
\begin{footnotesize}
    \textbf{Input:}
    $\DDP$: a disjunctive program; 
    $r$: a rule; $\alpha$: a body atom of $r$ \\
    \textbf{Output:}
    the unfolding of $r$ at $\alpha$ by $\DDP$
    \begin{algorithmic}[1]
      \State $S_0:=\mset{(s,\beta)}{s\in\DDP,\beta\textup{ a head atom in }s\textup{ unifiable with }\alpha}$
      \State $i:=0$
      \Repeat
      \State $S_{i+1}:=\emptyset$
      \For{\textbf{each} $(s,\beta)\in S_i$}
      \State $(s',\theta):=\mathsf{ElemUnfold}(r,\alpha,s,\beta)$
      \State $S_{i+1}:=S_{i+1}\cup\mset{(s',\beta'\theta)}{(s,\beta')\in S_i,\beta\neq\beta'}$
      \EndFor
      \State $i:=i+1$
      \Until{$S_i\neq\emptyset$}
      \State \Return $(\DDP\setminus\set{r})\cup\mset{s}{(s,\beta)\in S_j,\text{ for }1\le j<i}$
    \end{algorithmic}
\end{footnotesize}
\end{procedure}

An elementary unfolding step thus amounts to resolving the
relevant rules over the given predicates.
\emph{Unfolding} is then a transformation that 
allows us to replace a rule in a program with 
a sequence of elementary unfoldings in such a way that
equivalence is preserved. 
\begin{definition}
Let $\DDP$ be a disjunctive program, let $r \in \DDP$ and let $\alpha$
be a body atom in $r$; then, the \emph{unfolding of $r$ at $\alpha$
in $\DDP$}, denoted $\mathsf{Unfold}(\DDP,r,\alpha)$,
is the result of applying Procedure~\ref{alg:unfold}
to $\DDP$, $r$, and $\alpha$.
\end{definition}

\begin{restatable}{proposition}{unfolding-sound} \label{prop:unfolding-sound}
  Let $\DDP$ be a disjunctive program, $r$ a rule in $\DDP$, and $\alpha$ a
  body atom of $r$. Then $\DDP\models\mathsf{Unfold}(\DDP,r,\alpha)$.
\end{restatable}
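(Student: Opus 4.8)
The program $\mathsf{Unfold}(\DDP,r,\alpha)$ returned by Procedure~\ref{alg:unfold} is the union of $\DDP\setminus\set{r}$ with the set of all rules occurring as first components of the pairs collected in $S_1,\dots,S_{i-1}$. Every rule of $\DDP\setminus\set{r}$ already belongs to $\DDP$, so it suffices to show $\DDP\models s'$ for each newly introduced rule $s'$; only this direction is needed here, the dataset-independent preservation used in Theorem~\ref{thm:unfolding-correct} being treated separately. The plan is to isolate the soundness of a single elementary unfolding step and then lift it to the iterated construction by induction on the loop counter.

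The key lemma is: if $(r',\theta)=\mathsf{ElemUnfold}(r,\alpha,s,\beta)$ with $r=\alpha\land\fml_r\to\fmm_r$ and $s=\fml_s\to\beta\lor\fmm_s$, then $\set{r,s}\models r'$, where $r'=(\fml_r\land\fml_s\to\fmm_r\lor\fmm_s)\theta$. This is simply soundness of binary resolution on the atom pair $\alpha,\beta$: take any model $\mathcal M$ of $\set{r,s}$ and any assignment satisfying the body $(\fml_r\land\fml_s)\theta$; since $\mathcal M$ validates the instance $s\theta$ of $s$ and $\fml_s\theta$ holds, either $\fmm_s\theta$ holds, so the head of $r'$ holds, or $\alpha\theta=\beta\theta$ holds, in which case $\mathcal M$ validates the instance $r\theta$ whose body $(\alpha\land\fml_r)\theta$ now holds, so $\fmm_r\theta$ holds and again the head of $r'$ holds. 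Here I use that rules are universally closed sentences, so all their instances (in particular $r\theta$ and $s\theta$) are entailed, and the standing assumption that distinct rules share no variables (taking fresh variants if needed), which makes the MGU $\theta$ well behaved; the degenerate cases in which $\fmm_r$ or $\fmm_s$ is empty ($\bot$ in the head), or in which a body is empty or must be padded with $\top$-atoms for safety, do not affect the computation.

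Next I would run the induction over Procedure~\ref{alg:unfold}: for every $j\ge 1$, every rule $s'$ appearing as the first component of a pair in $S_j$ satisfies $\DDP\models s'$. For $j=1$, such an $s'$ equals $\mathsf{ElemUnfold}(r,\alpha,s,\beta)$ for some $(s,\beta)\in S_0$, so $s\in\DDP$ and $r\in\DDP$, and the lemma gives $\set{r,s}\models s'$, hence $\DDP\models s'$. For $j>1$, $s'$ equals $\mathsf{ElemUnfold}(r,\alpha,s,\beta)$ for some $(s,\beta)\in S_{j-1}$; by the inductive hypothesis $\DDP\models s$, and $r\in\DDP$, so the lemma again yields $\DDP\models s'$. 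Together with the trivial $\DDP\models\DDP\setminus\set{r}$, this gives $\DDP\models\mathsf{Unfold}(\DDP,r,\alpha)$.

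I do not expect a genuine obstacle here: the core is the standard soundness of resolution. The only points requiring care are the bookkeeping of Procedure~\ref{alg:unfold} --- verifying that the rules fed into the $\mathsf{ElemUnfold}$ calls in iteration $j$ are exactly those produced in iteration $j-1$, so that the induction on $j$ closes, and that $\mathsf{ElemUnfold}$ is invoked only when $\alpha$ actually unifies with the relevant head atom --- together with the routine handling of variable renaming and most general unifiers. Notably, neither termination nor completeness of the unfolding procedure is needed for this direction.
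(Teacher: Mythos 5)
Your proof is correct and follows essentially the same route as the paper, which simply invokes soundness of resolution, noting that every rule in $\mathsf{Unfold}(\DDP,r,\alpha)\setminus\DDP$ is obtained by (iterated) resolution from rules in $\DDP$. Your explicit semantic argument for a single elementary unfolding plus the induction over the iterations of Procedure~\ref{alg:unfold} just spells out the details that the paper leaves implicit.
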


\begin{proof}
  The claim follows by soundness of resolution since every clause in
  $\mathsf{Unfold}(\DDP,r,\alpha)\setminus\DDP$ is obtained by
  resolution from clauses in~$\DDP$.
\end{proof}


\begin{lemma} \label{lem:unfolding-complete-aux} %
  Let $\DDP$ be a disjunctive program, let
  $r=\bigwedge_{i=1}^n\alpha_i\to\fmm$ ($n\ge 1$) be a rule in $\DDP$
  where $\alpha_1$ is IDB in $\DDP$, let $\Dat$ be a dataset
  containing no occurrences of IDB predicates in $\DDP$, and let
  $\sigma$ be a ground substitution. If
  $\mathsf{Unfold}(\DDP,r,\alpha_1)\cup\Dat\models\alpha_i\sigma\lor\fmn_{\alpha_i}$
  for every $i\in[1,n]$ (where each $\fmn_{\alpha_i}$ is a ground
  disjunction of facts), then
  $\mathsf{Unfold}(\DDP,r,\alpha_1)\cup\Dat\models\fmm\sigma\lor\fmn_{\alpha_1}\lor\dots\lor\fmn_{\alpha_n}$.
\end{lemma}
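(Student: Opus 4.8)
Rather than transforming hyperresolution derivations directly (as the earlier lemmas do), the plan is to reduce the statement to the dataset-independent equivalence for unfolding recalled above---that $\DDP\models\fml$ iff $\mathsf{Unfold}(\DDP,r,\alpha)\models\fml$ for every disjunction of facts $\fml$ \cite{Gergatsoulis97}---applied not to $\DDP$ but to the program $\DDP\cup\Dat$, the facts of $\Dat$ being read as rules with empty body.

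The crucial step is the identity $\mathsf{Unfold}(\DDP\cup\Dat,r,\alpha_1)=\mathsf{Unfold}(\DDP,r,\alpha_1)\cup\Dat$, and this is the only place where the two side conditions of the lemma are used. Since $\Dat$ contains no IDB predicate of $\DDP$, every fact of $\Dat$ has an extensional head predicate; as $\alpha_1$ is an IDB atom, such a head atom is not unifiable with $\alpha_1$, so no element of $\Dat$ ever contributes a resolvent in Procedure~\ref{alg:unfold}. Hence the initial set $S_0$, and therefore all the $S_i$ and the set of rules returned, is the same whether one unfolds in $\DDP$ or in $\DDP\cup\Dat$; and since $r\notin\Dat$ (the body of $r$ is nonempty as $n\ge1$), deleting $r$ leaves $\Dat$ untouched. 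The identity follows.

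Given the identity, I would finish as follows. Applying the unfolding equivalence to $\DDP\cup\Dat$ and to each disjunction $\alpha_i\sigma\lor\fmn_{\alpha_i}$ turns the hypotheses into $\DDP\cup\Dat\models\alpha_i\sigma\lor\fmn_{\alpha_i}$ for every $i\in[1,n]$. A short model-theoretic argument then gives $\DDP\cup\Dat\models\fmm\sigma\lor\fmn_{\alpha_1}\lor\dots\lor\fmn_{\alpha_n}$: for any model $M$ of $\DDP\cup\Dat$, either $M$ satisfies a disjunct of some $\fmn_{\alpha_i}$ and we are done, or else the entailments above force $M\models\alpha_i\sigma$ for all $i$, so $M\models\bigwedge_{i=1}^n\alpha_i\sigma$, and hence $M\models\fmm\sigma$ since $M$ satisfies the ground instance $\bigwedge_{i=1}^n\alpha_i\sigma\to\fmm\sigma$ of $r$. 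Applying the unfolding equivalence once more, this time to deduce $\mathsf{Unfold}(\DDP\cup\Dat,r,\alpha_1)\models\fml$ from $\DDP\cup\Dat\models\fml$ with $\fml=\fmm\sigma\lor\fmn_{\alpha_1}\lor\dots\lor\fmn_{\alpha_n}$, and invoking the identity above, yields $\mathsf{Unfold}(\DDP,r,\alpha_1)\cup\Dat\models\fmm\sigma\lor\fmn_{\alpha_1}\lor\dots\lor\fmn_{\alpha_n}$, which is the claim.

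The main obstacle is the first step: one has to check carefully that $\mathsf{Unfold}$ genuinely commutes with adjoining $\Dat$---that the ground EDB facts of $\Dat$ cannot interfere with the unfolding of the IDB atom $\alpha_1$---which is exactly the content of the two hypotheses and the only place they enter. Everything after that is a routine combination of the cited preservation result with the semantics of a single ground rule instance; the one remaining delicate point is the implicit convention that $\sigma$ ranges over the variables of $r$, so that $\bigwedge_{i=1}^n\alpha_i\sigma\to\fmm\sigma$ is indeed an instance of $r$.
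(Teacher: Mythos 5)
Your proposal is correct in substance, but it takes a genuinely different route from the paper. The paper proves this lemma by direct surgery on hyperresolution derivations: it takes a derivation of $\alpha_1\sigma\lor\fmn'_{\alpha_1}$ from $\mathsf{Unfold}(\DDP,r,\alpha_1)\cup\Dat$, inspects the rule $s$ applied at the root, forms the iterated elementary unfoldings $r_1,\dots,r_l$ of $r$ with $s$ at the (possibly several) head atoms unifying with $\alpha_1\sigma$, observes that $r_l$ belongs to $\mathsf{Unfold}(\DDP,r,\alpha_1)$ by construction of Procedure~\ref{alg:unfold}, and applies the corresponding ground instance of $r_l$ together with the remaining hypotheses; this keeps the appendix self-contained. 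You instead delegate the hard content to the cited preservation result of Gergatsoulis ($\DDP\models\fml$ iff $\mathsf{Unfold}(\DDP,r,\alpha)\models\fml$), applied to the closed program $\DDP\cup\Dat$, and your genuinely new ingredient is the commutation identity $\mathsf{Unfold}(\DDP\cup\Dat,r,\alpha_1)=\mathsf{Unfold}(\DDP,r,\alpha_1)\cup\Dat$, which you justify correctly and which is exactly where the two side conditions ($\alpha_1$ IDB, $\Dat$ free of IDB predicates) enter; the middle model-theoretic step is fine given that $\sigma$ grounds all variables of $r$ (which follows from safety once every $\alpha_i\sigma$ is ground). What your route buys: it proves the dataset-parametric statement (in effect Lemma~\ref{lem:unfolding-complete} and hence Theorem~\ref{thm:unfolding-correct}) in one stroke, making this auxiliary lemma almost superfluous, and it cleanly isolates why EDB-only data cannot interfere with unfolding an IDB atom. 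What it costs: the argument is only as strong as the black box, and the appendix exists precisely to re-prove that completeness direction in a form matched to the paper's conventions (programs containing empty-head/$\bot$ rules, the $\top$ axiomatisation, and the specific iterated $\mathsf{Unfold}$ procedure), so one must check that the cited theorem applies verbatim to this class of programs; note also that you only truly need the nontrivial half of the citation, since the transfer of the hypotheses to $\DDP\cup\Dat$ already follows from soundness of resolution (Proposition~\ref{prop:unfolding-sound}).
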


\begin{proof}
  For every $i\in[1,n]$, let
  $\mathsf{Unfold}(\DDP,r,\alpha_1)\cup\Dat\models\alpha_i\sigma\lor\fmn_{\alpha_i}$. Let
  $\rho=(T,\lambda)$ be a derivation of
  $\alpha_1\sigma\lor\fmn'_{\alpha_1}$ from
  $\mathsf{Unfold}(\DDP,r,\alpha_1)\cup\Dat$ for some
  $\fmn'_{\alpha_1}\subseteq\fmn_{\alpha_1}$ (existence of $\rho$
  follows by completeness of hyperresolution). Let $s$ be the rule
  used to derive the label of the root $v$ of $\rho$ (i.e.,
  $\alpha_1\sigma\lor\fmn'_{\alpha_1}$) from the labels of its
  children $v_1,\dots,v_m$ ($s$ must exist since $\alpha_1$ is an IDB
  predicate and hence, by assumption, $\alpha_1\notin\Dat$), and let
  $\tau$ be substitution used in the corresponding hyperresolution step. Then
  $s=\bigwedge_{j=1}^m\beta_j\to\alpha'_1\lor\dots\lor\alpha'_l\lor\fmm_{\alpha'}$
  such that $\lambda(v_j)=\beta_j\tau\lor\fmn_{\beta_j}$ for every
  $j\in[1,m]$, $\alpha_1\sigma=\alpha'_1\tau=\dots=\alpha'_l\tau$
  and
  $\fmn'_{\alpha_1}=\fmm_{\alpha'}\tau\lor\fmn_{\beta_1}\lor\dots\lor\fmn_{\beta_m}$.
  Let $r_1$ be the rule obtained by elementary unfolding of $r$ at
  $\alpha_1$ using $s$ at $\alpha'_1$, and let $r_k$ ($2\le k\le l$)
  be the rule obtained by elementary unfolding of $r$ at $\alpha_1$
  using $r_{k-1}$ at $\alpha'_k$. Then
  $(\bigwedge_{j=1}^m\beta_j\tau)\land(\bigwedge_{i=2}^n\alpha_i\sigma)$
  is a substitution instance of the body of $r_l$ and
  $\fmm_{\alpha'}\tau\lor\fmm\sigma$ is the corresponding instance
  of the head of $r_l$. Hence, the claim follows from the assumption
  ($\mathsf{Unfold}(\DDP,r,\alpha_1)\cup\Dat\models\alpha_i\sigma\lor\fmn_{\alpha_i}$
  for every $i\in[2,n]$) and soundness of hyperresolution (which
  implies
  $\mathsf{Unfold}(\DDP,r,\alpha_1)\cup\Dat\models\beta_j\tau\lor\fmn_{\beta_j}$
  for every $j\in[1,m]$) with $r_l$: we obtain
  $\mathsf{Unfold}(\DDP,r,\alpha_1)\cup\Dat\models\fmm_{\alpha'}\tau\lor\fmm\sigma\lor\fmn_{\beta_1}\lor\dots\lor\fmn_{\beta_m}\lor\fmn_{\alpha_2}\lor\dots\lor\fmn_{\alpha_n}=\fmm\sigma\lor\fmn'_{\alpha_1}\lor\fmn_{\alpha_2}\lor\dots\lor\fmn_{\alpha_n}\subseteq\fmm\sigma\lor\fmn_{\alpha_1}\lor\dots\lor\fmn_{\alpha_n}$.
\end{proof}

\begin{restatable}{lemma}{unfolding-complete}
  \label{lem:unfolding-complete} %
  Let $\DDP$ be a disjunctive program, let $\Dat$ be a dataset
  containing no occurrences of IDB predicates in $\DDP$, let
  $r$ 
  be a rule in $\DDP$, and let $\alpha$ be an IDB body atom of $r$.
  For every disjunction of facts $\fml$ such that
  $\DDP\cup\Dat\vdash\fml$ we have
  $\mathsf{Unfold}(\DDP,r,\alpha)\cup\Dat\models\fml$.
\end{restatable}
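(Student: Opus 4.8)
The plan is to reduce the statement to Lemma~\ref{lem:unfolding-complete-aux} by an induction on hyperresolution derivations. Since $\DDP\cup\Dat\vdash\fml$, by completeness of hyperresolution there is a derivation $\rho=(T,\lambda)$ of $\fml$ from $\DDP\cup\Dat$. I would prove, by induction on the size of the subtree of $T$ rooted at a node $v$, the stronger claim that $\mathsf{Unfold}(\DDP,r,\alpha)\cup\Dat\models\lambda(v)$ for every $v\in T$; instantiating this at the root of $T$ yields $\mathsf{Unfold}(\DDP,r,\alpha)\cup\Dat\models\fml$, as required.

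For the base case, a leaf $v$ is labelled with a disjunction of facts that is an element of $\DDP\cup\Dat$, i.e.\ either a fact of $\Dat$ or the head $\psi$ of an empty-bodied rule $(\to\psi)\in\DDP$. In the first case $\lambda(v)\in\Dat\subseteq\mathsf{Unfold}(\DDP,r,\alpha)\cup\Dat$; in the second case, since $r$ has a non-empty body (it contains the atom $\alpha$), we have $(\to\psi)\neq r$, hence $(\to\psi)\in\DDP\setminus\{r\}\subseteq\mathsf{Unfold}(\DDP,r,\alpha)$. Either way $\mathsf{Unfold}(\DDP,r,\alpha)\cup\Dat\models\lambda(v)$.

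For the inductive step, $v$ has children $w_1,\dots,w_n$ ($n\ge 1$) and $\lambda(v)$ is a hyperresolvent of some rule $s\in\DDP$ and $\lambda(w_1),\dots,\lambda(w_n)$, where by the induction hypothesis $\mathsf{Unfold}(\DDP,r,\alpha)\cup\Dat\models\lambda(w_i)$ for every $i$. I would split on whether $s$ is the rule $r$ being unfolded. If $s\neq r$, then $s\in\DDP\setminus\{r\}\subseteq\mathsf{Unfold}(\DDP,r,\alpha)$; since $\lambda(v)$ is a logical consequence of $s$ together with $\lambda(w_1),\dots,\lambda(w_n)$ (soundness of a single hyperresolution step), we get $\mathsf{Unfold}(\DDP,r,\alpha)\cup\Dat\models\lambda(v)$. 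If $s=r$, write $r=\bigwedge_{i=1}^n\alpha_i\to\fmm$ with $\alpha_1=\alpha$ the unfolded atom (it is IDB by hypothesis), let $\sigma$ be the substitution of the hyperresolution step, and observe that each premise label decomposes as $\lambda(w_i)=\alpha_i\sigma\lor\fmn_i$ with $\fmn_i$ a ground disjunction of facts, while $\lambda(v)=\fmm\sigma\lor\fmn_1\lor\dots\lor\fmn_n$. The induction hypothesis gives $\mathsf{Unfold}(\DDP,r,\alpha)\cup\Dat\models\alpha_i\sigma\lor\fmn_i$ for every $i$, so Lemma~\ref{lem:unfolding-complete-aux}---applicable because $\alpha_1$ is IDB in $\DDP$ and $\Dat$ contains no IDB predicate of $\DDP$---yields $\mathsf{Unfold}(\DDP,r,\alpha)\cup\Dat\models\fmm\sigma\lor\fmn_1\lor\dots\lor\fmn_n=\lambda(v)$.

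The substantive content is entirely packaged in Lemma~\ref{lem:unfolding-complete-aux}; the present argument merely threads it through an otherwise standard induction on derivations. The only points needing care are the bookkeeping around the shape of hyperresolvents---checking that, at a node where $r$ is applied, the premises really split as $\alpha_i\sigma\lor\fmn_i$ with ground remainders, so that the hypotheses of Lemma~\ref{lem:unfolding-complete-aux} are met verbatim---and the observation (used for the leaves) that $r$ can never occur as a leaf label because its body is non-empty, which is exactly what ensures every leaf label of $\rho$ survives the transformation.
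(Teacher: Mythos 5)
Your proof is correct and follows essentially the same route as the paper's: induction on the hyperresolution derivation, the case $s\neq r$ handled by the fact that all other rules survive the unfolding, and the case $s=r$ discharged by Lemma~\ref{lem:unfolding-complete-aux}. Your extra bookkeeping about leaves and the shape of premises is fine but adds nothing beyond what the paper's shorter argument already covers.
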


\begin{proof}
  Let $\rho=(T,\lambda)$ be a derivation of $\fml$ from $\DDP\cup\Dat$
  and let $v$ be the root of $T$. We proceed by induction on the size
  of~$T$. If $\lambda(v)\in\Dat$, the claim is immediate. Otherwise,
  let $v_1,\dots,v_n$ be the successors of $v$ in $T$ ($n=0$ if $v$ is
  a leaf in $T$). By the inductive hypothesis, we have
  $\mathsf{Unfold}(\DDP,r,\alpha)\cup\Dat\models\lambda(v_i)$ for
  every $i\in[1,n]$. We distinguish two cases, depending on the rule
  $s\in\DDP$ used to derive $\lambda(v)$ from
  $\lambda(v_1),\dots,\lambda(v_n)$. If $s\ne r$, we have
  $s\in\mathsf{Unfold}(\DDP,r,\alpha)$, and the claim follows
  with~$s$. If $s=r$, the claim follows by
  Lemma~\ref{lem:unfolding-complete-aux}.
\end{proof}

\unfoldingcorrect*

\begin{proof}
  Since $\P$ is a rewriting of $\P_0$, for the first claim it suffices
  to show $\eval\P\Dat=\eval{\mathsf{Unfold}(\DDP,r,\alpha)}\Dat$ for
  datasets $\Dat$ over the signature of $\P_0$. This follows by
  Proposition~\ref{prop:unfolding-sound} and
  Lemma~\ref{lem:unfolding-complete} since $\P_0$ contains no
  occurrences of IDB predicates in $\P$, and hence neither do datasets
  over the signature of $\P_0$. The second claim is immediate since
  all rules in $\mathsf{Unfold}(\DDP,r,\alpha)\setminus\DDP$ are
  obtained by resolution from those in $\DDP$, and the set of IDB
  predicates in a program is closed under resolution.
\end{proof}


\section{Proofs for Section \ref{sec:DLs}}

\begin{restatable}{proposition}{RLorComplexity} \label{prop:RLor}
Checking $\Ont \cup \Dat \models \alpha$ for 
$\Ont$ an $\RLor$ ontology, $\Dat$  a dataset, and 
$\alpha$  a fact
is \coNP-complete. 
\end{restatable}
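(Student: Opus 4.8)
The plan is to establish the two matching bounds separately: membership in $\coNP$ via a polynomial grounding argument, and $\coNP$-hardness by a reduction from non-$3$-colourability.

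\textbf{Membership.} I would first note that, reading off the translations in Table~\ref{tab:RL}, every axiom of an $\RLor$ ontology $\Ont$ yields a rule with at most three variables, and every predicate occurring has arity at most two; hence the finitely many congruence axioms for $\equality$ over the signature of $\Ont\cup\Dat$ also have at most three variables. Let $\P$ be the program corresponding to $\Ont$ together with these equality axioms (and $\P_\top$, $\P_\bot$). Since $\P$ has a bounded number of variables per rule and boundedly many argument positions per predicate, the set of all ground instances of $\P\cup\Dat$ over the constants in $\P\cup\Dat$ has size polynomial in $|\Ont|+|\Dat|$, and there are only polynomially many ground atoms. As $\P\cup\Dat$ is function-free and $\alpha$ is ground, $\P\cup\Dat\models\alpha$ holds iff this polynomial grounding propositionally entails $\alpha$ (treating ground atoms as propositional variables). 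Non-entailment is then witnessed by a truth assignment to these polynomially many ground atoms that satisfies all ground rules and falsifies $\alpha$, which can be guessed and checked in polynomial time; hence the complement of the problem is in \textsc{NP}, so the problem is in $\coNP$.

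\textbf{Hardness.} For the lower bound I would reduce from the complement of graph $3$-colourability, which is $\coNP$-complete. Fix colour concepts $R_1,R_2,R_3$ and, for each $i$, a concept $H_i$ (``has an $R_i$-coloured $E$-successor''), and let $\Ont$ be the fixed $\RLor$ ontology containing $V\sqsubseteq R_1\sqcup V'$ and $V'\sqsubseteq R_2\sqcup R_3$ (type~11, with $V'$ fresh), $\exists E.R_i\sqsubseteq H_i$ (type~3), and $R_i\sqcap H_i\sqsubseteq\bot$ (type~2, with $\bot$ on the right) for $i\in\{1,2,3\}$. Given a graph $G=(N,A)$, put $\Dat_G=\{V(v)\mid v\in N\}\cup\{E(u,v),E(v,u)\mid\{u,v\}\in A\}$. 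A routine check shows that in every model of $\Ont\cup\Dat_G$ each vertex lies in some class $R_i$ while no edge joins two vertices of the same class, so models of $\Ont\cup\Dat_G$ correspond to proper $3$-colourings of $G$ and vice versa; thus $\Ont\cup\Dat_G$ is unsatisfiable, i.e.\ $\Ont\cup\Dat_G\models\bot$, iff $G$ is not $3$-colourable. This is a polynomial-time reduction to the entailment problem (taking $\alpha=\bot$; alternatively one adds a fact $\mathsf{Aux}(c)$ for a fresh constant $c$ and queries a fresh atom $A(c)$, which is entailed precisely when $\Ont\cup\Dat_G$ is unsatisfiable), so $\coNP$-hardness follows.

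\textbf{Where the work is.} Both directions are conceptually light. The point that actually needs care in the membership proof is that the \emph{equality} congruence axioms over the signature must also have boundedly many variables, since it is precisely this global bound on variables --- not merely on the ontology axioms --- that keeps the grounding polynomial in the \emph{combined} input size and so places the problem in $\coNP$ rather than co-\textsc{NExpTime}. On the hardness side the only fiddly aspect is that the restricted shapes of $\RLor$ axioms force the ``adjacent vertices get distinct colours'' constraint to be routed through the auxiliary concepts $H_i$ rather than stated directly.
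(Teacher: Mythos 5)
Your proof is correct and takes essentially the same route as the paper's: $\coNP$ membership via the bounded-variable, bounded-arity polynomial grounding argument (including the equality congruence axioms), and hardness via an encoding of non-3-colourability whose satisfiable models correspond to proper colourings. The only cosmetic differences are that you spell out the normalisation of the three-way colour disjunction into two type-11 axioms and omit the pairwise colour-disjointness axioms, which the paper includes but which are indeed not needed for the reduction.
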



\begin{proof}
Membership in $\coNP$ follows from the fact that 
both the rules in Table \ref{tab:RL} and
the rules axiomatising equality and $\top$
contain a bounded number of variables and atoms; hence,
the corresponding programs can be grounded in 
polynomial time and entailment in the
resulting propositional program can be checked in \coNP.
For hardness, it suffices to provide a straightforward
encoding of non-3-colorability. The following DL ontology $\mathcal{O}$
can be normalised into an $\RLor$ ontology
\begin{align*}
V&\sqsubseteq  R \sqcup G \sqcup B
& B &\sqcap \exists \mathsf{edge}.B \sqsubseteq \bot
& B &\sqcap G \sqsubseteq \bot\\
&&G &\sqcap \exists \mathsf{edge}.G \sqsubseteq \bot
& G &\sqcap R \sqsubseteq \bot \\
&&R &\sqcap \exists \mathsf{edge}.R \sqsubseteq \bot
& B &\sqcap R \sqsubseteq \bot
\end{align*}
Given an undirected graph $G = (V,E)$, the dataset
$\Dat_{G}$ contains a fact $V(a)$ for each node $a \in V$ and
facts $\mathsf{edge}(a,b)$ and $\mathsf{edge}(b,a)$ for each edge
connecting $a$ and $b$ in $E$. Then, $G$ is non-3-colorable iff
$\mathcal{O} \cup \Dat_G$ is unsatisfiable.
\end{proof}

\semiRLorComplexity*

\begin{proof}
  Hardness follows directly from the fact that the problem is already
  \textsc{PTime}-hard if $\Ont$ is an OWL 2 RL ontology; thus, we
  focus on proving membership in \textsc{PTime}.  By Theorem
  \ref{thm:rew-correct} we have that $\Ont \cup \Dat \models \alpha$
  iff $\Xi(\Ont) \cup \Dat \models \alpha\theta$ for some injective
  predicate renaming $\theta$.  Thus, it suffices to show that the
  evaluation of $\Xi(\Ont)$ over $\Dat$ can be computed in polynomial
  time in the size of $\Ont$ and $\Dat$.  First, $\Xi(\Ont)$ is of
  size at most quadratic in the size of $\Ont$, and the arity of a
  predicate in $\Xi(\Ont)$ is at most double the arity of a predicate
  in $\Ont$.  As we can see in Table \ref{tab:RL}, the rules
  corresponding to Axioms 1-11 contain a bounded number of variables
  and atoms in the body, and hence the number of variables in the body
  of each rule and the arity of predicates in $\Xi(\Ont)$ is bounded
  as well, as required.
\end{proof}

Let us fix an arbitrary $\SHIQ$ ontology $\Ont$, and let
$\Omega_{\Ont}$ be obtained from $\Ont$ by first removing all axioms
of the form 5 and then adding the relevant axioms to preserve fact
entailment as described in \cite{grauIJCAI13}. Furthermore, let us
denote with $\mathcal{R}_{\Ont}$ the subset of all axioms in $\Ont$ of
the form 5, 4, and 8. Finally, let $\mathsf{DD}(\Omega_{\Ont})$ be the
result of applying the algorithm in~\cite{hms07reasoning} to
$\Omega_{\Ont}$.
The following lemma summarises the results 
in \cite{hms07reasoning,grauIJCAI13} that are relevant
to us.

\begin{lemma}\label{lem:properties-previous}
The following properties hold:
\begin{enumerate}
  \item $\Omega_{\Ont}$ is a model conservative
extension of $\Ont$.
\item $\Omega_{\Ont}\models\mathsf{DD}(\Omega_{\Ont})$.
\item For each dataset $\Dat$ and each
fact $\alpha$ the following holds:
\begin{align}
\Ont \cup \Dat \models \alpha \text{ ~ iff ~ } &
\Omega_{\Ont} \cup \eval{\mathcal{R}_{\Ont}}{\Dat} \models \alpha  \label{eq:Omega}\\
\Omega_{\Ont} \cup \Dat \models \alpha  \text{ ~ iff ~ } &  \mathsf{DD}(\Omega_{\Ont}) \cup \Dat \models \alpha  \label{eq:DD}
\end{align}
\end{enumerate}
\end{lemma}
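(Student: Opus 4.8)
The plan is to derive all three claims directly from the results of \citeA{hms07reasoning} and \citeA{grauIJCAI13}, the only real work being to check that our constructions instantiate their settings correctly and that the statements are quoted in the strong form needed for the subsequent theorem.

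First I would treat Claim~1 and equivalence~\eqref{eq:Omega} together, since both concern the passage from $\Ont$ to $\Omega_{\Ont}$. Recall that $\Omega_{\Ont}$ is obtained by deleting the role-composition axioms (form~5) of $\Ont$ and adding the auxiliary axioms prescribed by the transitivity-elimination construction of \citeA{grauIJCAI13} (a refinement of the encoding of \citeA{hms07reasoning}). That construction is shown there to yield a model conservative extension: every model of $\Ont$ expands to a model of $\Omega_{\Ont}$ over the same domain without changing the interpretation of the symbols of $\Ont$, and every model of $\Omega_{\Ont}$ restricts to a model of $\Ont$; this is Claim~1 verbatim. Equivalence~\eqref{eq:Omega} is the matching ``completeness for data'' statement: $\mathcal{R}_{\Ont}$ collects exactly the RBox axioms (forms~4, 5, 8) affected by the transformation, it is a plain datalog program on the binary role predicates, so $\eval{\mathcal{R}_{\Ont}}{\Dat}$ is well defined, and materialising it over $\Dat$ reconstructs precisely the role consequences that $\Omega_{\Ont}$ can no longer derive; hence $\Ont\cup\Dat$ and $\Omega_{\Ont}\cup\eval{\mathcal{R}_{\Ont}}{\Dat}$ entail the same facts. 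I would cite the relevant lemma of \citeA{grauIJCAI13} for this.

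Next I would address Claim~2 and equivalence~\eqref{eq:DD}, which concern the KAON2 translation $\mathsf{DD}(\cdot)$ of \citeA{hms07reasoning}. Since $\Omega_{\Ont}$ contains no role-composition axioms and in particular no transitive roles, it lies in the fragment to which the translation of \citeA{hms07reasoning} applies directly. The program $\mathsf{DD}(\Omega_{\Ont})$ is produced by clausifying $\Omega_{\Ont}$ and saturating under a sound resolution-based calculus, so every rule of $\mathsf{DD}(\Omega_{\Ont})$ is a logical consequence of $\Omega_{\Ont}$; that is Claim~2. Equivalence~\eqref{eq:DD} is the central correctness theorem of \citeA{hms07reasoning}: $\Omega_{\Ont}$ and $\mathsf{DD}(\Omega_{\Ont})$ entail the same ground facts over every dataset, unsatisfiability being the special case of entailing $\bot$.

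The main thing to get right is not any computation but the bookkeeping of hypotheses: one must verify that $\Omega_{\Ont}$ really falls into the syntactic fragment required by each cited result (all composition axioms, and with them transitivity, removed; the fresh predicates of $\Omega_{\Ont}$ disjoint from the signature of $\Ont$ and of any admissible $\Dat$), and that ``model conservative extension'' is invoked in the two-directional form that lets it later be chained with \eqref{eq:Omega} and \eqref{eq:DD}. Once these alignments are in place, the lemma is immediate from the cited work.
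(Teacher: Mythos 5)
Your approach coincides with the paper's: the lemma is not proved in the paper at all, but is stated explicitly as a summary of the results of \citeA{hms07reasoning} and \citeA{grauIJCAI13}, so ``check that $\Omega_{\Ont}$, $\mathcal{R}_{\Ont}$ and $\mathsf{DD}(\Omega_{\Ont})$ instantiate the cited settings and quote the theorems'' is exactly what is intended, and your attribution of Claim~1 and \eqref{eq:Omega} to the transitivity/composition-elimination of \citeA{grauIJCAI13} and of Claim~2 and \eqref{eq:DD} to the KAON2 translation of \citeA{hms07reasoning} is the right bookkeeping. One gloss in your write-up should be corrected, though it does not affect the result: model conservativity here is only the expansion direction (every model of $\Ont$ expands, over the same domain and without changing the interpretation of $\Ont$'s symbols, to a model of $\Omega_{\Ont}$). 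Your added converse, that every model of $\Omega_{\Ont}$ restricts to a model of $\Ont$, is false in general, since $\Omega_{\Ont}$ no longer contains the form-5 axioms and its models need not satisfy them; this is precisely why $\mathcal{R}_{\Ont}$ must be reinstated and why Theorem~\ref{thm:reduction-DD} is restricted to facts over individuals in $\Dat$. Likewise, the two-directional form is not what gets ``chained'' later: the proof of Theorem~\ref{thm:reduction-DD} uses only the expansion direction of Claim~1 together with $\Omega_{\Ont}\models\mathsf{DD}(\Omega_{\Ont})$ for one direction, and \eqref{eq:Omega} plus \eqref{eq:DD} for the other.
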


Then, the following theorem states that the program
obtained from $\Omega_{\Ont}$ by applying the algorithm in
\cite{hms07reasoning} and then adding the rules in
$\mathcal{R}_{\Ont}$ entails the same facts as $\Ont$ w.r.t.\ all
datasets.

\begin{restatable}{theorem}{reductionDD} \label{thm:reduction-DD}
   $\Ont \cup \Dat \models \alpha \text{ iff }
   \mathsf{DD}(\Omega_{\Ont}) \cup \mathcal{R}_{\Ont} \cup \Dat \models
   \alpha$, for every dataset $\Dat$ and fact $\alpha$ about
  individuals in $\Dat$.
\end{restatable}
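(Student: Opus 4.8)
The statement to prove is Theorem~\ref{thm:reduction-DD}, namely that $\Ont \cup \Dat \models \alpha$ iff $\mathsf{DD}(\Omega_{\Ont}) \cup \mathcal{R}_{\Ont} \cup \Dat \models \alpha$ for all datasets $\Dat$ and facts $\alpha$ about individuals in $\Dat$. The plan is to chain together the two equivalences in Lemma~\ref{lem:properties-previous}(3) and absorb the auxiliary rule set $\mathcal{R}_{\Ont}$ into the ``evaluation'' operation. The starting observation is that equation~\eqref{eq:Omega} rewrites the left-hand side as $\Omega_{\Ont} \cup \eval{\mathcal{R}_{\Ont}}{\Dat} \models \alpha$, and then equation~\eqref{eq:DD} --- applied with the dataset $\eval{\mathcal{R}_{\Ont}}{\Dat}$ in place of $\Dat$ --- turns this into $\mathsf{DD}(\Omega_{\Ont}) \cup \eval{\mathcal{R}_{\Ont}}{\Dat} \models \alpha$. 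So it remains to show the ``stability'' identity
\[
  \mathsf{DD}(\Omega_{\Ont}) \cup \eval{\mathcal{R}_{\Ont}}{\Dat} \models \alpha
  \quad\text{iff}\quad
  \mathsf{DD}(\Omega_{\Ont}) \cup \mathcal{R}_{\Ont} \cup \Dat \models \alpha.
\]

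**Key steps.** First I would check that $\mathcal{R}_{\Ont}$, being a set of axioms of types 4, 5 and 8 in Table~\ref{tab:RL}, translates to \emph{datalog} rules whose only predicates are binary role predicates $R, S, T, \dots$, and that none of these role predicates occurs in the head of any rule of $\mathsf{DD}(\Omega_{\Ont})$ --- this is because $\Omega_{\Ont}$ was obtained from $\Ont$ precisely by removing role-composition axioms and the algorithm of~\cite{hms07reasoning} does not introduce new role-subsumption consequences of this shape (this closure property is exactly what~\cite{grauIJCAI13} exploits; I would cite Lemma~\ref{lem:properties-previous} and the source for the precise statement). Consequently $\mathcal{R}_{\Ont}$ is ``stratified below'' $\mathsf{DD}(\Omega_{\Ont})$: the role facts derivable from $\mathcal{R}_{\Ont} \cup \Dat$ alone coincide with $\eval{\mathcal{R}_{\Ont}}{\Dat}$ restricted to role predicates, and feeding them as an EDB dataset into $\mathsf{DD}(\Omega_{\Ont})$ cannot trigger any further $\mathcal{R}_{\Ont}$-derivations. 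Formally I would argue: (i) soundness is immediate since $\mathsf{DD}(\Omega_{\Ont}) \cup \mathcal{R}_{\Ont} \cup \Dat \models \eval{\mathcal{R}_{\Ont}}{\Dat}$, giving the direction from left to right; (ii) for the converse, take any (hyperresolution) derivation of $\alpha$ from $\mathsf{DD}(\Omega_{\Ont}) \cup \mathcal{R}_{\Ont} \cup \Dat$ and show every leaf labelled by a fact that is not in $\Dat$ but produced by a rule of $\mathcal{R}_{\Ont}$ lies in $\eval{\mathcal{R}_{\Ont}}{\Dat}$, because $\mathcal{R}_{\Ont}$-rules have role atoms in both body and head and $\mathsf{DD}(\Omega_{\Ont})$ never derives a role atom, so all ancestors of such a fact are themselves $\mathcal{R}_{\Ont}$-derivations from $\Dat$; replacing those subderivations with membership in $\eval{\mathcal{R}_{\Ont}}{\Dat}$ yields a derivation from $\mathsf{DD}(\Omega_{\Ont}) \cup \eval{\mathcal{R}_{\Ont}}{\Dat}$. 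This gives the right-to-left direction and completes the stability identity.

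**Assembling.** Combining the stability identity with the two equivalences of Lemma~\ref{lem:properties-previous}(3), we get
\[
  \Ont \cup \Dat \models \alpha
  \;\overset{\eqref{eq:Omega}}{\iff}\;
  \Omega_{\Ont} \cup \eval{\mathcal{R}_{\Ont}}{\Dat} \models \alpha
  \;\overset{\eqref{eq:DD}}{\iff}\;
  \mathsf{DD}(\Omega_{\Ont}) \cup \eval{\mathcal{R}_{\Ont}}{\Dat} \models \alpha
  \;\iff\;
  \mathsf{DD}(\Omega_{\Ont}) \cup \mathcal{R}_{\Ont} \cup \Dat \models \alpha,
\]
which is the claim. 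A small point to verify along the way is that $\alpha$ ``about individuals in $\Dat$'' is consistent with the statements of \eqref{eq:Omega} and \eqref{eq:DD}, and that the model-conservativity clause (1) of Lemma~\ref{lem:properties-previous} is only needed insofar as it underwrites \eqref{eq:Omega} --- so I would not need to re-prove it.

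**Main obstacle.** The routine chaining is not the difficulty; the crux is the stability identity, and specifically justifying that role predicates appearing in $\mathcal{R}_{\Ont}$ never occur in head position in $\mathsf{DD}(\Omega_{\Ont})$ (equivalently, that the transitivity-elimination of~\cite{hms07reasoning,grauIJCAI13} leaves the role hierarchy ``external''). If that separation holds, the derivation-surgery argument is standard; if it required care (e.g.\ the datalog program of~\cite{hms07reasoning} does emit some role atoms via auxiliary predicates), one would instead need to invoke the precise interaction lemma from~\cite{grauIJCAI13} rather than reprove it. I expect the cleanest writeup simply cites that separation property as part of the ``relevant results'' packaged in Lemma~\ref{lem:properties-previous} and then runs the two-line derivation-rewriting argument for stability.
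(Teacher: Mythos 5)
Your forward direction coincides with the paper's: chain \eqref{eq:Omega} and \eqref{eq:DD} (the latter applied to the dataset $\eval{\mathcal{R}_{\Ont}}{\Dat}$) and conclude via $\mathsf{DD}(\Omega_{\Ont})\cup\mathcal{R}_{\Ont}\cup\Dat\models\eval{\mathcal{R}_{\Ont}}{\Dat}$. The problem is the converse. You reduce it to the right-to-left half of your ``stability identity'' and propose to prove that half by derivation surgery, resting on the claim that no role predicate of $\mathcal{R}_{\Ont}$ occurs in the head of any rule of $\mathsf{DD}(\Omega_{\Ont})$. That claim is false in general: $\Omega_{\Ont}$ is obtained from $\Ont$ by removing only the type-5 (composition) axioms, so it still contains the role-hierarchy and inverse-role axioms (types 4 and 8), and the algorithm of \cite{hms07reasoning} turns these into rules of $\mathsf{DD}(\Omega_{\Ont})$ with role atoms in their heads; the equality congruence rules, which every program is assumed to contain, can likewise derive role atoms. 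Hence a derivation from $\mathsf{DD}(\Omega_{\Ont})\cup\mathcal{R}_{\Ont}\cup\Dat$ may genuinely interleave the two rule sets (a role atom produced by a hierarchy rule of $\mathsf{DD}(\Omega_{\Ont})$ can feed a transitivity rule of $\mathcal{R}_{\Ont}$, and so on), so the ``all ancestors are $\mathcal{R}_{\Ont}$-derivations from $\Dat$'' step breaks down. Your fallback---invoking ``the precise interaction lemma'' of \cite{grauIJCAI13}---is not available either: Lemma~\ref{lem:properties-previous} as packaged in the paper contains no such separation statement, so at the crucial point your proof is a citation to a result you have not established.

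The gap is avoidable, and this is exactly where the paper's proof takes a different route: its converse never mentions $\eval{\mathcal{R}_{\Ont}}{\Dat}$ and is purely semantic. Arguing by contraposition, if $\Ont\cup\Dat\not\models\alpha$, take a countermodel; by model conservativity (Lemma~\ref{lem:properties-previous}, item~1) it expands to a model of $\Omega_{\Ont}\cup\Dat$ still refuting $\alpha$; this model satisfies $\mathcal{R}_{\Ont}$ because $\mathcal{R}_{\Ont}\subseteq\Ont$, and it satisfies $\mathsf{DD}(\Omega_{\Ont})$ because $\Omega_{\Ont}\models\mathsf{DD}(\Omega_{\Ont})$ (item~2). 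Hence $\mathsf{DD}(\Omega_{\Ont})\cup\mathcal{R}_{\Ont}\cup\Dat\not\models\alpha$, with no stratification or proof-theoretic argument needed. Note that you explicitly set conservativity aside as ``only needed insofar as it underwrites \eqref{eq:Omega}''; it is precisely the ingredient (together with item~2) that makes the converse a two-line argument.
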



\begin{proof}
  Assume that $\Ont \cup \Dat \models \alpha$. By Lemma
  \ref{lem:properties-previous}, Condition \eqref{eq:Omega} we have
  $\Omega_{\Ont} \cup \eval{\mathcal{R}_{\Ont}}{\Dat} \models \alpha$.
  Since $\eval{\mathcal{R}_{\Ont}}{\Dat}$ is a dataset, by Lemma
  \ref{lem:properties-previous}, Condition \eqref{eq:DD} we also have
  $\mathsf{DD}(\Omega_{\Ont}) \cup \eval{\mathcal{R}_{\Ont}}{\Dat}
  \models \alpha$, which then implies $\mathsf{DD}(\Omega_{\Ont}) \cup
  \mathcal{R}_{\Ont} \cup \Dat \models \alpha$, as required.

  Assume that $\Ont \cup \Dat \not\models \alpha$. Since, by Lemma
  \ref{lem:properties-previous}, $\Omega_{\Ont}$ is a conservative
  extension of $\Ont$ and $\mathcal{R}_{\Ont} \subseteq \Ont$ we have
  that $\Omega_{\Ont} \cup \mathcal{R}_{\Ont} \cup\Dat\not\models
  \alpha$.  Again, by Lemma \ref{lem:properties-previous}, we have that
  $\Omega_{\Ont} \models \mathsf{DD}(\Omega_{\Ont})$ and hence
  $\mathsf{DD}(\Omega_{\Ont}) \cup \mathcal{R}_{\Ont} \cup \Dat
  \not\models \alpha$.
\end{proof}

We define a program $\P$ to be a \emph{rewriting of an ontology}
$\Ont$ if $\P$ is a rewriting of $\mathsf{DD}(\Omega_{\Ont}) \cup
\mathcal{R}_{\Ont}$.
%
By Theorems \ref{thm:reduction-DD} and \ref{th:rewrite-procedure}, we
then obtain the following.
\begin{theorem}\label{thm:reduction-main}
  Let\/ $\Ont$ be an ontology. If\/ $\mathsf{Rewrite}$ terminates on\/
  $\mathsf{DD}(\Omega_{\Ont}) \cup \mathcal{R}_{\Ont}$ with a datalog
  program\/ $\P$, then\/ $\P$ is a rewriting of\/ $\Ont$.
\end{theorem}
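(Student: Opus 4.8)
The plan is to derive the statement as a short corollary of the correctness of $\mathsf{Rewrite}$ together with the definition of a \emph{rewriting of an ontology}, using Theorem~\ref{thm:reduction-DD} only to make the semantic content explicit. Set $\DDP_0 := \mathsf{DD}(\Omega_{\Ont}) \cup \mathcal{R}_{\Ont}$. First I would check that $\DDP_0$ is a disjunctive program in the sense of the Preliminaries: $\mathsf{DD}(\Omega_{\Ont})$ is the disjunctive datalog program output by the algorithm of~\citeA{hms07reasoning}, and $\mathcal{R}_{\Ont}$ is a set of role axioms (types~4, 5, 8 of Table~\ref{tab:RL}), each translating to a datalog rule; hence $\DDP_0$ is a disjunctive program and a legitimate input to $\mathsf{Rewrite}$.

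Second, suppose $\mathsf{Rewrite}$ terminates on $\DDP_0$ with a datalog program $\P$. Then Theorem~\ref{th:rewrite-procedure} applies verbatim with $\DDP := \DDP_0$ and yields that $\P$ is a rewriting of $\DDP_0$: there is an injective predicate renaming $\theta$ such that $(\eval{\DDP_0}\Dat|_S)\theta = \eval{\P}\Dat|_{S\theta}$ for every dataset $\Dat$ over the signature of $\DDP_0$, with $S$ the set of all predicates of $\DDP_0$. By the definition of a rewriting of an ontology given immediately before the theorem, this is by fiat the assertion that $\P$ is a rewriting of $\Ont$, which completes the argument. To exhibit the semantic reading, I would additionally note that the signature of $\Ont$ (the ABox predicates) is contained in the signature of $\DDP_0$, so the equality above holds in particular for every dataset over the signature of $\Ont$; chaining it with Theorem~\ref{thm:reduction-DD} gives, for every such $\Dat$ and every fact $\alpha$ about individuals in $\Dat$, that $\Ont \cup \Dat \models \alpha$ iff $\DDP_0 \cup \Dat \models \alpha$ iff $\P \cup \Dat \models \alpha\theta$.

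Since the substantive work has already been carried out in Theorems~\ref{thm:reduction-DD} and~\ref{th:rewrite-procedure}, I expect no genuine obstacle here. The only points requiring a line of care are the bookkeeping observation that ``$\mathsf{Rewrite}$ terminates on $\DDP_0$ with output $\P$'' is literally the hypothesis of Theorem~\ref{th:rewrite-procedure}, and the signature-inclusion remark needed to pass from datasets over the signature of $\DDP_0$ to ABoxes of $\Ont$.
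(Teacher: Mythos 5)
Your proposal is correct and matches the paper's argument: the paper's proof is simply ``Immediate by Theorems~\ref{thm:reduction-DD} and~\ref{th:rewrite-procedure},'' i.e., apply Theorem~\ref{th:rewrite-procedure} to $\mathsf{DD}(\Omega_{\Ont}) \cup \mathcal{R}_{\Ont}$ and invoke the definition of a rewriting of an ontology, with Theorem~\ref{thm:reduction-DD} supplying the link back to entailment from $\Ont$. Your extra remarks on signatures and the explicit semantic chaining are just careful bookkeeping on top of the same route.
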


\begin{proof}
  Immediate by Theorems \ref{thm:reduction-DD} and \ref{th:rewrite-procedure}.
\end{proof}
%

\end{document}